\documentclass{article}

\usepackage{microtype}
\usepackage{graphicx}
\usepackage{subfigure}
\usepackage{booktabs} 
\usepackage{lipsum}

\usepackage{amsmath}
\usepackage{amssymb}
\usepackage{mathtools}
\usepackage{amsthm}
\usepackage{subfiles}
\usepackage{dsfont}
\usepackage{style2026/algorithm}
\usepackage{style2026/algorithmic}
\usepackage{bbm}
\usepackage{enumitem}

\usepackage{hyperref}

\DeclareMathOperator{\Tr}{Tr} 
\DeclareMathOperator{\arsinh}{arsinh}
\DeclareMathOperator{\Var}{Var}
\DeclareMathOperator{\sgn}{sgn}
\DeclareMathOperator{\rank}{rank}

\usepackage[accepted]{style2026/icml2026}

\newcommand{\norm}[1]{\left\lVert #1 \right\rVert}

\allowdisplaybreaks

\usepackage[capitalize,noabbrev]{cleveref}

\newtheoremstyle{assumpstyle}
  {10pt}
  {6pt}
  {\itshape}
  {}
  {\bfseries}
  {\newline\par}
  {6pt}
  {}

\theoremstyle{assumpstyle}
\newtheorem{theorem}{Theorem}[section]

\newtheorem{lemma}[theorem]{Lemma}

\newtheorem{remark}[theorem]{Remark}
\newtheorem{assumption}[theorem]{Assumption}

\usepackage[textsize=tiny]{todonotes}

\icmltitlerunning{Rank-Accuracy Trade-off for LoRA: A Gradient-Flow Analysis}

\begin{document}

\twocolumn[
\icmltitle{Rank-Accuracy Trade-off for LoRA: A Gradient-Flow Analysis}



\icmlsetsymbol{equal}{*}

\begin{icmlauthorlist}
\icmlauthor{Michael Rushka}{yyy}
\icmlauthor{Diego Klabjan}{zzz}
\end{icmlauthorlist}

\icmlaffiliation{yyy}{Department of Engineering Sciences and Applied Mathematics, Northwestern University, Evanston, USA}
\icmlaffiliation{zzz}{Department of Industrial Engineering and Management Sciences, Northwestern University, Evanston, USA}

\icmlcorrespondingauthor{Michael Rushka}{michaelrushka2028@u.northwestern.edu}

\icmlkeywords{Machine Learning, ICML}
]



\printAffiliationsAndNotice{~} 

\begin{abstract}
Previous empirical studies have shown that LoRA achieves accuracy comparable to full-parameter methods on downstream fine-tuning tasks, even for rank-1 updates. By contrast, the theoretical underpinnings of the dependence of LoRA’s accuracy on update rank remain relatively unexplored. In this work, we compare the accuracy of rank-$r$ LoRA updates against full-parameter updates for fine-tuning tasks from a dynamical systems perspective. We perform gradient flow analysis in both full-rank and low-rank regimes to establish explicit relationships between rank and accuracy for two loss functions under LoRA. While gradient flow equations for LoRA are presented in prior work, we rigorously derive their form and show that they are identical for simultaneous and sequential LoRA parameter updates. We then use the resulting dynamical system equations to obtain closed-form relationships between LoRA rank and accuracy for trace-squared and Frobenius-norm low-rank approximation loss functions.
\end{abstract}

\section{Introduction}

Parameter-efficient finetuning (PEFT) has become the norm for training deep learning models on downstream tasks. Low-Rank Adaptation (LoRA) \cite{Hu2022LoRA} in particular has gained wide popularity among such methods. Consider $f: \mathbb{R}^{n \times m} \to \mathbb{R}$ which has been optimized by parameters $W_0 \in \mathbb{R}^{n \times m}$. Keeping $W_0$ frozen, finetuning consists of finding an update $\Delta W \in \mathbb{R}^{n \times m}$ which approximates $ \min\limits_{\Delta W}f(W_0 + \Delta W)$ for smaller downstream datasets introduced post-training. While full-parameter finetuning (FPFT) methods search all of $\mathbb{R}^{n \times m}$ for optimal $\Delta W$, LoRA splits the classical gradient descent (GD) algorithm into separate updates which solve $\min_{A,B}\limits f(W_0+BA)$, where $B \in \mathbb{R}^{n \times r}$ and $A \in \mathbb{R}^{r \times m}$ for $r << \min(n,m)$. By lowering the number of finetuning parameters from $nm$ to $r(n+m)$, LoRA enables parameter efficiency crucial for models with hundreds of billions of trainable parameters.

While several empirical studies have verified the effectiveness of LoRA for finetuning models in both natural language processing \cite{Li2024SecureCodeGen, Lazauskas2025EmpiricalLoRA, HanindhitoPatelJohn2025LoRA, Mao2025SurveyLoRA} and computer vision tasks \cite{Zanella2024LowRankFA, Agiza2024MTLoRA}, the breadth of theoretical literature on LoRA remains sparse. Existing work in this area focuses on convergence \cite{Malinovsky2024RACLoRA, mu2025convergence}, expressiveness bounds \cite{zeng2024the}, and initialization \cite{Hayou}. Moreover, several such studies \cite{Malladi2023KernelView, Jang2024} are conducted entirely within the neural-network setting, relying on specific architectural assumptions which do not directly generalize to the loss functions studied in our analysis. In particular, we know of only one prior 
work~\cite{xu2025understanding} which studies the learning dynamics of LoRA using gradient flow (GF), an area we seek to build upon. While prior studies \cite{xu2025understanding, zeng2024the} have derived theoretical \emph{bounds} on the loss and approximation error attainable by LoRA for certain \emph{rank thresholds}, to our knowledge none have studied explicit, closed-form relationships between either loss or approximation error and rank parameter $r$. By \emph{approximation error}, we mean the Frobenius-norm discrepancy between the full-rank fine-tuning update and its rank-$r$ approximation. We establish here these explicit rank-accuracy tradeoffs for LoRA via a gradient-flow analysis.

\subsection{Related Work}

\subsubsection{Expressive Power of LoRA}
Previous work by Zeng and Lee~\cite{zeng2024the} shows that, in the context of Fully Connected Neural Networks (FNNs) and Transformer Networks (TFNs), there exists a low-rank update which perfectly adapts a prefrozen network to a smaller target network. Let $W_0 \in \mathbb{R}^{n \times m}$ represent a prefrozen model and $\bar{W} \in \mathbb{R}^{n \times m}$ a smaller target model. Zeng and Lee demonstrate, under mild assumptions on network architecture, that \emph{there exists} a rank-$R$ \big(where $R < \min(n,m)$\big) adapter $\Delta W$ such that $W_0 + \Delta W$ exactly represents the target model, i.e. $W_0 + \Delta W = \bar{W}$. They note that $R$ must exceed an explicit threshold which they derive. For FNNs where $\rank(\Delta W)$ is lower than the required threshold for perfect adaptation, they provide an upper bound on the resulting approximation error. However, they only demonstrate a rank-independent error bound for any $\rank(\Delta W)$ below the exact-adaptation threshold without analyzing any particular finetuning algorithm. In contrast, our analysis operates outside any specific neural network architecture and provides a rank-dependent final loss and approximation error from an optimization perspective, demonstrating the accuracy achieved by LoRA-adapted GD for any given rank $r < \min(n,m)$.

\subsubsection{LoRA Gradient Flow}
To our knowledge, only Xu et al. have previously analyzed LoRA from a GF perspective~\cite{xu2025understanding}. While their work studies the convergence of LoRA within a matrix factorization (MF) setting, they focus on how convergence \emph{rates} and \emph{behavior} depend on  \emph{initialization schema}. In particular, Xu et al. consider fine-tuning a perfectly factorized matrix $Y_{\text{pre}} = W_2W_1$ toward an updated target matrix $Y_{\text{ft}}$ via low-rank updates $B_2A_2$ and $B_1A_1$. They find that the low-rank updates converge to the target matrix with arbitrary precision, i.e.
{\small
\begin{align}
    \lim_{t \to \infty} \frac{1}{2}\norm{Y_{\text{ft}} - \Big(W_2+B_2(t)A_2(t)\Big)\Big(W_1+B_1(t)A_1(t)\Big)}^2 = 0. \nonumber
\end{align}
}

By contrast, our work focuses on the distinct classical low-rank approximation problem. In particular, we analyze how well LoRA can approximate any given matrix $W_0 \in \mathbb{R}^{n \times m}$ \emph{as a function of rank}. In this sense, our work connects more closely to foundational MF work by Eckhart, Young, and Mirsky~\cite{EckartYoung1936, mirsky1960symmetric}, providing a continuous-time extension of their theoretical findings from the perspective of LoRA optimization dynamics. 

Finally, while Xu et al. state the gradient-flow equations governing LoRA, we provide their rigorous derivation and show that these equations are invariant to whether LoRA updates are performed sequentially or simultaneously, resolving an ambiguity between theoretical formulations and common implementation practice.

\subsubsection{Other Related Theoretical Work}
Further theoretical work on LoRA focuses on convergence rates~\cite{Malinovsky2024RACLoRA, mu2025convergence}, initialization~\cite{Hayou}, and learning-rate selection~\cite{HayouLearningRate}. Related work on MF~\cite{ward2023convergence} studies convergence behavior of factorized optimization problems but does not consider low-rank approximation. Other studies propose spectral initialization methods for LoRA~\cite{Balazy2024LoRAXS, Meng2024, wang-etal-2025-milora, Lin2025denselora} which we adopt but do not novelly contribute to in our analysis.

\subsection{Contributions}
We build on prior theoretical studies of LoRA by applying GF to analyze the loss and approximation error attained by LoRA as a \emph{function of rank}. Our contributions are:

\begin{enumerate}

\item While the ODEs governing LoRA are presented in previous work \cite{xu2025understanding}, we provide the first (to our knowledge) rigorous derivation of these GF dynamics as the continuous-time limit of deterministic gradient descent with fixed step size under a low-rank parameterization. In particular, for updates of the form $BA$, we show that the resulting GF equations are identical under simultaneous updates of $A$ and $B$ (using data from the previous iterate) and sequential updates (with $A$ updated after $B$ using data from the current iterate).

\item We use the resulting GF equations to analyze the learning dynamics for the trace-squared objective under LoRA:
\begin{align}
    \underset{\substack{B \in \mathbb{R}^{n \times r} \\ A \in \mathbb{R}^{r \times n}}}{\min} \, \frac{1}{2}\Tr^2(W_0-BA). \label{eq tr^2 intro}
\end{align}
where $r < \min(n,m)$. The trace-squared loss acts as a smooth spectral regularizer closely related to nuclear-norm minimization, arising in applications such as kernel alignment \cite{cristianini2006kerneltarget} and covariance shrinkage \cite{lancewicki2019kernel}. We show that LoRA GF dynamics admit a closed-form solution to \eqref{eq tr^2 intro} for near-arbitrary initial conditions. We solve for this solution and compute the resulting asymptotic loss and approximation error relative to the full-rank optimum as explicit functions of $r$.

\item We use GF to analyze the learning dynamics of the low-rank approximation problem under LoRA, namely
\begin{align}
    \underset{\substack{B \in \mathbb{R}^{n \times r} \\ A \in \mathbb{R}^{r \times m}}}{\min} \, \frac{1}{2}\norm{W_0-BA}^2. \label{eq SSE intro}
\end{align}
where $r < \min(n,m)$. Low-rank approximation is ubiquitous in applied mathematics and ML, with applications in areas such as recommender systems~\cite{8844978} and image processing~\cite{10.1145/3360488}. We show that LoRA GF dynamics for \eqref{eq SSE intro} admit a closed-form solution under a spectral initialization scheme described in previous works~\cite{Meng2024,Balazy2024LoRAXS,xu2025understanding, wang-etal-2025-milora, Lin2025denselora}. Using spectral initialization, we show that the nonzero singular values of the rank-$r$ finetuning matrix $BA$ converge to the top $r$ singular values of $W_0$, demonstrating that LoRA achieves the optimal low-rank solution characterized by the Eckart–Young–Mirsky (EYM) theorem \cite{EckartYoung1936}.
    
\end{enumerate}

\subsection{Notation}
Throughout this paper, capital letters denote matrices, and lowercase letters denote scalars. Boldface letters denote vectors. For a matrix $A$, we write $A^T$ for its transpose and $a_{ij}$ for its $(i,j)$-th entry. We use $\norm{A}$ to denote the Frobenius norm of $A$ and $\norm{\mathbf{a}}_2$ to denote the Euclidean norm of a vector $\mathbf{a}$. We use $\lfloor \cdot \rfloor$ to denote the integer floor function. We denote $\Tr^2(W) := \Big(\Tr(W)\Big)^2$.

For a time-dependent function $f(t)$ we write \\ $f'(t) := \frac{d}{dt} f(t)$ for its time derivative. Uppercase $A(t)$ refers to a matrix-valued function of time. When emphasizing time dependence, we write $a_{ij}(t)$ for the $(i,j)$-th entry of $A(t)$, with $a_{ij}(0)$ denoting its value at initialization.

We refer to the set of all natural numbers using $\mathbb{N}$ and to the set of non-negative real numbers using $\mathbb{R}_0^+$.

Let $B \in \mathbb{R}^{n \times r}$. For a scalar-valued function
$g(B,A)$ of \emph{two matrices}, we define the partial gradient
$\nabla_B g(B,A) := \frac{\partial g}{\partial B}$ to be the matrix in
$\mathbb{R}^{n \times r}$ whose $(i,j)$-th entry is
$\frac{\partial g}{\partial b_{ij}}$.

Similarly, let $W \in \mathbb{R}^{n \times m}$. For a scalar-valued
function $f(W)$ of a \emph{single matrix}, we define the gradient
${\nabla f(W) := \frac{\partial f}{\partial W}}$ to be the matrix in
$\mathbb{R}^{n \times m}$ whose $(i,j)$-th entry is
$\frac{\partial f}{\partial w_{ij}}$.

The $n \times n$ identity matrix is denoted by $I_n$. The $n \times m$ matrix of all zeroes is denoted by $\mathbf{0}_{n \times m}$.

\section{Results}

\subsection{LoRA Gradient Flow Equations} \label{sec LoRA GF main body}
Denote the Cartesian product space $\Theta := \mathbb{R}^{n \times r} \times \mathbb{R}^{r \times m}$. Consider a function $g: \Theta \to \mathbb{R}$ which we seek to minimize over $\Theta$:
\begin{align}
    \underset{\substack{B \in \mathbb{R}^{n \times r} \\ A \in \mathbb{R}^{r \times m}}}{\min} \, g\left( B , A \right) \label{eq general optimization main body}
\end{align}
In the context of LoRA, we will have ${g(B,A) = f(W_0 + BA)}$ for $f: \mathbb{R}^{n \times m} \to \mathbb{R}$, where $W_0 \in \mathbb{R}^{n \times m}$ gives the weights from a pre-trained task. We solve \eqref{eq general optimization main body} via LoRA-adapted GD given in Algorithm~\ref{alg: lora-gradient-descent} of Appendix~\ref{app LoRA GF Proof}. Algorithm~\ref{alg: lora-gradient-descent} applies alternating updates to $A$ and $B$ and is agnostic to how these updates are performed. For $\lambda = 1$, the updates of $A$ and $B$ are performed simultaneously, with both parameters evaluated using data from the previous iteration. In contrast, $\lambda = 0$ corresponds to sequential updates, where $B$ is first updated using its prior iterate, after which $A$ is updated using the newly computed value of $B$ from the current iteration. For $0 < \lambda < 1$, the update of $A$ uses a convex combination of the values of $B$ from the current and previous iterations. Algorithm~\ref{alg: lora-gradient-descent} further generalizes the update scheme by allowing $A$ and $B$ to be updated a total of $k$ times within each iteration. While $\lambda = 1$ is most efficient in practice for parallelism, we allow arbitrary $\lambda \in [0,1]$ and $k \in \mathbb{N}$ to show that the resulting GF equations for LoRA are invariant to the choice of update scheme.

The update equations at any given iteration $i$ of Algorithm~\ref{alg: lora-gradient-descent} are given by
{\footnotesize
\begin{align}
     B_{ik+j+1}^{( \alpha )} &= B_{ik+j}^{( \alpha )} - \alpha \nabla_B g \left(B_{ik+j}^{(\alpha)},A_{ik}^{(\alpha)} \right) \\
    A_{ik+j+1}^{( \alpha )} &= A_{ik+j}^{( \alpha )} - \alpha \nabla_A g \left(\left[ \lambda B_{ik}^{(\alpha)} + (1-\lambda)B_{(i+1)k}^{(\alpha)} \right],A_{ik+j}^{(\alpha)} \right).
\end{align}
}

The discrete sets of updates for $A$ and $B$ motivate us to define their continuous affine interpolations:
{\footnotesize
\begin{align}
    Y_{\alpha}(t) &= B_{\lfloor \frac{t}{\alpha} \rfloor}^{(\alpha)} - \left( t - \left\lfloor \frac{t}{\alpha} \right\rfloor \alpha \right) \nabla_B g\left( B_{\lfloor \frac{t}{\alpha} \rfloor}^{(\alpha)}, A_{h(t)}^{(\alpha)} \right) \label{eq Y interpolation main body} \\
    X_{\alpha}(t) &= A_{\lfloor \frac{t}{\alpha} \rfloor}^{(\alpha)} - \left( t - \left\lfloor \frac{t}{\alpha} \right\rfloor \alpha \right) \nabla_A g\left( \left[ \lambda B_{h(t)}^{(\alpha)} + (1-\lambda)B_{h(t)+k}^{(\alpha)} \right], A_{\lfloor \frac{t}{\alpha} \rfloor}^{(\alpha)} \right), \label{eq X interpolation main body}
\end{align}
}
where
\begin{align}
    h(t) &= \left\lfloor \frac{t}{\alpha} \right\rfloor - \left( \left\lfloor \frac{t}{\alpha} \right\rfloor \mod k \right).
\end{align}
We proceed by defining the candidate GF dynamics \emph{a priori}. 
Specifically, let $X : \mathbb{R}_0^+ \to \mathbb{R}^{r \times m}$ and 
$Y : \mathbb{R}_0^+ \to \mathbb{R}^{n \times r}$ be matrix-valued functions 
satisfying the initial-value problem (IVP)
\begin{align}
    \frac{dY(t)}{dt} &= -\nabla_Y g\Big(Y(t),X(t) \Big) \label{eq dY/dt = -f main body}  \\
    \frac{dX(t)}{dt} &= -\nabla_X g\Big(Y(t),X(t) \Big) \label{eq dX/dt = -f main body} \\
    Y(0) &= B_0 \\
    X(0) &= A_0.
\end{align}
We justify the existence of solutions $Y(t)$ and $X(t)$ under the following assumptions:
\begin{assumption}[Uniform Boundedness of Iterates] \label{ass uniform boundedness of iterates main body}
    We assume that all iterates $\norm{B_{ik+j}^{(\alpha)}}$ and $\norm{A_{ik+j}^{(\alpha)}}$ remain uniformly bounded during finetuning.
\end{assumption}
\begin{assumption}[Uniform Boundedness of Gradient]
We assume that the gradient $\norm{\nabla g(B,A)}$ is uniformly bounded on bounded subsets of $\Theta$. 
That is, for every bounded set $S \subset \Theta$, there exists a constant $M_S < \infty$ such that
\[
\|\nabla g(B,A)\| \le M_S \quad \text{for all } (B,A) \in S.
\]
Definitions of norms and gradients on $\Theta$ are provided in Remark~\ref{rem product norm} of Appendix~\ref{app LoRA GF Proof}.
\end{assumption}

In subsequent portions of our derivation, we assume that $g$ is Lipschitz smooth:
\begin{assumption}[Lipschitz Smoothness of $g$] \label{ass lipschitz smoothness main body}
The function $g$ is Lipschitz smooth on bounded subsets of $\Theta$. That is, for any
$(B_1,A_1), (B_2,A_2) \in \Theta$ satisfying
\begin{align}
    \max\!\Big( \norm{B_1}, \norm{A_1}, \norm{B_2}, \norm{A_2} \Big) \le R,
\end{align}
there exists a constant $L_R > 0$ such that
{\small
\begin{align}
    \norm{\nabla g(B_2,A_2) - \nabla g(B_1,A_1)}
    \le L_R \norm{(B_2,A_2) - (B_1,A_1)}.
\end{align}
}
\end{assumption}
Under these assumptions, we show that
\begin{align}
\lim_{\alpha \to 0} ||Y_\alpha(t) - Y(t)|| &= 0 \label{eq Y convergence main body} \\
\lim_{\alpha \to 0} ||X_\alpha(t) - X(t)|| &= 0 \label{eq X convergence main body}
\end{align}
for all $t \geq 0$. The uniform convergence in
(\ref{eq Y convergence main body}--\ref{eq X convergence main body}) implies that the
continuous affine interpolations defined in
(\ref{eq Y interpolation main body}--\ref{eq X interpolation main body}), and thus the
underlying discrete iterates themselves, evolve according to the
dynamical system in
(\ref{eq dY/dt = -f main body}--\ref{eq dX/dt = -f main body}) in the limit where
$\alpha \to 0$. This brings us to our final result:

\begin{theorem} \label{thm LoRA GF equations main body}
Consider an objective function $g: \Theta \to \mathbb{R}$ satisfying Assumptions~\ref{ass uniform boundedness of iterates main body}--\ref{ass lipschitz smoothness main body} which is minimized via Algorithm~\ref{alg: lora-gradient-descent}. During finetuning, the iterates produced by Algorithm~\ref{alg: lora-gradient-descent}
evolve according to the dynamical system in
(\ref{eq dY/dt = -f main body}--\ref{eq dX/dt = -f main body}) for arbitrary
$\lambda \in [0,1]$ and $k \in \mathbb{N}$.
\end{theorem}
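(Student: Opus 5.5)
The plan is to prove the convergence statements (\ref{eq Y convergence main body}--\ref{eq X convergence main body}), uniformly on every compact interval $[0,T]$, by a discrete Grönwall argument. Fix $T>0$. By Assumption~\ref{ass uniform boundedness of iterates main body}, all iterates lie in a ball of radius $R$ independent of $\alpha$. On this ball the gradient is bounded by some $M$, and $g$ is $L_R$-smooth. Existence and uniqueness of $(Y,X)$ on $[0,T]$ will follow from Picard--Lindelöf, since $\nabla g$ is locally Lipschitz. To keep the ODE solution inside a slightly larger ball where the constants apply, I would compare the discrete scheme with the solution up to the first exit time and then argue that this time exceeds $T$ for small $\alpha$. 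Alternatively, one can use Arzelà--Ascoli on the interpolants and identify every limit point as the unique solution.

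\textbf{Step 1 (local consistency).} First I rewrite the discrete updates in the form
\begin{align}
B_{s+1} = B_s - \alpha \nabla_B g(B_s,A_s) + \alpha E^B_s, \qquad A_{s+1} = A_s - \alpha \nabla_A g(B_s,A_s) + \alpha E^A_s, \nonumber
\end{align}
where $s=ik+j$. The perturbations $E_s$ come from evaluating the gradient at lagged arguments. For $B$, the second argument is $A_{ik}$ rather than $A_s$. For $A$, the first argument is $\lambda B_{ik}+(1-\lambda)B_{(i+1)k}$ rather than $B_s$. Each lagged argument differs from the current one by at most $k$ steps, and each step moves the iterate by at most $\alpha M$. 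Lipschitz smoothness then gives
\begin{align}
\|E_s\|\le L_R\, k\alpha M \nonumber
\end{align}
uniformly in $\lambda\in[0,1]$ and $s$. This is the key point: every choice of $\lambda$ and $k$ yields an $O(\alpha)$ perturbation of the explicit Euler scheme for (\ref{eq dY/dt = -f main body}--\ref{eq dX/dt = -f main body}). The interpolants $Y_\alpha, X_\alpha$ in (\ref{eq Y interpolation main body}--\ref{eq X interpolation main body}) differ from the piecewise-constant iterates by at most $\alpha M$, so they are $M$-Lipschitz in $t$.

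\textbf{Step 2 (Grönwall).} Let $e_s=\|(B_s,A_s)-(Y(s\alpha),X(s\alpha))\|$. Integrating the ODE over $[s\alpha,(s+1)\alpha]$ and using the $M$-Lipschitz continuity of $(Y,X)$ together with the smoothness of $g$, the local truncation error is $O(L_R M\alpha^2)$. Combined with the Step~1 bound this yields
\begin{align}
e_{s+1}\le (1+L_R\alpha)e_s + C\alpha^2, \qquad C = L_R M (k+1), \nonumber
\end{align}
with $e_0=0$. Iterating gives
\begin{align}
e_s\le \frac{C\alpha}{L_R}\big(e^{L_R T}-1\big) \to 0 \nonumber
\end{align}
for all $s\alpha\le T$. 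Adding the $O(\alpha)$ interpolation error then gives uniform convergence of $Y_\alpha, X_\alpha$ to $Y, X$ on $[0,T]$. Since $T$ is arbitrary, the theorem follows.

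\textbf{Main obstacle.} I expect the main obstacle to be controlling the lag terms uniformly. The lagged index $h(t)+k$ refers to a \emph{future} $B$ iterate within the current block, so the update is implicit-looking. It is nevertheless explicitly computable, because all $B$ updates in a block are done first using the frozen $A_{ik}$. The bound must not depend on $\lambda$, and the iterates must stay in the region where the constants $L_R$ and $M$ hold. The block structure handles the first point, and Assumption~\ref{ass uniform boundedness of iterates main body} handles the second; for the ODE side, a continuation argument keeps $(Y,X)$ within distance $1$ of the iterates and hence in the ball of radius $R+1$.
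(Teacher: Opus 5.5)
Your proposal is correct and follows the paper's proof. The LoRA iterates are treated as an $O(\alpha)$-perturbed explicit Euler scheme, with the lag terms bounded uniformly in $\lambda$ by $k\alpha M$ via telescoping and Lipschitz smoothness; an $O(\alpha^2)$ local truncation error, a discrete Gr\"onwall recursion and an $O(\alpha)$ interpolation error then give convergence on $[0,T]$. Your exit-time continuation argument, which uses the $\alpha$-uniform bound on the iterates to keep $(Y,X)$ in a ball where $L$ and $M$ apply, is a worthwhile refinement: the paper instead asserts that the flow exists and stays bounded on $[0,T]$ merely because $\nabla g$ is bounded on bounded sets, and that alone does not rule out finite-time blow-up.
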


The full proof of Theorem~\ref{thm LoRA GF equations main body} is provided in
Appendix~\ref{app LoRA GF Proof}. We now present two problems to demonstrate use of this result to analyze how the accuracy of LoRA depends on update rank.

\subsection{Trace-Squared Objective} \label{sec tr^2 main body}
Consider the finetuning problem
\begin{align}
    \underset{\substack{W \in \mathbb{R}^{n \times n}}}{\min} \, \frac{1}{2}\Tr^2(W_0-W), \label{eq trace squared minimization problem full rank main body}
\end{align}
as well as its rank-$r$ constrained equivalent:
\begin{align}
    \underset{\substack{B \in \mathbb{R}^{n \times r} \\ A \in \mathbb{R}^{r \times n}}}{\min} \, \frac{1}{2}\Tr^2(W_0-BA), \label{eq trace squared minimization problem low rank main body}
\end{align}
where $r < n$. We seek to compare the final losses obtained by full-rank GD on
\eqref{eq trace squared minimization problem full rank main body} and by
LoRA-adapted GD on
\eqref{eq trace squared minimization problem low rank main body} as a function of
$r$. Additionally, we quantify the rank-dependent approximation error between the
respective minimizers of
\eqref{eq trace squared minimization problem full rank main body} and
\eqref{eq trace squared minimization problem low rank main body}.

From the GF equation for full-rank GD, we know that the learning dynamics of \eqref{eq trace squared minimization problem full rank main body} during finetuning are governed by
{\small
\begin{align}
    \frac{dU(t)}{dt} &= -\nabla \left(\frac{1}{2}\Tr^2\Big(W_0-U(t)\Big)\right)
    = \Tr\big(W_0-U(t)\big)\,I_n. \label{eq full rank trace squared ODE main body}
\end{align}
}

Here, $U : \mathbb{R}_0^+ \to \mathbb{R}^{n \times n}$ denotes the full-rank
iterate produced by full-rank GF applied to \eqref{eq trace squared minimization problem full rank main body} at time $t \ge 0$. We show that the unique closed-form solution to \eqref{eq full rank trace squared ODE main body} is given by
\begin{align}
    U(t) &= \frac{1-e^{-nt}}{n}\Tr \left( W_0-Y_0X_0 \right)I_n+Y_0X_0,
\end{align}
where we initialize $U(0) = Y_0X_0$. The full-rank iterates converge to the minimizer of \eqref{eq trace squared minimization problem full rank main body}:
\begin{align}
    \lim_{t\to\infty} U(t)
    = \frac{1}{n}\Tr\!\left( W_0 - Y_0X_0 \right)I_n+Y_0X_0.
\end{align}
For consistency in our comparison of the full-rank and low-rank minimizers,
we employ the same initialization $Y_0X_0$ in the LoRA setting. Following
LoRA initialization~\cite{Hu2022LoRA}, we initialize
$Y_0 = \mathbf{0}_{n \times r}$ and draw the entries of
$X_0 \in \mathbb{R}^{r \times n}$ i.i.d.\ from a centered Gaussian distribution:
\begin{align}
    x_{ij}(0) \sim \mathcal{N}\!\left(0,\sigma^2\right). \label{eq distribution main body}
\end{align}
where $\sigma^2 > 0$. With this initialization
scheme in mind, the full-rank iterates converge to
\begin{align}
    \lim_{t\to\infty} U(t)
    = \frac{1}{n}\Tr\!\left( W_0 \right) I_n, \label{eq final full rank minimizer for trace squared main body}
\end{align}
which yields zero final loss in
\eqref{eq trace squared minimization problem full rank main body}. Full details for our calculation of \eqref{eq final full rank minimizer for trace squared main body} are provided in Appendix~\ref{app full rank learning dynamics for trace squared loss}.

For the rank-$r$ problem in \eqref{eq trace squared minimization problem low rank main body}, we assume uniform boundedness of the iterates as stated in
Assumption~\ref{ass uniform boundedness of iterates main body}. Under this
assumption, we show that the gradient of the trace-squared objective in \eqref{eq trace squared minimization problem low rank main body} is uniformly bounded and Lipschitz continuous on bounded subsets of $\Theta$. Furthermore, we assume
\begin{assumption}[Nonzero Initialization for $\norm{X_0}$] \label{ass X_0 nonzero main body}
    We assume $\norm{X_0} \neq 0$, which holds almost surely for the Gaussian initialization in \eqref{eq distribution main body}.
\end{assumption}
\begin{assumption}[Nonzero Trace for Prefrozen Weights] \label{ass W_0 nonzero main body}
    We assume $\Tr(W_0) \neq 0$.
\end{assumption}
Observe the ODEs in (\ref{eq Tr^2 GF ODE main body Y}--\ref{eq Tr^2 GF ODE main body X}), which govern the learning dynamics of LoRA GD for \eqref{eq trace squared minimization problem low rank main body}. In the case where either of Assumptions~\ref{ass X_0 nonzero main body} or \ref{ass W_0 nonzero main body} is violated, the learning dynamics of \eqref{eq trace squared minimization problem low rank main body} are trivially given by
\begin{align}
    Y(t) &= \mathbf{0}_{n \times r} \\
    X(t) &= \mathbf{0}_{r \times n}
\end{align}
for all $t \geq 0$ due to stationary point initialization.

With all assumptions from Section~\ref{sec LoRA GF main body} satisfied, we know that the learning dynamics of \eqref{eq trace squared minimization problem low rank main body} during finetuning are governed by
{
\begin{align}
    \frac{dY(t)}{dt} &= -\nabla_Y \left( \frac{1}{2}\Tr^2\Big(W_0-Y(t)X(t)\Big) \right) \nonumber \\ &= \Tr(W_0-YX)X^T \label{eq Tr^2 GF ODE main body Y} \\
    \frac{dX(t)}{dt} &= -\nabla_X \left( \frac{1}{2}\Tr^2\Big(W_0-Y(t)X(t)\Big) \right) \nonumber \\ &= \Tr(W_0-YX)Y^T \label{eq Tr^2 GF ODE main body X}
\end{align}
}
Here, $Y: \mathbb{R}_0^+ \to \mathbb{R}^{n \times r}$ and $X: \mathbb{R}_0^+ \to \mathbb{R}^{r \times n}$ denote the low-rank iterates produced by LoRA-adapted GF applied to \eqref{eq trace squared minimization problem low rank main body} at time $t \geq 0$. We find the following closed-form solution to (\ref{eq Tr^2 GF ODE main body Y}--\ref{eq Tr^2 GF ODE main body X}):
\begin{align}
    Y(t) &= q(t)\Tr\!\left(W_0\right)X_0^T \label{eq closed form solution low rank tr^2 main body Y} \\
    X(t) &= p(t)X_0 \label{eq closed form solution low rank tr^2 main body X}
\end{align}
where $p(t)$ and $q(t)$ are smooth scalar functions on $[0,\infty)$ given in Appendix~\ref{app learning dynamics for trace squared loss}. Note that, while we draw the entries of $X_0$ from the distribution in \eqref{eq distribution main body}, the solution in (\ref{eq closed form solution low rank tr^2 main body Y}--\ref{eq closed form solution low rank tr^2 main body X}) holds for arbitrary $X_0 \in \mathbb{R}^{r \times n}$ with nonzero norm. 

Right multiply \eqref{eq closed form solution low rank tr^2 main body Y} by \eqref{eq closed form solution low rank tr^2 main body X} to find the rank-$r$ iterate produced by LoRA-adapted GF applied to \eqref{eq trace squared minimization problem low rank main body} at time $t \geq 0$:
\begin{align}
    Y(t)X(t) &= p(t)q(t)\Tr(W_0)X_0^TX_0.
\end{align}
We find that the rank-$r$ iterates converge to
\begin{align}
    \lim_{t \to \infty}Y(t)X(t) &= \lim_{t \to \infty} p(t)q(t)\Tr\left( W_0 \right)X_0^TX_0 \\
    &= \frac{\Tr(W_0)}{\norm{X_0}^2}X_0^TX_0, \label{eq final rank r minimizer main body}
\end{align}
which yields zero final loss in \eqref{eq trace squared minimization problem low rank main body}. Full details for our calculation of \eqref{eq final rank r minimizer main body} are provided in Appendix~\ref{app learning dynamics for trace squared loss}.

Our analysis thus shows that LoRA-adapted GF converges to a global minimizer of \eqref{eq trace squared minimization problem low rank main body} for arbitrary $r < n$, bringing us to our next result:
\begin{theorem}
Consider the trace-squared optimization problems in 
\eqref{eq trace squared minimization problem full rank main body} and
\eqref{eq trace squared minimization problem low rank main body}. For \eqref{eq trace squared minimization problem low rank main body}, assume
uniform boundedness of the iterates as in
Assumption~\ref{ass uniform boundedness of iterates main body} as well as the nonzero initialization in Assumptions~\ref{ass X_0 nonzero main body}--\ref{ass W_0 nonzero main body}.
Under GF, both the full-rank and LoRA rank-$r$ dynamics
converge to global minimizers attaining zero final loss. In particular, for all
$r < n$, the final losses obtained in the low-rank and full-rank cases are
zero.
\end{theorem}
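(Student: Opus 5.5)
The plan is to treat the full-rank and rank-$r$ cases separately, and in each case to solve the gradient-flow ODE in closed form and then pass to the limit.

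\textbf{Full-rank case.} Theorem~\ref{thm LoRA GF equations main body} (or classical GF) gives the dynamics \eqref{eq full rank trace squared ODE main body}. Taking the trace yields a scalar linear ODE for $s(t) := \Tr(W_0 - U(t))$:
\begin{align}
s'(t) = -n\,s(t), \qquad s(t) = e^{-nt}\,s(0).
\end{align}
Integrating $U' = s(t) I_n$ then recovers the stated $U(t)$, and uniqueness follows from Lipschitz continuity of the right-hand side. With $Y_0 = \mathbf{0}_{n\times r}$ we have $s(0) = \Tr(W_0)$, so $U(t) \to \frac{1}{n}\Tr(W_0) I_n$. The loss $\frac12 s(t)^2 = \frac12 e^{-2nt}\Tr^2(W_0)$ tends to $0$. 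Since the loss is nonnegative, zero is the global minimum.

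\textbf{Low-rank case.} First I would check the hypotheses of Theorem~\ref{thm LoRA GF equations main body}. Assumption~\ref{ass uniform boundedness of iterates main body} is assumed. The gradients $\Tr(W_0-BA)A^T$ and $\Tr(W_0-BA)B^T$ are polynomial in $(B,A)$, which makes them bounded and Lipschitz on bounded sets. Hence the iterates follow (\ref{eq Tr^2 GF ODE main body Y}--\ref{eq Tr^2 GF ODE main body X}).

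Next, I make the ansatz $X = p(t)X_0$ and $Y = q(t)\Tr(W_0)X_0^T$ with $p(0)=1$, $q(0)=0$. Writing $c := \norm{X_0}^2 > 0$ (Assumption~\ref{ass X_0 nonzero main body}) and $\tau := \Tr(W_0) \neq 0$ (Assumption~\ref{ass W_0 nonzero main body}), we get $\Tr(YX) = pq\tau c$. Substituting reduces the system to the planar ODE
\begin{align}
q' = (1 - cpq)\,p, \qquad p' = \tau^2 (1 - cpq)\,q.
\end{align}
By uniqueness of the IVP solution, this ansatz is the solution.

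The key step is to find a conserved quantity and a Lyapunov function for this planar system:
\begin{itemize}
\item The quantity $p^2 - \tau^2 q^2$ is conserved and equals $1$ at $t=0$. Hence $p^2 = 1 + \tau^2 q^2 \ge 1$, so $p$ never vanishes and stays positive.
\item Let $w := 1 - cpq$, which equals $\Tr(W_0 - YX)/\tau$. Then $w' = -c(p'q + pq') = -c\,w\,(\tau^2 q^2 + p^2)$.
\end{itemize}
Since $\tau^2 q^2 + p^2 \ge 1$, this gives $|w(t)| \le e^{-ct}$, so $w \to 0$ exponentially. Thus the loss $\frac12 \tau^2 w^2 \to 0$, which is the global minimum. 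In the same way, $pq \to 1/c$, which yields \eqref{eq final rank r minimizer main body}.

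The main obstacle is justifying the ansatz and controlling $p$ and $q$ for all time. The conservation law handles this: $w$ stays bounded in $[0,1]$ and $p \ge 1$ keeps the factor $\tau^2 q^2 + p^2$ away from zero. If an explicit formula is wanted, parameterize $p = \cosh\theta$ and $\tau q = \sinh\theta$. This turns the system into the scalar ODE
\begin{align}
\theta' = |\tau|\,\bigl(1 - \tfrac{c}{2|\tau|}\sinh 2\theta\bigr),
\end{align}
with sign conventions adjusted to $\sgn \tau$. This ODE is solvable in closed form, and $\theta$ converges monotonically to the root of $\sinh 2\theta = 2|\tau|/c$.

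The theorem follows for every $r < n$, since none of the steps above depended on $r$.
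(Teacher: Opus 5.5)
Your proposal is correct. The full-rank half is essentially the paper's argument. The rank-$r$ half takes a genuinely different route.

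The paper differentiates the first-order relation $a'=-a(\norm{X}^2+\norm{Y}^2)$ once more, for $a(t)=\Tr(W_0-YX)$. This gives a closed second-order nonlinear ODE for $a$, which the paper solves explicitly by reduction of order and an $\arsinh$-type integration. It then solves the resulting linear time-varying equation for $Y$ via the substitution $Y=U/\sqrt{s}$ and reads the limit off the explicit $p(t),q(t)$.

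You instead use that same first-order relation directly. You combine it with the conserved quantity $p^2-\tau^2q^2=1$, which is your ansatz's image of the balancedness invariant $XX^T-Y^TY$ of factorized gradient flow. With your $c=\norm{X_0}^2$, this gives $w(t)=\exp\bigl(-c\int_0^t(p^2+\tau^2q^2)\,ds\bigr)\in(0,e^{-ct}]$.

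What each route buys:
\begin{itemize}
\item Your route is shorter, and it is rigorous exactly where the paper is informal. It proves that $\Tr(W_0-YX)$ never vanishes. The paper needs this in order to divide by $a$, but only argues it heuristically in Remark~\ref{rem Tr^2 nonzero during training}.
\item The paper's route gives the explicit trajectory and the exact asymptotic rate $\sqrt{\norm{X_0}^4+4\Tr^2(W_0)}$. Your bound gives only the rate $\norm{X_0}^2$ without further work.
\item The paper's route also produces the scalar formulas $p_s,q_s$ that are reused in Appendix~\ref{app learning dynamics for SSE}.
\end{itemize}

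One step should be stated more carefully. Global existence of $(p,q)$ does not come from $p^2+\tau^2q^2\ge 1$; that inequality is what drives the decay. It comes from $w\in(0,1]$:
\begin{itemize}
\item $w\in(0,1]$ gives $0\le pq<1/c$.
\item Since $p\ge 1$, this gives $0\le q<1/c$.
\item Then $p^2=1+\tau^2q^2$ gives $p\le\sqrt{1+\tau^2/c^2}$.
\end{itemize}
So the planar solution is bounded on its maximal interval and is therefore global. That is what licenses the uniqueness argument identifying your ansatz with the actual gradient-flow trajectory.
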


In Appendix~\ref{app approximation error}, we show that the relative approximation error between $U(t)$ and $Y(t)X(t)$ converges to
\begin{align}
    \lim_{t \to \infty} \frac{\norm{Y(t)X(t)-U(t)}}{\norm{U(t)}}
    &= \frac{\sqrt{n\norm{X_0^TX_0}^2-\norm{X_0}^4}}{\norm{X_0}^2}, \label{eq trace squared lora convergence main body}
\end{align}
for arbitrary $X_0 \in \mathbb{R}^{r \times n}$ with nonzero norm. If we draw the independent entries of $X_0$ from \eqref{eq distribution main body}, we can calculate the expectation of the \emph{square} of our relative approximation error with respect to the initial conditons as a function of $n$ and $r$:
\begin{align}
    \mathbb{E}\left[\lim_{t \to \infty} \frac{\norm{Y(t)X(t)-U(t)}^2}{\norm{U(t)}^2}\right]
    &= \frac{n^2+n-2}{nr+2}. \label{eq expectation of square main body}
\end{align}
By Jensen's Inequality~\cite{dekking2005modern}, we obtain an upper bound on the expectation of the relative approximation error itself:
\begin{align}
    \mathbb{E}\left[\lim_{t \to \infty} \frac{\norm{Y(t)X(t)-U(t)}}{\norm{U(t)}}\right]
    \leq \sqrt{\frac{n^2+n-2}{nr+2}}, \label{eq expectation main body}
\end{align}
which shows that, for fixed $n$, the expected relative approximation error decays at least on the order of $r^{-1/2}$ as $r$ increases. We thus have our next result:
\begin{theorem}
Consider the trace-squared optimization problems in 
\eqref{eq trace squared minimization problem full rank main body} and
\eqref{eq trace squared minimization problem low rank main body}. For \eqref{eq trace squared minimization problem low rank main body}, assume
uniform boundedness of the iterates as in
Assumption~\ref{ass uniform boundedness of iterates main body} as well as the nonzero initialization in Assumptions~\ref{ass X_0 nonzero main body}--\ref{ass W_0 nonzero main body}. Under GF, the approximation error between the converged full-rank and low-rank minimizers is given exactly by \eqref{eq trace squared lora convergence main body}. 

When the entries of $X_0$ are drawn independently according to
\eqref{eq distribution main body}, the expectation of the squared
approximation error at convergence with respect to the initial conditions is given by
\eqref{eq expectation of square main body}. An upper bound on the expectation
of the converged approximation error is given by
\eqref{eq expectation main body}.
\end{theorem}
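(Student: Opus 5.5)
The plan is to work directly with the two limits already computed. By the full-rank calculation, $U(t)\to \frac{1}{n}\Tr(W_0)I_n$. By the closed-form LoRA solution, $Y(t)X(t)\to \frac{\Tr(W_0)}{\norm{X_0}^2}X_0^TX_0$. These hold under Assumptions~\ref{ass uniform boundedness of iterates main body}, \ref{ass X_0 nonzero main body} and \ref{ass W_0 nonzero main body}. Since $\Tr(W_0)\neq 0$, the limit of $\norm{U(t)}$ is $|\Tr(W_0)|/\sqrt{n}\neq 0$. The Frobenius norm is continuous, so the limit of the ratio equals the ratio of the limits, and the whole question reduces to an algebraic computation at $t=\infty$.

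\textbf{Step 1: the exact formula.} Write $G:=X_0^TX_0$, so that $\Tr G=\norm{X_0}^2$. The difference of the limits is $\Tr(W_0)\bigl(G/\Tr G - I_n/n\bigr)$. Expanding its squared Frobenius norm gives
\begin{align}
\Tr^2(W_0)\left(\frac{\norm{G}^2}{(\Tr G)^2}-\frac{2}{n}+\frac{1}{n}\right)=\Tr^2(W_0)\left(\frac{\norm{G}^2}{(\Tr G)^2}-\frac{1}{n}\right). \nonumber
\end{align}
We divide by $\norm{\lim U}^2=\Tr^2(W_0)/n$ and take the square root. This yields $\sqrt{n\norm{X_0^TX_0}^2-\norm{X_0}^4}/\norm{X_0}^2$, which is \eqref{eq trace squared lora convergence main body}. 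Note that $\Tr(W_0)$ cancels, so the error depends only on $X_0$.

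\textbf{Step 2: the expectation of the square.} By Step 1 we need $n\,\mathbb{E}\bigl[\norm{G}^2/(\Tr G)^2\bigr]-1$. This ratio is the step I expect to be the main obstacle, since it is an expectation of a quotient of dependent random quantities. I would handle it by scale invariance. The ratio is homogeneous of degree zero in $X_0$. For i.i.d.\ centered Gaussian entries, the direction $X_0/\norm{X_0}$ is uniform on the sphere and independent of $\norm{X_0}$. Writing $\norm{G}^2=\norm{X_0}^4\,\phi(X_0/\norm{X_0})$, independence gives
\begin{align}
\mathbb{E}\norm{G}^2=\mathbb{E}\norm{X_0}^4\cdot\mathbb{E}\phi, \qquad \text{so} \qquad \mathbb{E}\left[\frac{\norm{G}^2}{(\Tr G)^2}\right]=\frac{\mathbb{E}\norm{G}^2}{\mathbb{E}(\Tr G)^2}. \nonumber
\end{align}
This turns the problem into two moment computations.

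\textbf{Step 3: the moments.} With $N=rn$, we have $\norm{X_0}^2\sim\sigma^2\chi^2_N$, so $\mathbb{E}(\Tr G)^2=\sigma^4(N^2+2N)$.

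For the numerator, use $\norm{X_0^TX_0}=\norm{X_0X_0^T}$ and sum over pairs of rows $\mathbf{x}_a,\mathbf{x}_b\in\mathbb{R}^n$.
\begin{itemize}
\item Diagonal terms contribute $\mathbb{E}\norm{\mathbf{x}_a}_2^4=\sigma^4(n^2+2n)$, for $r$ terms.
\item Off-diagonal terms contribute $\mathbb{E}(\mathbf{x}_a\cdot\mathbf{x}_b)^2=n\sigma^4$, for $r(r-1)$ terms.
\end{itemize}
Hence $\mathbb{E}\norm{G}^2=\sigma^4\,rn(n+r+1)$.

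The ratio of the two moments is $(n+r+1)/(rn+2)$. Multiplying by $n$ and subtracting $1$ gives $(n^2+n-2)/(nr+2)$, which is \eqref{eq expectation of square main body}.

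\textbf{Step 4: the upper bound.} Jensen's inequality for the concave square root, $\mathbb{E}[Z]\le\sqrt{\mathbb{E}[Z^2]}$, gives \eqref{eq expectation main body}.
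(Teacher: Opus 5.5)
Your proposal is correct and follows the paper's proof: the exact formula comes from the same Frobenius expansion of $\Tr(W_0)\bigl(X_0^TX_0/\norm{X_0}^2 - I_n/n\bigr)$, the expectation rests on the same independence of $\norm{X_0}$ and the uniformly distributed direction $X_0/\norm{X_0}$ together with $\chi^2_{nr}$ moments, and the bound is the same Jensen step. The only difference is bookkeeping. The paper passes to fourth moments of the sphere-normalized entries and sums a four-index expression over cases A--D, using a sign-flip symmetry argument for the all-distinct case. You instead apply independence once to the whole ratio and compute $\mathbb{E}\norm{X_0X_0^T}^2$ directly from Gaussian row moments, which is slightly shorter and gives the same value $(n+r+1)/(nr+2)$.
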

A complete derivation of these results is provided in Appendix~\ref{app approximation error}. We have thus shown that, using the LoRA GF equations derived in Section~\ref{sec LoRA GF main body}, one can calculate both the expected approximation error and loss at convergence for the rank-$r$ constrained finetuning problem in \eqref{eq trace squared minimization problem low rank main body} as a function of $r$.

\subsection{Low-Rank Approximation}
Consider the low-rank approximation problem
\begin{align}
    \underset{\substack{B \in \mathbb{R}^{n \times r} \\ A \in \mathbb{R}^{r \times m}}}{\min} \, \frac{1}{2}\norm{W_0 - BA}^2, \label{eq squared frobenius minimization problem low rank main body}
\end{align}
where $r < \min(n,m)$. Using the GF equations derived in Section~\ref{sec LoRA GF main body}, we show that LoRA-adapted GD converges to the optimal rank-$r$ solution to \eqref{eq squared frobenius minimization problem low rank main body} given by the EYM Theorem~\cite{EckartYoung1936}.

We again assume uniform boundedness for the iterates as stated in Assumption~\ref{ass uniform boundedness of iterates main body}. Under this
assumption, we show that the gradient of the objective in \eqref{eq squared frobenius minimization problem low rank main body} is uniformly bounded and Lipschitz continuous on bounded subsets of $\Theta$. Furthermore, we assume
\begin{assumption}[Rank of $W_0$] \label{ass rank W_0 main body}
    Let $k := \rank(W_0)$. We assume that
    \begin{align}
    r < k \leq \min(n,m).
    \end{align}
\end{assumption}
Violation of Assumption~\ref{ass rank W_0 main body} renders
\eqref{eq squared frobenius minimization problem low rank main body}
an over-parameterized matrix factorization problem rather than a
low-rank approximation problem.

With all assumptions from Section~\ref{sec LoRA GF main body} satisfied, the learning dynamics for \eqref{eq squared frobenius minimization problem low rank main body} under LoRA are governed by
\begin{align}
   \frac{dY(t)}{dt} &= -\nabla_Y \left( \frac{1}{2}\norm{W_0-Y(t)X(t)}^2 \right) \nonumber \\ &= (W_0-YX)X^T \label{eq MF GF ODE main body Y} \\
    \frac{dX(t)}{dt} &= -\nabla_X \left( \frac{1}{2}\norm{W_0-Y(t)X(t)}^2 \right) \nonumber \\ &= Y^T(W_0-YX) \label{eq MF GF ODE main body X}
\end{align}
Under the standard initialization scheme for LoRA described in Section~\ref{sec tr^2 main body}, we are unaware of any closed-form solution to the ODE system above. However, under a spectral initialization scheme similar to that described
for LoRA in previous work~\cite{Meng2024,Balazy2024LoRAXS,xu2025understanding,
wang-etal-2025-milora, Lin2025denselora}, the dynamics in
(\ref{eq MF GF ODE main body Y}--\ref{eq MF GF ODE main body X})
decouple into scalar ODEs governing the evolution of the singular values
of $Y(t)X(t)$ during training. We then solve the resulting scalar equations to demonstrate the convergence of the nonzero singular values of $Y(t)X(t)$ to the top $r$ singular values of $W_0$.

We begin by calculating the singular value decomposition (SVD) of $W_0$:
\begin{align}
    W_0 &= U \Sigma_0 V^T.
\end{align}
where the singular values in $\Sigma_0$ are in non-increasing order along the diagonal. Using the matrices of left and right singular vectors for $W_0$, we define the following transformations of $X$ and $Y$:
\begin{align}
    \tilde{Y} &= U^TY \\
    \tilde{X} &= XV.
\end{align}
Under this change of variables,
(\ref{eq MF GF ODE main body Y}--\ref{eq MF GF ODE main body X}) become
\begin{align}
    \frac{d\tilde{Y}(t)}{dt} &= \left(\Sigma_0 - \tilde{Y}\tilde{X}\right)\tilde{X}^T \label{eq tilde Y SSE main body} \\
    \frac{d\tilde{X}(t)}{dt} &=\tilde{Y}^T \left(\Sigma_0 - \tilde{Y}\tilde{X}\right). \label{eq tilde X SSE main body}
\end{align}
We initialize $\tilde{Y}_0 = \mathbf{0}_{n \times r}$. The initial $\tilde{X}_0$ has zero off-diagonal entries, while its diagonal entries
are drawn independently from a centered Gaussian distribution with variance
$\sigma^2 > 0$, i.e.,
\begin{align}
    \tilde{x}_{ij}(0) &= \begin{cases}
        0 & i \neq j, \\
        \mathcal{N}(0, \sigma^2) & i = j. \label{eq initialization SSE main body}
    \end{cases}
\end{align}
To prevent stationary point initialization in (\ref{eq scalar y ode main body}--\ref{eq scalar x ode main body}), we assume the diagonal entries of $\tilde{X}$ are initially nonzero:
\begin{assumption}[Nonzero Diagonal Initialization] \label{ass nonzero diagonal init main body}
    We assume $\tilde{x}_{ii}(0) \neq 0$ for all $i \in \{1, \ldots ,r\}$, which holds almost surely for the Gaussian initialization in \eqref{eq initialization SSE main body}. 
\end{assumption}
This initialization scheme preserves diagonality during training, which
decouples (\ref{eq MF GF ODE main body Y}--\ref{eq MF GF ODE main body X}) into
$r$ independent scalar dynamical systems:
\begin{align}
    \frac{d{y}}{dt} &= \left( s_0-{y}{x} \right){x} \label{eq scalar y ode main body} \\
    \frac{d{x}}{dt} &= \left( s_0-{y}{x} \right){y} \label{eq scalar x ode main body}
\end{align}
Here, $s_0$ denotes an arbitrary singular value of $W_0$, while $y$ and $x$
represent the corresponding diagonal entries of $\tilde{Y}$ and $\tilde{X}$.

For $s_0 = 0$ (when $k < \min(n,m)$), we initialize at a stationary point of (\ref{eq scalar y ode main body}--\ref{eq scalar x ode main body}), and $y(t)x(t) = 0$ for all $t \geq 0$. As a result, the zero singular values of $W_0$ and $\tilde{Y}(t)\tilde{X}(t)$ are aligned throughout training.

When $s_0 \neq 0$, the ODE system admits the closed-form solution
\begin{align}
    y(t) &= q_s(t) s_0x_0 \label{eq y solution main body} \\
    x(t) &= p_s(t) x_0 \label{eq x solution main body}
\end{align}
where $x_0 := x_{ii}(0)$, and $p_s(t)$ and $q_s(t)$ are smooth scalar functions on $[0,\infty)$ given
in Appendix~\ref{app learning dynamics for SSE}. Taking the product of
\eqref{eq y solution main body} and \eqref{eq x solution main body} yields the
corresponding diagonal entry of $\tilde{Y}(t)\tilde{X}(t)$ associated with $s_0$:
\begin{align}
    y(t)x(t) &= p_s(t) q_s(t) s_0 x_0^2. \label{eq scalar product main body}
\end{align}
Then \eqref{eq scalar product main body} converges to
\begin{align}
    \lim_{t \to \infty} y(t)x(t) &= s_0.
\end{align}
We thus find that the nonzero diagonal entries of $\tilde{Y}(t)\tilde{X}(t)$
align with the top $r$ singular values of $W_0$ as ${t \to \infty}$. The remaining
diagonal entries are zero since
${\rank\!\left(\tilde{Y}(t)\tilde{X}(t)\right) = r}$. Hence, after convergence,
any discrepancy between $\Sigma_0$ and $\tilde{Y}(t)\tilde{X}(t)$ lies in the
extra $k - r$ nonzero singular values of $W_0$. Note that 
\begin{align}
    YX &= U\tilde{Y}\tilde{X}V^T.
\end{align}
Transforming back to $Y$ and $X$, any discrepancy between $Y(t)X(t)$ and $W_0$ after convergence can be attributed to the bottom $\min(n,m)-r$ singular values of $W_0$, i.e.
\begin{align}
    \lim_{t \to \infty} \norm{W_0-Y(t)X(t)}^2 &= \sum\limits_{i = r+1}^{\min(n,m)} s_{0,i}^2 \label{eq final loss MF main body}
\end{align}
where $s_{0,i}$ denotes the $i$th largest singular value of $W_0$. The result in \eqref{eq final loss MF main body} matches that given by EYM for the loss between $W_0$ and its optimal rank-$r$ minimizer~\cite{EckartYoung1936}. Since the bottom $\min(n,m)-r$ singular values of $Y(t)X(t)$ are zero after convergence, we can write the final rank-$r$ minimizer for \eqref{eq squared frobenius minimization problem low rank main body} obtained under LoRA as
\begin{align}
    \lim_{t \to \infty}Y(t)X(t) &= U_r \Sigma_{0,r} V_r^T \label{eq MF minimizer main body}
\end{align}
where $U_r$ and $V_r$ (truncations of $U$ and $V$) are the matrices of left and right singular vectors for the top $r$ singular values of $W_0$, and $\Sigma_{0,r} \in \mathbb{R}^{n \times m}$ is a diagonal matrix containing the top $r$ singular values of $W_0$, with all remaining diagonal entries equal to zero. The minimization problem equivalent to
\eqref{eq squared frobenius minimization problem low rank main body} with
unconstrained rank is trivially solved by $W_0$ itself. Consequently, the
converged approximation error between the full-rank and low-rank solutions
coincides with the final loss in this setting. Our GF analysis
therefore shows that, for low-rank approximation, the accuracy of LoRA scales with $r$ at a rate given by the tail of the singular-value spectrum of $W_0$, with residual error determined by the $(r+1)$th singular value. We thus arrive at our final result:
\begin{theorem}
Consider the optimization problem in 
\eqref{eq squared frobenius minimization problem low rank main body}. Assume
uniform boundedness of the iterates as in
Assumption~\ref{ass uniform boundedness of iterates main body} and nonzero initialization as in Assumption~\ref{ass nonzero diagonal init main body}. Assume $r < \rank(W_0)$ as in Assumption~\ref{ass rank W_0 main body}.

Under LoRA GF, the converged rank-$r$ minimizer of \eqref{eq squared frobenius minimization problem low rank main body} is given by \eqref{eq MF minimizer main body}. Furthermore, the final loss and approximation
error between the low-rank and full-rank solutions are given by
\eqref{eq final loss MF main body}. These results coincide with the optimal
rank-$r$ minimizer and loss characterized by the Eckart--Young--Mirsky theorem.
\end{theorem}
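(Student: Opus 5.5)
The plan is to reduce the matrix flow (\ref{eq MF GF ODE main body Y}--\ref{eq MF GF ODE main body X}) to $r$ decoupled scalar flows, solve these through a conserved quantity, and then read off the limit in the singular basis of $W_0$. First, Theorem~\ref{thm LoRA GF equations main body} justifies replacing LoRA-adapted GD by the GF system. This needs Assumptions~\ref{ass uniform boundedness of iterates main body}--\ref{ass lipschitz smoothness main body}. For $g(B,A)=\frac12\norm{W_0-BA}^2$, the gradients $(W_0-BA)A^T$ and $B^T(W_0-BA)$ are polynomial in the entries. They are therefore bounded and Lipschitz on bounded subsets of $\Theta$, and local existence and uniqueness hold by Picard--Lindel\"of. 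Since $U,V$ are orthogonal, the change of variables $\tilde Y=U^TY$, $\tilde X=XV$ gives (\ref{eq tilde Y SSE main body}--\ref{eq tilde X SSE main body}), and $\norm{W_0-YX}=\norm{\Sigma_0-\tilde Y\tilde X}$.

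Next comes the step I expect to be the main obstacle: showing that diagonality is preserved. I would not differentiate the off-diagonal entries directly. Instead I would use an ansatz. Set $\tilde Y(t)=\sum_{i\le r} y_i(t)E_{ii}$ and $\tilde X(t)=\sum_{i\le r} x_i(t)E_{ii}$, where $(y_i,x_i)$ solve (\ref{eq scalar y ode main body}--\ref{eq scalar x ode main body}) with $s_0=s_{0,i}$, $y_i(0)=0$ and $x_i(0)=\tilde x_{ii}(0)$. The product $\tilde Y\tilde X$ is then the $n\times m$ diagonal matrix with entries $y_ix_i$ for $i\le r$ and zero otherwise. Hence $(\Sigma_0-\tilde Y\tilde X)\tilde X^T$ is diagonal with entries $(s_{0,i}-y_ix_i)x_i$, because $\tilde X^T$ selects only the first $r$ columns. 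The same holds for $\tilde Y^T(\Sigma_0-\tilde Y\tilde X)$, so the ansatz solves the matrix IVP. By uniqueness it is \emph{the} solution. It also matters that the initialization sits on the indices $1,\dots,r$, which by the non-increasing ordering of $\Sigma_0$ carry the top $r$ singular values. By Assumption~\ref{ass rank W_0 main body}, all of these satisfy $s_{0,i}>0$.

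For each scalar system with $s_0>0$ and $x_0\neq0$ (Assumption~\ref{ass nonzero diagonal init main body}), I observe that $\frac{d}{dt}(y^2-x^2)=0$, so $x^2-y^2=x_0^2$. Writing $z=yx$, we have $z'=(s_0-z)(x^2+y^2)=(s_0-z)\sqrt{x_0^4+4z^2}$. This is a separable autonomous equation with $z(0)=0$, and $z=s_0$ is its only equilibrium in $[0,s_0]$. Since $x^2+y^2\ge x_0^2>0$, we get $|s_0-z(t)|\le s_0e^{-x_0^2t}$. Thus $z$ increases monotonically to $s_0$, and both $x^2=\tfrac12\big(x_0^2+\sqrt{x_0^4+4z^2}\big)$ and $y^2$ stay bounded, which is consistent with Assumption~\ref{ass uniform boundedness of iterates main body}. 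Integrating explicitly gives the functions $p_s,q_s$ of (\ref{eq y solution main body}--\ref{eq x solution main body}), but only the limit $y(t)x(t)\to s_0$ is needed.

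Finally, I assemble the result. The limit of $\tilde Y\tilde X$ is $\Sigma_{0,r}$, so $Y X=U\tilde Y\tilde X V^T\to U\Sigma_{0,r}V^T=U_r\Sigma_{0,r}V_r^T$, which is \eqref{eq MF minimizer main body}. By orthogonal invariance, $\norm{W_0-YX}^2=\norm{\Sigma_0-\tilde Y\tilde X}^2\to\sum_{i>r}s_{0,i}^2$, which is \eqref{eq final loss MF main body}; terms with $i>k$ vanish. The unconstrained problem is minimized by $W_0$ itself, so the approximation error equals this loss. Comparing with the Eckart--Young--Mirsky theorem \cite{EckartYoung1936} shows that this limit is the optimal rank-$r$ minimizer and optimal loss.
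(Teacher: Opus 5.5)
Your proposal is correct. Its skeleton matches the paper's Appendix~\ref{app learning dynamics for SSE}: justify the GF, rotate into the singular basis of $W_0$, decouple into diagonal scalar systems, show each product converges to $s_{0,i}$, and rotate back. It differs in two steps.

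First, the paper only asserts that diagonality is preserved ``as a consequence of our initialization scheme.'' Your ansatz-plus-uniqueness argument proves it: the diagonal ansatz solves the rotated system because $\tilde X^T$ only picks out the first $r$ columns, and the polynomial right-hand side is locally Lipschitz. This tightens a step the paper leaves informal.

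Second, for the scalar system the paper uses no invariant. It specializes its closed-form trace-squared solution to $n=m=r=1$, obtaining $y(t)=q_s(t)s_0x_0$ and $x(t)=p_s(t)x_0$ in hyperbolic functions, and then takes limits of $p_s$ and $q_s$ via the asymptotics of $\sinh$ and $\cosh$. You instead use the balancedness invariant $x^2-y^2=x_0^2$ to reduce to the autonomous equation $z'=(s_0-z)\sqrt{x_0^4+4z^2}$, which gives $s_0-z(t)=s_0\exp\bigl(-\int_0^t\sqrt{x_0^4+4z^2}\,ds\bigr)$.

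Your route buys several things. It is short and self-contained, and it sidesteps the paper's delicate manipulations (dividing by $a(t)$, sign bookkeeping in the $\arsinh$ integration). It gives monotone convergence with an explicit rate $e^{-x_0^2t}$; asymptotically the rate is $e^{-\sqrt{x_0^4+4s_0^2}\,t}$, matching the paper's $\sqrt{\xi_1}$. It also establishes global existence and boundedness of the flow directly rather than leaning on Assumption~\ref{ass uniform boundedness of iterates main body}. The invariant even yields the individual limit $x_\infty^2=\tfrac12\bigl(x_0^2+\sqrt{x_0^4+4s_0^2}\bigr)$, which agrees with the paper's $\lim p_s$.

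The paper's route buys the full explicit trajectory, in keeping with its goal of closed-form dynamics. None of that extra information is needed for the theorem as stated.
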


A detailed calculation for this section is provided in Appendix~\ref{app learning dynamics for SSE}.

\section{Conclusion}

In this work, we presented a generalized GD update scheme for
LoRA-adapted finetuning which is agnostic to whether the low-rank factors
$A$ and $B$ are updated simultaneously or sequentially. We provided a rigorous derivation of the GF equations governing this
algorithm under standard boundedness and Lipschitz-smoothness assumptions and showed that the resulting LoRA GF is invariant to choice of update scheme.

Using these dynamical systems governing the behavior of LoRA, we calculated the accuracy of LoRA as a function of rank $r$. For the trace-squared objective, we found that LoRA attains zero final loss for
arbitrary rank. Moreover, under standard LoRA initialization~\cite{Hu2022LoRA}, the expected relative approximation error between the converged low-rank and full-rank
solutions decreases at least as fast as $r^{-1/2}$. For low-rank approximation, we found that the improvement in approximation
accuracy achieved by LoRA depends on the tail values of
the singular-value spectrum of $W_0$ as the rank $r$ increases, consistent with the optimal rank--accuracy
trade-off characterized by the EYM
theorem~\cite{EckartYoung1936}.

Several limitations of our study point to future work. First,
our spectral initialization scheme may prove unfeasible in settings where
computing the full SVD of the prefrozen weights $W_0$ is impractical. Second, the rank--accuracy dependence derived here is specific to our chosen optimization objectives. Extension of this analysis involves repeating our calculation for each new loss function, while only two elementary objectives are studied in this work. 

Future work could address these limitations by developing a generalized
update scheme for SGD adapted via LoRA and deriving its
associated GF equations. Additional directions include
application of our analysis to more complex or task-specific loss functions as well
as design of LoRA-based approaches to low-rank approximation which do not rely
on access to the full SVD of $W_0$.

\bibliography{paper_detailed_refs}

\newpage
\appendix
\onecolumn


\onecolumn

\section{Derivation of LoRA Gradient Flow ODEs} \label{app LoRA GF Proof}

We wish to analyze the learning dynamics of LoRA by deriving a set of so-called gradient flow ODEs which describe how our parameters $B \in \mathbb{R}^{n \times r}$ and $A \in \mathbb{R}^{r \times m}$ evolve during training. When adapted via LoRA, the classical gradient descent algorithm for continuously differentiable objective $g: \mathbb{R}^{n \times r} \times \mathbb{R}^{r \times m} \to \mathbb{R}$ is given below:


\begin{algorithm}[H]
   \caption{Deterministic Gradient Descent adapted via LoRA (Fixed Stepsize)}
   \label{alg: lora-gradient-descent}
   \begin{algorithmic}[1]

      \vspace{4pt}
      \STATE \textbf{Initialize:}
      \STATE \hspace{1em} Choose an initial $B_0 \in \mathbb{R}^{n \times r}$ and $A_0 \in \mathbb{R}^{r \times m}$
      \STATE \hspace{1em} Set $B^* = B_0$, $A^* = A_0$, and $i = 0$
      \STATE \hspace{1em} Choose $\alpha > 0$ and $\lambda \in \left[ 0,1 \right]$

      \vspace{10pt}
      \WHILE{not converged} 

         \vspace{6pt}
         \STATE \textbf{Stage 1: Update $B$ while keeping $A^*$ fixed}

         \FOR{$j = 0, \dots, k-1$}
            \vspace{2pt}
            \STATE \hspace{2em} $B_{j+1} = B_{j} - \alpha \nabla_B g(B_j,A^*)$
            \vspace{2pt}
         \ENDFOR

         \STATE Set $B^* \longleftarrow \lambda B^* + (1-\lambda)B_{k}$
         \STATE Set $B_0 \longleftarrow B_{k}$

         \vspace{10pt}
         \STATE \textbf{Stage 2: Update $A$ while keeping $B^*$ fixed}
         \FOR{$j = 1, \dots, k$}
            \vspace{2pt}
            \STATE \hspace{2em} $A_{j+1} = A_j - \alpha \nabla_A g(B^*, A_j)$
            \vspace{2pt}
         \ENDFOR

         \STATE Set $A^* \longleftarrow A_{k}$
         \STATE Set $A_0 \longleftarrow A_{k}$

         \vspace{10pt}
         \STATE \textbf{Stage 3: Convergence Check}

         \STATE $i \leftarrow i + 1$

         \IF{stopping criteria met}
            \STATE \hspace{1em} \textbf{break}
         \ENDIF

         \vspace{6pt}
      \ENDWHILE

      \vspace{10pt}
      \STATE \textbf{Output:} $B^* A^*
         = \underset{\substack{B \in \mathbb{R}^{n \times r} \\ A \in \mathbb{R}^{r \times m}}}{\arg\min} f(W_0 + BA)$

   \end{algorithmic}
\end{algorithm}

Note that the updates of $A$ in stage 2 of the algorithm above are performed using the convex combination given on line 10. This definition for $B^*$ incorporates information about $B$ from both the current and previous iterations and makes our analysis agnostic to whether updates between $B$ and $A$ are performed sequentially ($\lambda = 0$) or simultaneously ($\lambda = 1$) using data from the previous iteration. Although updates are applied simultaneously in practice for parallelism \cite{hu2021loracode}, we derive the gradient-flow dynamics from the algorithm above to show that our analysis does not depend on whether the updates are performed sequentially or simultaneously.


We wish to show that, in the limit where $\alpha$ approaches zero, the learning dynamics of Algorithm \ref{alg: lora-gradient-descent} are described by the solution $\Big( Y(t), X(t) \Big)$ of the ODE system
\begin{align}
    \frac{dY(t)}{dt} &= -\nabla_Y g\Big( Y(t), X(t) \Big) \label{eq dY/dt final paper} \\
    \frac{dX(t)}{dt} &= -\nabla_X g\Big( Y(t), X(t) \Big) \\
    Y(0) &= B_0, \\
    X(0) &= A_0, \label{eq X(0) final paper}
\end{align}
defined for $t \in [0, T)$, where $T > 0$ is arbitrary.

To begin, define the Cartesian product $\Theta$ as
\begin{align}
    \Theta := \mathbb{R}^{n \times r} \times \mathbb{R}^{r \times m}
\end{align}
and denote the elements $\theta \in \Theta$ as ordered pairs
\begin{align}
    \theta = \left( B , A  \right)
\end{align}
for $B \in \mathbb{R}^{n \times r}$ and $A \in \mathbb{R}^{r \times m}$. For the Frobenius norm on $\mathbb{R}^{n \times r}$ and $\mathbb{R}^{r \times m}$, define the function $\norm{\cdot}: \Theta \to \mathbb{R}_0^+$ as
\begin{align}
    || \theta || &:= \left({||B||^2+||A||^2}\right)^{1/2}
\end{align}
Remark \ref{rem product norm} below shows that $\norm{\cdot}$ is a product norm on $\Theta$.

\begin{remark}[Product Norm on $\Theta$] \label{rem product norm}
Define the space 
\[
\Theta := \mathbb{R}^{n\times r} \times \mathbb{R}^{r\times m},
\]
with elements $\theta \in \Theta$ given as ordered pairs of matrices:
\begin{align}
    \theta &= (B,A), \\
    B &\in \mathbb{R}^{n \times r} \\
    A &\in \mathbb{R}^{r \times m}
\end{align}
equipped with the norm
\[
\|\theta\| := \big( \|B\|^2 + \|A\|^2 \big)^{1/2}.
\]
Then for any $\theta \in \Theta$, we have:
\vspace{-6pt}
\begin{enumerate}
    \item $\|\theta\| \ge 0$
    \item $\|\theta\| = 0$ iff $\theta=(\mathbf{0}_{n \times r},\mathbf{0}_{r \times m})$
    \item Define scalar multiplication on $\Theta$ as $\eta \theta = \left( \eta B , \eta A \right)$. Then we have $\|\eta \theta\| = |\eta|\,\|\theta\|$ for all $\eta \in \mathbb{R}$
    \item For any two elements $\theta_1 = (B_1,A_1)$ and $\theta_2 = (B_2, A_2)$ of $\Theta$, define the binary addition operation $\Theta \times \Theta \to \Theta$ as
    \begin{align}
        \theta_1 + \theta_2 &= (B_1 + B_2, \, A_1 + A_2)
    \end{align}
    Then $\|\theta_1 + \theta_2\| \le \|\theta_1\| + \|\theta_2\|$
\end{enumerate}
\end{remark}
While previous works \cite{folland1999real,AliprantisBorder2006} note that Cartesian products of normed spaces are also normed spaces, we were unable to find a complete proof that the particular function defined above acts as a norm on $\Theta$. For completeness, we present the proof of Remark~\ref{rem product norm} in Appendix~\ref{app product norm definition}, as our derivation depends on this structure.

Before beginning our derivation of (\ref{eq dY/dt final paper}-\ref{eq X(0) final paper}), note also that for any $\theta \in \Theta$, we define the gradient
\begin{align}
    \nabla g( \theta ) &= \nabla g(B , A) = \Big( \nabla_B g(B, A), \, \nabla_A g(B, A)  \Big)
\end{align}
For a time-dependent path $\theta(t) = \Big( Y(t), X(t) \Big)$, we define
\begin{align}
    \frac{d}{dt} \left[ \theta(t) \right] &=  \frac{d}{dt} \left[ \Big( Y(t), X(t) \Big) \right] = \left( \frac{dY(t)}{dt}, \frac{dX(t)}{dt} \right)
\end{align}
as well as
\begin{align}
    \int_{t_1}^{t_2} \theta (s) \, ds &= \left( \int_{t_1}^{t_2} Y(s)ds \, , \,  \int_{t_1}^{t_2} X(s) ds \right).
\end{align}
We are now ready to begin our derivation.

Consider a continuously differentiable function $g: \Theta \to \mathbb{R}$, and suppose we wish to minimize $g$ on $\Theta$. This is equivalent to solving
\begin{align}
    \underset{\substack{B \in \mathbb{R}^{n \times r} \\ A \in \mathbb{R}^{r \times m}}}{\min} g(B , A)
\end{align}
To better illustrate the finetuning aim of LoRA, we consider also a continuously differentiable function $f: \mathbb{R}^{n \times m} \to \mathbb{R}$. In the context of LoRA, we define $g(B,A) := f(W_0+BA)$ for constant $W_0 \in \mathbb{R}^{n \times m}$. Our optimization problem then becomes
\begin{align}
    \underset{\substack{B \in \mathbb{R}^{n \times r} \\ A \in \mathbb{R}^{r \times m}}}{\min} f(W_0+BA) \label{eq LoRA problem final paper}
\end{align}

Consider the iterates for $A$ and $B$ output by Algorithm \ref{alg: lora-gradient-descent} applied to (\ref{eq LoRA problem final paper}). At any given iteration $i$ of Algorithm \ref{alg: lora-gradient-descent}, stages 1 and 2 of the algorithm produce the sequences of iterates as
\begin{align}
    B_{ik+j+1}^{( \alpha )} &= B_{ik+j}^{( \alpha )} - \alpha \nabla_B f \left( W_0+B_{ik+j}^{(\alpha)}A_{ik}^{(\alpha)} \right) \label{eq gradient descent B} \\
    A_{ik+j+1}^{( \alpha )} &= A_{ik+j}^{( \alpha )} - \alpha \nabla_A f \left( W_0+\left[ \lambda B_{ik}^{(\alpha)} + (1-\lambda)B_{(i+1)k}^{(\alpha)} \right]A_{ik+j}^{(\alpha)} \right) \label{eq gradient descent A}
\end{align}
We assume all iterates produced by Algorithm \ref{alg: lora-gradient-descent} lie within some compact domain, as illustrated below:
\begin{assumption}[Uniform Boundedness of Iterates] \label{ass boundedness of iterates}
Define the domain $\mathcal{D}_R \subseteq \Theta$ by
\[
    \mathcal{D}_{R} := \{ (B,A) \in \Theta : \|B\| \le R,\ \|A\| \le R \}.
\]
Then there exists $R' > 0$ such that, for any $ik+j \in \mathbb{N}_0$, we have
\[
    \left( B_{ik+j}^{(\alpha)},\, A_{ik+j}^{(\alpha)} \right) \in \mathcal{D}_{R'}.
\]
\end{assumption}
\begin{remark}[Uniform Boundedness of Products] \label{rem boundedness of products}
    Define the bounded subset of $\mathbb{R}^{n \times m}$:
    \begin{align}
        \mathcal{B}_R := \{ W \in \mathbb{R}^{n \times m}: ||W-W_0|| \leq R \}
    \end{align}
    As a consequence of Assumption \ref{ass boundedness of iterates}, the shifted products $W_0 + B_{ik+j}^{(\alpha)}A_{ik+j}^{(\alpha)}$ of all iterates lie within $\mathcal{B}_{R'^2}$. That is, for any $\left( B_{ik+j}^{(\alpha)},\, A_{ik+j}^{(\alpha)} \right) \in \mathcal{D}_{R'}$, we will have
    \begin{align}
    \norm{W_0 + B_{ik+j}^{(\alpha)}A_{ik+j}^{(\alpha)} - W_0} \leq \norm{B_{ik+j}^{(\alpha)}}\norm{A_{ik+j}^{(\alpha)}} \leq R'^2
    \end{align}
\end{remark}
The assumption and remark above ensure all iterates and their products remain within compact subsets of $\Theta$ and $\mathbb{R}^{n \times m}$ on which all subsequent regularity assumptions for our analysis are imposed. In practical implementations of Algorithm \ref{alg: lora-gradient-descent}, Assumption \ref{ass boundedness of iterates} is automatically satisfied by limitations of floating-point arithmetic.

To illustrate how the iterates evolve during training, construct their analogous continuous affine interpolations:  
\begin{align}
    Y_{\alpha}(t) &= B_{\lfloor \frac{t}{\alpha} \rfloor}^{(\alpha)} - \left( t - \left\lfloor \frac{t}{\alpha} \right\rfloor \alpha \right) \nabla_B f\left( W_0 + B_{\lfloor \frac{t}{\alpha} \rfloor}^{(\alpha)} A_{h(t)}^{(\alpha)} \right) \label{eq Y_alpha} \\
    X_{\alpha}(t) &= A_{\lfloor \frac{t}{\alpha} \rfloor}^{(\alpha)} - \left( t - \left\lfloor \frac{t}{\alpha} \right\rfloor \alpha \right) \nabla_A f\left( W_0 + \left[ \lambda B_{h(t)}^{(\alpha)} + (1-\lambda)B_{h(t)+k}^{(\alpha)} \right] A_{\lfloor \frac{t}{\alpha} \rfloor}^{(\alpha)} \right) \label{eq X_alpha}
\end{align}
where
\begin{align}
    h(t) &= \left\lfloor \frac{t}{\alpha} \right\rfloor - \left( \left\lfloor \frac{t}{\alpha} \right\rfloor \mod k \right)
\end{align}
Under the following boundedness assumption on the objective gradient, the continuous-time interpolations capture the behavior of Algorithm \ref{alg: lora-gradient-descent}.
\begin{assumption}[Uniform Boundedness of Gradient] \label{ass boundedness of gradient}
    Recall that $g(B,A) := f(W_0+BA)$. For every finite $R > 0$, there exists $M_R > 0$ such that, for every $(B,A) \in \mathcal{D}_R$,
    \begin{align}
        \| \nabla g(B, A) \|^2
        = \| \nabla_B g(B,A) \|^2 
        + \| \nabla_A g(B,A) \|^2 
        \le M_R^2,
    \end{align}
    or, equivalently,
    \begin{align}
        \norm{\nabla g(B,A)}^2 &= \norm{\nabla_B f(W_0+BA)}^2 + \norm{\nabla_A f(W_0+BA)}^2 \leq M_R^2
    \end{align}
\end{assumption}
The assumption above enables us to illustrate how $Y_\alpha(t)$ and $X_\alpha(t)$ capture the behavior of the iterates during training. Under Assumption \ref{ass boundedness of gradient}, we will have, for any $ik+j \in \mathbb{N}_0$,
\begin{align}
    Y_\alpha \Big( t= (ik+j)\alpha \Big) &= B_{\lfloor \frac{(ik+j)\alpha}{\alpha} \rfloor}^{(\alpha)} - \left( (ik+j)\alpha - \left\lfloor \frac{(ik+j)\alpha}{\alpha} \right\rfloor \alpha \right) \nabla_B f\left( W_0 + B_{\lfloor \frac{(ik+j)\alpha}{\alpha} \rfloor}^{(\alpha)} A_{h((ik+j)\alpha)}^{(\alpha)} \right) \\
    &= B_{ik+j}^{(\alpha)} - \Big( (ik+j)\alpha - (ik+j) \alpha \Big) \nabla_B f\left( W_0 + B_{ik+j}^{(\alpha)} A_{ik}^{(\alpha)} \right) \\
    &= B_{ik+j}^{(\alpha)}
\end{align}
Similarly,
\begin{align}
    X_\alpha \Big( t= (ik+j)\alpha \Big) &= A_{\lfloor \frac{(ik+j)\alpha}{\alpha} \rfloor}^{(\alpha)} \nonumber \\
    &- \left( (ik+j)\alpha - \left\lfloor \frac{(ik+j)\alpha}{\alpha} \right\rfloor \alpha \right) \nabla_A f\left( W_0 + \left[ \lambda B_{h((ik+j)\alpha)}^{(\alpha)} + (1-\lambda)B_{h((ik+j)\alpha)+k}^{(\alpha)} \right] A_{\lfloor \frac{(ik+j)\alpha}{\alpha} \rfloor}^{(\alpha)} \right) \\
    &= A_{ik+j}^{(\alpha)} - \Big( (ik+j)\alpha - (ik+j)\alpha \Big)\nabla_A f \left( W_0+ \left[ \lambda B_{ik}^{(\alpha)} + (1-\lambda)B_{(i+1)k} \right] A_{ik+j}^{(\alpha)} \right) \\
    &= A_{ik+j}^{(\alpha)}
\end{align}
Thus, the iterates output by Algorithm \ref{alg: lora-gradient-descent} for $B$ and $A$ correspond exactly to the values of the continuous affine interpolations at the discretization points. Note also that $Y_\alpha(t)$ and $X_\alpha(t)$ lie within $\mathcal{D}_{R'}$ for all finite $t$.

Let $T > 0$ be any finite number. To show that the ODEs in (\ref{eq dY/dt final paper}--\ref{eq X(0) final paper}) describe the dynamics of $Y_\alpha(t)$ and $X_\alpha(t)$ in the limit that $\alpha$ approaches zero for any $t \in [0 , T]$, define $Y: [0,T] \to \mathbb{R}^{n \times r}$ and $X: [0,T] \to \mathbb{R}^{r \times m}$ as matrix-valued functions which satisfy (\ref{eq dY/dt final paper}--\ref{eq X(0) final paper}). That is, $Y(t)$ and $X(t)$ are solutions on $[0,T]$ to
\begin{align}
    \frac{dY(t)}{dt} &= -\nabla_Y f\Big(W_0+Y(t)X(t) \Big) \label{eq dY/dt = -f}  \\
    \frac{dX(t)}{dt} &= -\nabla_X f\Big(W_0+Y(t)X(t) \Big) \label{eq dX/dt = -f} \\
    Y(0) &= B_0 \\
    X(0) &= A_0 \label{eq X(0) = A_0}
\end{align}
which we can rewrite using our notation as
\begin{align}
    \frac{d\theta(t)}{dt} &= -\nabla g \Big(\theta(t)\Big) \label{eq d theta/dt} \\
    \theta(0) &= \left( B_0^{(\alpha)} , A_0^{(\alpha)} \right) \label{eq theta(0)}
\end{align}
where $\theta(t) = \Big( Y(t), X(t) \Big)$. Remark~\ref{rem existence of ODE solutions} explains how Assumption~\ref{ass boundedness of gradient} guarantees existence of the solution on $[0,T]$.
\begin{remark}[Existence of ODE Solution] \label{rem existence of ODE solutions}
Since $\nabla g$ is continuous and bounded on any $\mathcal{D}_R$, the initial-value problem in (\ref{eq d theta/dt}--\ref{eq theta(0)}) has a solution for all finite $t \geq 0$. \cite{Walter1998_ODE}. Moreover, any such solution remains bounded on finite time intervals. That is, for any finite $T > 0$, there exists some $R_T > 0$ such that
\begin{align}
    \Big( Y(t) , X(t) \Big) \in \mathcal{D}_{R_T} \hfill \text{ for all } t \in [0, T].
\end{align}
\end{remark}
The boundedness of $\nabla g$ on any $\mathcal{D}_R$, together with the existence and boundedness of solutions to (\ref{eq d theta/dt}--\ref{eq theta(0)}), implies Remark~\ref{rem uniform boundedness of gradient on time}:
\begin{remark}[Uniform Boundedness of Gradient for Finite Time] \label{rem uniform boundedness of gradient on time}
Let finite $T > 0$ be given. For any $t \in [0,T]$, we will have that 
\begin{align}
    \Big( Y(t) , X(t) \Big) \in \mathcal{D}_{R_t}
\end{align}
where $R_t = \max \Big( \norm{Y(t)}, \norm{X(t)} \Big)$. By Assumption~\ref{ass boundedness of gradient}, there exists $M_{R_t} > 0$ such that
\begin{align}
    \norm{\nabla g \Big( \theta(t) \Big)} \leq M_{R_t}
\end{align}
Denote
\begin{align}
    M_T &= \sup\limits_{t \in [0,T]} \{ M_{R_t} \}
\end{align}
Then we will have
\begin{align}
    \norm{\nabla g \Big( \theta(t) \Big)} \leq M_T \hfill \text{ for all } t \in [0,T].
\end{align}
\end{remark}

To demonstrate that the gradient flow ODEs in (\ref{eq d theta/dt}--\ref{eq theta(0)}) give the behavior of $Y_\alpha(t)$ and $X_\alpha(t)$ as $\alpha$ approaches zero, we will show that
\begin{align}
\lim_{\alpha \to 0} ||Y_\alpha(t) - Y(t)|| &= 0 \\
\lim_{\alpha \to 0} ||X_\alpha(t) - X(t)|| &= 0
\end{align}
for all $t \in [0, T]$. Or, equivalently, that
\begin{align}
    \lim_{\alpha \to 0} ||\theta_\alpha (t) - \theta (t)|| &= 0
\end{align}
where $\theta_\alpha (t) = \Big( Y_\alpha(t), X_\alpha(t) \Big)$. Begin by letting $q = ik + j$ be any non-negative integer satisfying $0 \le q+1 \le \frac{T}{\alpha}$, and denote $t_q = q\alpha$. Note that $t_{q+1} - t_{q} = \alpha$. Integrating both sides of (\ref{eq d theta/dt}) with respect to $t$, we have (for any $\alpha > 0$)
\begin{align}
    \int\limits_{t_q}^{t_{q+1}} \frac{d\theta (s)}{ds} \, ds &= -\int\limits_{t_q}^{t_{q+1}} \nabla g\Big( \theta(s) \Big)ds \\
    \theta(t_{q+1}) - \theta(t_q) &= -\int\limits_{t_q}^{t_{q+1}} \nabla g\Big( \theta(s) \Big)ds
\end{align}
Add and subtract $\alpha \nabla g \Big( \theta(t_q) \Big)$ from the right-hand side to get
\begin{align}
    &\theta(t_{q+1}) - \theta(t_q) = -\alpha \nabla g \Big( \theta(t_q) \Big) - \int\limits_{t_q}^{t_{q+1}} \nabla g\Big( \theta(s) \Big) - \nabla g \Big( \theta(t_q) \Big) \, ds \\
    &\theta(t_{q+1}) - \left[\theta(t_q) - \alpha \nabla g \Big( \theta(t_q) \Big) \right] = - \int\limits_{t_q}^{t_{q+1}} \nabla g\Big( \theta(s) \Big) - \nabla g \Big( \theta(t_q) \Big) \, ds \label{eq tau q}
\end{align}
Define $\tau_q \in \Theta$, where
\begin{align}
    \tau_q &:= \theta(t_{q+1}) - \left[\theta(t_q) - \alpha \nabla g \Big( \theta(t_q) \Big) \right] \label{eq tau definition}
\end{align}
and note from (\ref{eq tau q}) that
\begin{align}
    \tau_q &= - \int\limits_{t_q}^{t_{q+1}} \nabla g\Big( \theta(s) \Big) - \nabla g \Big( \theta(t_q) \Big) \, ds \label{eq tau q 2}
\end{align}
To continue with our derivation, we need Assumption \ref{ass Lipschitz continuity} and Lemma \ref{lem normed differences are bounded} below:
\begin{assumption}[Lipschitz Smoothness of $g$] \label{ass Lipschitz continuity}
    Assume $g$ is Lipschitz smooth on any $\mathcal{D}_R$. That is, for any finite $R > 0$, there exists $L_R > 0$ such that, for all \\ $\theta_1, \theta_2~\in~\mathcal{D}_R$, we have
    \begin{align}
        \norm{\nabla g(\theta_2) - \nabla g(\theta_1)} &\leq L_R || \theta_2 - \theta_1||
    \end{align}
\end{assumption}
Lipschitz smoothness of $\nabla g$ on any $\mathcal{D}_R$ gives
\begin{align}
    \norm{ \nabla g\Big( \theta(s) \Big) - \nabla g\Big(\theta (t_q) \Big) } &\leq L_{R_T} \norm{\theta(s)-\theta(t_q)}
\end{align}
The following lemma allows us to bound $\norm{\theta(s)-\theta(t_q)}$:
\begin{lemma}[Changes in $\theta(t)$ in Time are Bounded] \label{lem normed differences are bounded}
    Let $T > 0$ be an arbitrary finite number. For the solution to (\ref{eq d theta/dt}--\ref{eq theta(0)}) which exists on $t \in [0,T]$, we will have
    \begin{align}
        \norm{\frac{d\theta(t)}{dt}} = \norm{\nabla g\Big( \theta(t) \Big)} \hfill \text{ for all } t \in [0,T].
    \end{align}
    By Remark \ref{rem uniform boundedness of gradient on time}, we have that
    \begin{align}
       \norm{\frac{d\theta(t)}{dt}} = \norm{\nabla g\Big( \theta(t) \Big)} \leq M_T \hfill \text{ for all } t \in [0,T].
    \end{align}
\end{lemma}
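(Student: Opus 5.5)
The identity $\norm{d\theta(t)/dt} = \norm{\nabla g(\theta(t))}$ follows directly from the ODE. On $[0,T]$ the solution $\theta(t) = (Y(t),X(t))$ satisfies \eqref{eq d theta/dt}, so $d\theta(t)/dt = -\nabla g(\theta(t))$ componentwise:
\[
\frac{dY(t)}{dt} = -\nabla_Y g\big(\theta(t)\big), \qquad \frac{dX(t)}{dt} = -\nabla_X g\big(\theta(t)\big).
\]
This uses the componentwise definition of the derivative of a path in $\Theta$ given before the derivation. I then apply the product norm of Remark~\ref{rem product norm}. Property 3 with $\eta = -1$, together with $\norm{-B} = \norm{B}$ for the Frobenius norm, gives
\[
\norm{\frac{d\theta(t)}{dt}} = \left(\norm{\nabla_Y g(\theta(t))}^2 + \norm{\nabla_X g(\theta(t))}^2\right)^{1/2} = \norm{\nabla g\big(\theta(t)\big)}
\]
for every $t \in [0,T]$.

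For the bound, I would invoke Remark~\ref{rem existence of ODE solutions}, which places $\theta(t) \in \mathcal{D}_{R_T}$ for all $t \in [0,T]$. I would then use Remark~\ref{rem uniform boundedness of gradient on time} to get $\norm{\nabla g(\theta(t))} \le M_T$.

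The one delicate point is that $M_T = \sup_t M_{R_t}$ must be finite. I would make this explicit by bypassing the $t$-dependent constants: since $R_t \le R_T$ for every $t \in [0,T]$, Assumption~\ref{ass boundedness of gradient} applied on the single set $\mathcal{D}_{R_T}$ yields a single constant $M_{R_T}$. We may then take $M_T := M_{R_T}$.

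This finiteness is the only potential obstacle, and it is handled by the finite-time boundedness of the trajectory. That boundedness in turn holds by continuity of $\theta$ on the compact interval $[0,T]$.
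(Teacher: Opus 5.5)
Your proposal is correct and follows the paper's argument. The identity comes straight from the ODE through the componentwise derivative and the product norm, and the bound is the one supplied by Remark~\ref{rem uniform boundedness of gradient on time}. Your replacement of $M_T=\sup_t M_{R_t}$ by the single constant $M_{R_T}$ (valid since $\theta(t)\in\mathcal{D}_{R_T}$ for all $t\in[0,T]$) is a useful tightening, because it guarantees the finite constant that the later estimate $\norm{\tau_q}\le L_{R_T}M_T\alpha^2$ actually needs.
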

Using Lemma \ref{lem normed differences are bounded}, the Fundamental Theorem of Calculus gives
\begin{align}
    \norm{\theta(s)-\theta(t_q)} &= \norm{\int\limits_{t_q}^s \frac{d}{ds'}\theta(s') ds'} \\
    &\leq \int\limits_{t_q}^s \norm{\frac{d}{ds'}\theta(s')} ds' \\
    &\leq \int\limits_{t_q}^{t_{q+1}} \norm{\frac{d}{ds'}\theta(s')} ds' \\
    &\leq \int\limits_{t_q}^{t_{q+1}} M_T \, ds' \\
    &= M_T\alpha
\end{align}
and we have $\norm{\theta(s)-\theta(t_q)} \leq M_T\alpha$.

Return to (\ref{eq tau q 2}). Taking the norm of both sides, we have
\begin{align}
    \norm{\tau_q} &= \norm{ \int\limits_{t_q}^{t_{q+1}} \nabla g\Big( \theta(s) \Big) - \nabla g \Big( \theta(t_q) \Big) \, ds } \\
    &\leq \int\limits_{t_q}^{t_{q+1}} \norm{ \nabla g\Big( \theta(s) \Big) - \nabla g \Big( \theta(t_q) \Big) } \, ds \\
    &\leq L_{R_T} \int\limits_{t_q}^{t_{q+1}} \norm{\theta(s)-\theta(t_q)} \, ds \\
    &\leq  L_{R_T}\int\limits_{t_q}^{t_{q+1}} M_T \alpha \, ds \\
    &= L_{R_T}M_T \alpha^2 \label{eq bound on tau_q}
\end{align}
So we have $\norm{\tau_q} \leq L_{R_T}M_T \alpha^2$, where
\begin{enumerate}
    \item $R_T$ bounds $\norm{Y(t)}$ and $\norm{X(t)}$ for all $t \in [0,T]$. \\
    \item $L_{R_T}$ is the Lipschitz constant of $\nabla g( \theta )$ on $\mathcal{D}_{R_T}$. \\
    \item $M_T$ bounds $\norm{\nabla g \Big( \theta (t) \Big)}$ for all $t \in [0,T]$.
\end{enumerate}
Denote $C_T := L_{R_T}M_T$.

Now define
\begin{align}
    e_q &:= \theta(t_q) - \left( B_q^{(\alpha)} \, , \, A_q^{(\alpha)} \right) \\
\end{align}
and rearrange (\ref{eq tau q}--\ref{eq tau q 2}) to get
\begin{align}
    \theta(t_{q+1}) &= \theta(t_q) - \alpha \nabla g \Big( \theta(t_q) \Big) + \tau_q 
\end{align}
The gradient descent iterations in (\ref{eq gradient descent B}--\ref{eq gradient descent A}) give
\begin{align}
    B_{q+1}^{( \alpha )} &= B_{q}^{( \alpha )} - \alpha \nabla_B f \left( W_0+B_{q}^{(\alpha)}A_{q-j}^{(\alpha)} \right) \\
    A_{q+1}^{( \alpha )} &= A_{q}^{( \alpha )} - \alpha \nabla_A f \left( W_0+\left[ \lambda B_{q-j}^{(\alpha)} + (1-\lambda)B_{q-j+k}^{(\alpha)} \right]A_{q}^{(\alpha)} \right)
\end{align}
We can then calculate $e_{q+1}:$
\begin{align}
    e_{q+1} :&= \theta(t_{q+1}) - \Big( B_{q+1}^{(\alpha)} , A_{q+1}^{(\alpha)} \Big) \\
    &= \theta(t_q) - \alpha \nabla g \Big( \theta(t_q) \Big) + \tau_q \nonumber \\
    &\hspace{10pt} - \Bigg( B_{q}^{( \alpha )} - \alpha \nabla_B f \left( W_0+B_{q}^{(\alpha)}A_{q-j}^{(\alpha)} \right) \,  , \, A_{q}^{( \alpha )} - \alpha \nabla_A f \left( W_0+\left[ \lambda B_{q-j}^{(\alpha)} + (1-\lambda)B_{q-j+k}^{(\alpha)} \right]A_{q}^{(\alpha)} \right) \Bigg) \label{eq huge ordered pair}
\end{align}
We can rewrite the ordered pair in (\ref{eq huge ordered pair}) using the definitions of binary addition and scalar multiplication on $\Theta$ (see Remark \ref{rem product norm}) as
\begin{align}
    &\Bigg( B_{q}^{( \alpha )} - \alpha \nabla_B f \left( W_0+B_{q}^{(\alpha)}A_{q-j}^{(\alpha)} \right) \, , \, A_{q}^{( \alpha )} - \alpha \nabla_A f \left( W_0+\left[ \lambda B_{q-j}^{(\alpha)} + (1-\lambda)B_{q-j+k}^{(\alpha)} \right]A_{q}^{(\alpha)} \right) \Bigg) \\
    = &\Big( B_{q}^{( \alpha )} , A_{q}^{( \alpha )} \Big) - \alpha \Bigg( \nabla_B f \left( W_0+B_{q}^{(\alpha)}A_{q-j}^{(\alpha)} \right) , \nabla_A f \left( W_0+\left[ \lambda B_{q-j}^{(\alpha)} + (1-\lambda)B_{q-j+k}^{(\alpha)} \right]A_{q}^{(\alpha)} \right) \Bigg)
\end{align}
Then (\ref{eq huge ordered pair}) becomes
\begin{align}
    e_{q+1} &= \theta(t_q) - \alpha \nabla g \Big( \theta(t_q) \Big) + \tau_q \nonumber \\
    &\hspace{10pt} -\Big( B_{q}^{( \alpha )} , A_{q}^{( \alpha )} \Big) + \alpha \Bigg( \nabla_B f \left( W_0+B_{q}^{(\alpha)}A_{q-j}^{(\alpha)} \right) , \nabla_A f \left( W_0+\left[ \lambda B_{q-j}^{(\alpha)} + (1-\lambda)B_{q-j+k}^{(\alpha)} \right]A_{q}^{(\alpha)} \right) \Bigg) \\
    &= {\theta(t_q) - \Big( B_{q}^{( \alpha )} , A_{q}^{( \alpha )} \Big)} + \tau_q \nonumber \\
    &\hspace{10pt} +\alpha\left[\Bigg( \nabla_B f \left( W_0+B_{q}^{(\alpha)}A_{q-j}^{(\alpha)} \right) , \nabla_A f \left( W_0+\left[ \lambda B_{q-j}^{(\alpha)} + (1-\lambda)B_{q-j+k}^{(\alpha)} \right]A_{q}^{(\alpha)} \right) \Bigg) - \nabla g \Big( \theta(t_q) \Big)\right] \\
    &= e_q + \tau_q \nonumber \\
    &\hspace{10pt}+\alpha\left[\Bigg( \nabla_B f \left( W_0+B_{q}^{(\alpha)}A_{q-j}^{(\alpha)} \right) , \nabla_A f \left( W_0+\left[ \lambda B_{q-j}^{(\alpha)} + (1-\lambda)B_{q-j+k}^{(\alpha)} \right]A_{q}^{(\alpha)} \right) \Bigg) - \nabla g \Big( \theta(t_q) \Big)\right]
\end{align}
We can then bound the norm of $e_{q+1}$:
\begin{align}
    \norm{e_{q+1}} &\leq \norm{e_q} + \norm{\tau_q} \nonumber \\
    &\hspace{10pt}+ \alpha \norm{\Bigg( \nabla_B f \left( W_0+B_{q}^{(\alpha)}A_{q-j}^{(\alpha)} \right) , \nabla_A f \left( W_0+\left[ \lambda B_{q-j}^{(\alpha)} + (1-\lambda)B_{q-j+k}^{(\alpha)} \right]A_{q}^{(\alpha)} \right) \Bigg) - \nabla g \Big( \theta(t_q) \Big)} \label{eq giant difference of norms}
\end{align}
For brevity, denote
\begin{align}
    B_1 &:= B_{q}^{(\alpha)} \\
    A_1 &:= A_{q-j}^{(\alpha)} \\
    B_2 &:=  \lambda B_{q-j}^{(\alpha)} + (1-\lambda)B_{q-j+k}^{(\alpha)} \\
    A_2 &:= A_{q}^{(\alpha)}
\end{align}
We bound the norm of the third term in (\ref{eq giant difference of norms}) above:
\begin{align}
    &\norm{\Bigg( \nabla_B f \left( W_0 + B_1A_1 \right) , \nabla_A f \left( W_0 + B_2A_2 \right) \Bigg) - \nabla g \Big( \theta(t_q) \Big)} \\
    = &\norm{\Bigg( \nabla_B f \left( W_0 + B_1A_1 \right) - \nabla_Y f \Big( W_0 + Y(t_q)X(t_q) \Big) \, , \, \nabla_A f \left( W_0 + B_2A_2 \right) - \nabla_X f \Big( W_0 + Y(t_q)X(t_q)  \Big)  \Bigg)} \\
    = &\norm{\Bigg( \nabla_B f \left( W_0 + B_1A_1 \right) - \nabla_Y f \Big( W_0 + Y(t_q)X(t_q) \Big) \, , \, \mathbf{0}_{r \times m} \Bigg) + \Bigg( \mathbf{0}_{n \times r} \, , \, \nabla_A f \left( W_0 + B_2A_2 \right) - \nabla_X f \Big( W_0 + Y(t_q)X(t_q)  \Big)  \Bigg)} \\
    \leq &\norm{\Bigg( \nabla_B f \left( W_0 + B_1A_1 \right) - \nabla_Y f \Big( W_0 + Y(t_q)X(t_q) \Big) \, , \, \mathbf{0}_{r \times m} \Bigg)} + \norm{\Bigg( \mathbf{0}_{n \times r} \, , \, \nabla_A f \left( W_0 + B_2A_2 \right) - \nabla_X f \Big( W_0 + Y(t_q)X(t_q)  \Big)  \Bigg)} \\
    \leq &\norm{\nabla g(B_1,A_1) - \nabla g \Big( \theta(t_q) \Big)} + \norm{\nabla g(B_2,A_2) - \nabla g \Big( \theta(t_q) \Big)} \label{eq bounding sums of gradient differences}
\end{align}
By Assumption \ref{ass boundedness of iterates} and Remark \ref{rem existence of ODE solutions}, we know that
\begin{align}
    \norm{B_{q}^{(\alpha)}} &\leq R' \label{eq first bound} \\
    \norm{A_{q}^{(\alpha)}} &\leq R' \\
    \norm{A_{q-j}^{(\alpha)}} &\leq R' \\
    \norm{B_{q-j}^{(\alpha)}} &\leq R' \\
    \norm{B_{q-j+k}^{(\alpha)}} &\leq R' \\
    \norm{Y(t_q)} &\leq R_T \\
    \norm{X(t_q)} &\leq R_T \label{eq last bound}
\end{align}
We also have
\begin{align}
    \norm{\lambda B_{q-j}^{(\alpha)} + (1-\lambda)B_{q-j+k}^{(\alpha)}} \leq \lambda \norm{B_{q-j}^{(\alpha)}} + (1-\lambda)\norm{B_{q-j+k}^{(\alpha)}} \leq \lambda R' + (1-\lambda)R' = R' \label{eq convex combination is bounded}
\end{align}
Let $R_0 = \max(R', R_T)$. Then the ordered pairs above satisfy
\begin{align}
    \Big( B_{q}^{(\alpha)} , A_{q-j}^{(\alpha)} \Big) &\in \mathcal{D}_{R_0} \\
    \Big( \lambda B_{q-j}^{(\alpha)} + (1-\lambda)B_{q-j+k}^{(\alpha)} \, , \, A_{q}^{(\alpha)} \Big) &\in \mathcal{D}_{R_0} \\
    \Big( Y(t_q) , X(t_q) \Big) &\in \mathcal{D}_{R_0}
\end{align}
By the Lipschitz smoothness assumption in \ref{ass Lipschitz continuity}, this gives the existence of finite $L_{R_0} > 0$ such that
\begin{align}
    \norm{\nabla g(B_1, A_1) - \nabla g \Big( \theta(t_q) \Big)} &\leq L_{R_0}\norm{(B_1,A_1) - \theta(t_q)} \\
    \norm{\nabla g(B_2, A_2) - \nabla g \Big( \theta(t_q) \Big)} &\leq L_{R_0}\norm{(B_2,A_2) - \theta(t_q)}
\end{align}
Return to (\ref{eq bounding sums of gradient differences}). We have
\begin{align}
    &\norm{\Bigg( \nabla_B f \left( W_0 + B_1A_1 \right) , \nabla_A f \left( W_0 + B_2A_2 \right) \Bigg) - \nabla g \Big( \theta(t_q) \Big)} \\
    \leq &\norm{\nabla g(B_1, A_1) - \nabla g \Big( \theta(t_q) \Big)} + \norm{\nabla g(B_2, A_2) - \nabla g \Big( \theta(t_q) \Big)} \\
    \leq &L_{R_0}\norm{(B_1,A_1) - \theta(t_q)} +  L_{R_0}\norm{(B_2,A_2) - \theta(t_q)} \\
    = &L_{R_0} \Bigg( \norm{(B_1,A_1) - \theta(t_q)} +  \norm{(B_2,A_2) - \theta(t_q)} \Bigg) \label{eq sum of normed differences}
\end{align}
To bound the normed differences above, we can calculate
\begin{align}
    \norm{(B_1,A_1) - \theta(t_q)} &= \norm{(B_1,A_1) - \Big( B_q^{(\alpha)} , A_q^{(\alpha)} \Big)+\Big( B_q^{(\alpha)} , A_q^{(\alpha)} \Big) - \theta(t_q)} \\
    &\leq \norm{(B_1,A_1) - \Big( B_q^{(\alpha)} , A_q^{(\alpha)} \Big)} + \norm{\Big( B_q^{(\alpha)} , A_q^{(\alpha)} \Big) - \theta(t_q)} \\
    &= \norm{(B_{q}^{(\alpha)},A_{q-j}^{(\alpha)}) - \Big( B_q^{(\alpha)} , A_q^{(\alpha)} \Big)} + \norm{e_q} \\
    &= \norm{A_{q-j}^{(\alpha)} - A_q^{(\alpha)}} + \norm{e_q}
\end{align}
Similarly, we have
\begin{align}
    \norm{(B_2,A_2) - \theta(t_q)} &= \norm{(B_2,A_2) - \Big( B_q^{(\alpha)} , A_q^{(\alpha)} \Big)+\Big( B_q^{(\alpha)} , A_q^{(\alpha)} \Big) - \theta(t_q)} \\
    &\leq \norm{(B_2,A_2) - \Big( B_q^{(\alpha)} , A_q^{(\alpha)} \Big)} + \norm{\Big( B_q^{(\alpha)} , A_q^{(\alpha)} \Big) - \theta(t_q)} \\
    &= \norm{\Big(\lambda B_{q-j}^{(\alpha)} + (1-\lambda)B_{q-j+k}^{(\alpha)} \, , \, A_{q}^{(\alpha)}\Big) - \Big( B_q^{(\alpha)} , A_q^{(\alpha)} \Big)} + \norm{e_q} \\
    &= \norm{\lambda B_{q-j}^{(\alpha)} + (1-\lambda)B_{q-j+k}^{(\alpha)}-B_q^{(\alpha)}} + \norm{e_q}
\end{align}
So we get
\begin{align}
    \norm{(B_1,A_1) - \theta(t_q)} &\leq \norm{A_{q-j}^{(\alpha)} - A_q^{(\alpha)}} + \norm{e_q} \label{eq normed difference 1} \\
    \norm{(B_2,A_2) - \theta(t_q)} &\leq \norm{\lambda B_{q-j}^{(\alpha)} + (1-\lambda)B_{q-j+k}^{(\alpha)}-B_q^{(\alpha)}} + \norm{e_q} \label{eq normed difference 2}
\end{align}
Recall our update equations for the gradient descent iterates:
\begin{align}
    B_{q+1}^{( \alpha )} &= B_{q}^{( \alpha )} - \alpha \nabla_B f \left( W_0+B_{q}^{(\alpha)}A_{q-j}^{(\alpha)} \right) \\
    A_{q+1}^{( \alpha )} &= A_{q}^{( \alpha )} - \alpha \nabla_A f \left( W_0+\left[ \lambda B_{q-j}^{(\alpha)} + (1-\lambda)B_{q-j+k}^{(\alpha)} \right]A_{q}^{(\alpha)} \right)
\end{align}
By the bounded gradient assumption from \ref{ass boundedness of gradient}, there exists $M_{R_0}$ such that
\begin{align}
    \norm{B_{q+1}^{( \alpha )}-B_{q}^{( \alpha )}} \leq \alpha M_{R_0} \\
    \norm{A_{q+1}^{( \alpha )}-A_{q}^{( \alpha )}} \leq \alpha M_{R_0}
\end{align}
We can then calculate
\begin{align}
    \norm{A_q^{(\alpha)} - A_{q-j}^{(\alpha)}} &= \norm{A_q^{(\alpha)} - A_{q-j+1}^{(\alpha)} + A_{q-j+1}^{(\alpha)} - A_{q-j}^{(\alpha)}} \\
    &\leq \norm{A_q^{(\alpha)} - A_{q-j+1}^{(\alpha)}} + \norm{A_{q-j+1}^{(\alpha)} - A_{q-j}^{(\alpha)}} \\
    &\leq \norm{A_q^{(\alpha)} - A_{q-j+1}^{(\alpha)}} + \alpha M_{R_0}
\end{align}
Add and subtract the next iterate and apply the triangle inequality $j-1$ more times to obtain
\begin{align}
    \norm{A_q^{(\alpha)} - A_{q-j}^{(\alpha)}} \leq j\alpha M_{R_0}.
\end{align}
Or, more explicitly, we have
\begin{align}
A^{(\alpha)}_{q} - A^{(\alpha)}_{q-j}
&= \sum_{\ell=1}^{j}
   \Big( A^{(\alpha)}_{q-\ell+1} - A^{(\alpha)}_{q-\ell} \Big)
   \label{eq:telescoping-identity}
\end{align}
which gives
\begin{align}
\|A^{(\alpha)}_{q} - A^{(\alpha)}_{q-j}\|
&\le \sum_{\ell=1}^{j}
     \big\|A^{(\alpha)}_{q-\ell+1} - A^{(\alpha)}_{q-\ell}\big\|
     \label{eq:triangle} \\[0.5em]
&\le \sum_{\ell=1}^{j} \alpha M_{R_0}
     \label{eq:one-step-bound} \\[0.5em]
&= j \alpha M_{R_0}. \label{eq:final-telescope}
\end{align}
For $\norm{\lambda B_{q-j}^{(\alpha)} + (1-\lambda)B_{q-j+k}^{(\alpha)}  -B_q^{(\alpha)}}$, we obtain
\begin{align}
    \norm{\lambda B_{q-j}^{(\alpha)} + (1-\lambda)B_{q-j+k}^{(\alpha)}  -B_q^{(\alpha)}} &= \norm{\lambda B_{q-j}^{(\alpha)} + (1-\lambda)B_{q-j+k}^{(\alpha)}  - \lambda B_q^{(\alpha)} - (1-\lambda) B_q^{(\alpha)}} \\
    &\leq \lambda \norm{B_q^{(\alpha)}-B_{q-j}^{(\alpha)}} + (1-\lambda)\norm{B_{q-j+k}^{(\alpha)} - B_q^{(\alpha)}}
\end{align}
The same telescoping argument from (\ref{eq:telescoping-identity}--\ref{eq:final-telescope}) gives
\begin{align}
    \norm{B_q^{(\alpha)}-B_{q-j}^{(\alpha)}} &\leq j\alpha M_{R_0} \\
    \norm{B_{q-j+k}^{(\alpha)} - B_q^{(\alpha)}} &\leq (k-j)\alpha M_{R_0}
\end{align}
Thus, we have
\begin{align}
    \norm{\lambda B_{q-j}^{(\alpha)} + (1-\lambda)B_{q-j+k}^{(\alpha)}  -B_q^{(\alpha)}} &\leq \lambda \norm{B_q^{(\alpha)}-B_{q-j}^{(\alpha)}} + (1-\lambda)\norm{B_{q-j+k}^{(\alpha)} - B_q^{(\alpha)}} \\
    &\leq \lambda j\alpha M_{R_0} + (1-\lambda)(k-j)\alpha M_{R_0} \\
    &= \alpha M_{R_0}\left[ (2\lambda-1)j+(1-\lambda)k \right] \\
    &\leq \alpha M_{R_0}\left[ (2\lambda-1)k+(1-\lambda)k \right] \\
    &= \lambda k \alpha M_{R_0} \\
    &\leq k\alpha M_{R_0}
\end{align}
We have then obtained
\begin{align}
    \norm{A_{q-j}^{(\alpha)} - A_q^{(\alpha)}} &\leq j\alpha M_{R_0} \\
    \norm{\lambda B_{q-j}^{(\alpha)} + (1-\lambda)B_{q-j+k}^{(\alpha)}  -B_q^{(\alpha)}} &\leq k\alpha M_{R_0}
\end{align}
With the above bounds in mind, (\ref{eq normed difference 1}--\ref{eq normed difference 2}) become
\begin{align}
    \norm{(B_1,A_1) - \theta(t_q)} &\leq j\alpha M_{R_0} + \norm{e_q} \\
    \norm{(B_2,A_2) - \theta(t_q)} &\leq k\alpha M_{R_0} + \norm{e_q}
\end{align}
Returning to (\ref{eq sum of normed differences}), we finally obtain the upper bound on the third term in (\ref{eq giant difference of norms}):
\begin{align}
    &\norm{\Bigg( \nabla_B f \left( W_0 + B_1A_1 \right) , \nabla_A f \left( W_0 + B_2A_2 \right) \Bigg) - \nabla g \Big( \theta(t_q) \Big)} \\
    \leq &L_{R_0} \Bigg( \norm{(B_1,A_1) - \theta(t_q)} +  \norm{(B_2,A_2) - \theta(t_q)} \Bigg) \\
    \leq &L_{R_0} \Bigg( j\alpha M_{R_0} + \norm{e_q} + k\alpha M_{R_0} + \norm{e_q} \Bigg) \\
    &= L_{R_0} \Bigg( 2\norm{e_q} + (j+k)\alpha M_{R_0} \Bigg)
\end{align}
The upper bound on $\norm{e_{q+1}}$ finally becomes
\begin{align}
    \norm{e_{q+1}} &\leq \norm{e_q} + \norm{\tau_q} + \alpha L_{R_0} \Bigg( 2\norm{e_q} + (j+k)\alpha M_{R_0} \Bigg) \\
    &= \norm{e_q} + \norm{\tau_q}+2\alpha L_{R_0}\norm{e_q} + (j+k)M_{R_0}L_{R_0}\alpha^2 \\
    &= (1+2\alpha L_{R_0})\norm{e_q} + \norm{\tau_q} + (j+k)M_{R_0}L_{R_0}\alpha^2
\end{align}
Denote $L' :=  2L_{R_0}$ (note $L' > 0$) and recall the bound on $\norm{\tau_q}$ from (\ref{eq bound on tau_q}). We have
\begin{align}
    \norm{e_{q+1}} &\leq \left( 1+\alpha L' \right)\norm{e_q} + L_{R_T}M_T \alpha^2 + (j+k)M_{R_0}L_{R_0}\alpha^2 \\
    &= \left( 1+\alpha L' \right)\norm{e_q} + \Big( L_{R_T}M_T+(j+k)M_{R_0}L_{R_0} \Big)\alpha^2
\end{align}
Denote $C := L_{R_T}M_T+(j+k)M_{R_0}L_{R_0}$, and we arrive at
\begin{align}
    \norm{e_{q+1}} &\leq \left( 1+\alpha L' \right)\norm{e_q} + C\alpha^2
\end{align}
We can then recursively find an upper bound on $\norm{e_q}$ in terms of $\norm{e_0}$:
\begin{align}
    \norm{e_{q}} &\leq \left( 1+\alpha L' \right)\norm{e_{q-1}} + C\alpha^2 \\
    &\leq \left( 1+\alpha L' \right)\Big( \left( 1+\alpha L' \right)\norm{e_{q-2}} + C\alpha^2 \Big) + C\alpha^2 \\
    &\leq \left( 1+\alpha L' \right)\Bigg( \left( 1+\alpha L' \right)\Big( \left( 1+\alpha L' \right)\norm{e_{q-3}} + C\alpha^2 \Big) + C\alpha^2 \Bigg) + C\alpha^2
\end{align}
Applying the recursion $q$ times and collecting the coefficients of each \(C\alpha^{2}\) term produces
\begin{align}
  \norm{e_{q}}  &\leq  (1+\alpha L')^q\norm{e_0} + C\alpha^2 \sum\limits_{p = 0}^{q-1}(1+\alpha L')^{q-1-p}
\end{align}
Recall from (\ref{eq theta(0)}) that the initial conditions for our iterates and the ODE solution are the same, yielding $\norm{e_0} = 0$. We then have
\begin{align}
    \norm{e_{q}}  &\leq  C\alpha^2 \sum\limits_{p = 0}^{q-1}(1+\alpha L')^{q-1-p}
\end{align}
which we can re-index
\begin{align}
    \norm{e_{q}}  &\leq  C\alpha^2 \sum\limits_{p = 0}^{q-1}(1+\alpha L')^{p} \label{eq norm e_q finite geometric series}
\end{align}
Sum the finite geometric series above to find
\begin{align}
    \sum\limits_{p = 0}^{q-1}(1+\alpha L')^{p} &= \frac{1-(1+\alpha L')^q}{1-(1+\alpha L')} = \frac{(1+\alpha L')^q-1}{\alpha L'} \leq \frac{e^{q\alpha L'}-1}{\alpha L'}
\end{align}
where we obtain the inequality by treating $(1+\alpha L')$ as the truncated Taylor series for $e^{\alpha L'}$. Then (\ref{eq norm e_q finite geometric series}) becomes
\begin{align}
    \norm{e_{q}}  &\leq C \frac{e^{q\alpha L'}-1}{L'}\alpha
\end{align}
Denote $G := C \frac{e^{q\alpha L'}-1}{L'}$, and we have
\begin{align}
    \norm{\theta(t_q) - \left( B_q^{(\alpha)} \, , \, A_q^{(\alpha)} \right)} \leq G \alpha
\end{align}
for any $q = ik+j$ satisfying $0 \le q+1 \le \frac{T}{\alpha}$. 

Divide $[0,T]$ into the sub-intervals $[t_q, t_{q+1}]$. For any $t \in [t_q, t_{q+1}]$, we have
\begin{align}
    \norm{\theta_\alpha (t) - \theta (t)} &= \norm{\theta_\alpha (t) - \Big( B_q^{(\alpha)} , A_q^{(\alpha)} \Big) + \Big( B_q^{(\alpha)} , A_q^{(\alpha)} \Big) - \theta(t_q) + \theta(t_q) -\theta(t)} \\
    &\leq \norm{\theta_\alpha (t) - \Big( B_q^{(\alpha)} , A_q^{(\alpha)} \Big)} + \norm{\Big( B_q^{(\alpha)} , A_q^{(\alpha)} \Big) - \theta(t_q)} + \norm{\theta(t_q) -\theta(t)} \\
    &\leq \norm{\theta_\alpha (t) - \Big( B_q^{(\alpha)} , A_q^{(\alpha)} \Big)} + G\alpha + \norm{\theta(t_q) -\theta(t)}
\end{align}
Bound the first term by noting that $\theta_\alpha(t)$ is the affine interpolation between $\Big( B_q^{(\alpha)} , A_q^{(\alpha)} \Big)$ and $\Big( B_{q+1}^{(\alpha)} , A_{q+1}^{(\alpha)} \Big)$. Thus, $\left\lfloor \frac{t}{\alpha} \right\rfloor\alpha = q\alpha = t_q$. From (\ref{eq Y_alpha}--\ref{eq X_alpha}), we have
\begin{align}
    \theta_\alpha(t) &= \Big( B_q^{(\alpha)} , A_q^{(\alpha)} \Big) - \left( t - t_q \right) \Bigg(  \nabla_B f\left( W_0 + B_{q}^{(\alpha)} A_{q-j}^{(\alpha)} \right) \, , \, \nabla_A f\left( W_0 + \left[ \lambda B_{q-j}^{(\alpha)} + (1-\lambda)B_{q-j+k}^{(\alpha)} \right] A_{q}^{(\alpha)} \right) \Bigg) \label{eq first term}
\end{align}
Note from Assumption \ref{ass boundedness of iterates} that $\theta_a(t)$ and $\Big( B_q^{(\alpha)} , A_q^{(\alpha)} \Big)$ are elements of $\mathcal{D}_{R'}$. By the boundedness of the gradient assumption in \ref{ass boundedness of gradient}, we will have
\begin{align}
    &\norm{\Bigg(  \nabla_B f\left( W_0 + B_{q}^{(\alpha)} A_{q-j}^{(\alpha)} \right) \, , \, \nabla_A f\left( W_0 + \left[ \lambda B_{q-j}^{(\alpha)} + (1-\lambda)B_{q-j+k}^{(\alpha)} \right] A_{q}^{(\alpha)} \right) \Bigg)} \\
    = &\sqrt{\norm{\nabla_B f\left( W_0 + B_{q}^{(\alpha)} A_{q-j}^{(\alpha)} \right)}^2 + \norm{\nabla_A f\left( W_0 + \left[ \lambda B_{q-j}^{(\alpha)} + (1-\lambda)B_{q-j+k}^{(\alpha)} \right] A_{q}^{(\alpha)} \right)}^2} \\
    \leq & \sqrt{M_{R'}^2+M_{R'}^2} \\
    = & \sqrt{2}M_{R'}
\end{align}
Since $t \in [t_q , t_{q+1}]$, we will also have $t - t_q \leq \alpha$. Thus, from \ref{eq first term}, we get
\begin{align}
    \norm{\theta_\alpha(t)-\Big( B_q^{(\alpha)} , A_q^{(\alpha)} \Big)} \leq \sqrt{2}M_{R'}\alpha \label{eq first term bound}
\end{align}
To bound the third term, use the fact that
\begin{align}
    \frac{d\theta(t)}{dt} &= -\nabla g \Big( \theta(t) \Big)
\end{align}
From Remark \ref{rem uniform boundedness of gradient on time}, we have that both sides of the ODE above are bounded by $M_T$. We then get
\begin{align}
    \norm{\theta(t_q) -\theta(t)} = \norm{\theta(t) -\theta(t_q)} = \norm{\int\limits_{t_q}^t \frac{d\theta(s)}{ds} \, ds} \leq {\int\limits_{t_q}^t \norm{\frac{d\theta(s)}{ds}} \, ds} \leq M_T \int\limits_{t_q}^t  \, ds = M_T(t-t_q) \leq M_T\alpha \label{eq third term bound}
\end{align}
From (\ref{eq first term bound}) and (\ref{eq third term bound}), we arrive at
\begin{align}
    \norm{\theta_\alpha (t) - \theta (t)} & \leq \norm{\theta_\alpha (t) - \Big( B_q^{(\alpha)} , A_q^{(\alpha)} \Big)} + G\alpha + \norm{\theta(t_q) -\theta(t)} \\
    &\leq \sqrt{2}M_{R'}\alpha + G\alpha + M_T \alpha
\end{align}
Thus, we have
\begin{align}
    0 \leq \norm{\theta_\alpha (t) - \theta (t)} \leq \left( \sqrt{2}M_{R'} + G + M_T  \right)\alpha \label{eq final inequality}
\end{align}
For all $t \in [t_q , t_{q+1}]$, where $[t_q , t_{q+1}] \subseteq [0,T]$. Since the intervals $[t_q,t_{q+1}]$ cover $[0,T]$, the inequality holds 
for all $t \in [0,T]$. Taking the limit of (\ref{eq final inequality}) as $\alpha \to 0$, we finally have
\begin{align}
    \lim_{\alpha \to 0} \norm{\theta_\alpha (t) - \theta (t)} = 0
\end{align}
for all $t \in [0, T]$, where $T > 0$ is any finite number. We have therefore demonstrated that the continuous affine interpolation connecting the iterates of Algorithm \ref{alg: lora-gradient-descent} follow the behavior described by the LoRA gradient flow ODEs given in (\ref{eq dY/dt = -f}--\ref{eq X(0) = A_0}) in the limit that $\alpha$ approaches zero.


\onecolumn

\section{Proof of Remark \ref{rem product norm}} \label{app product norm definition}

Define 
\[
\Theta := \mathbb{R}^{n\times r} \times \mathbb{R}^{r\times m},
\]
with elements $\theta \in \Theta$ given as ordered pairs of matrices:
\begin{align}
    \theta &= (B,A), \\
    B &\in \mathbb{R}^{n \times r} \\
    A &\in \mathbb{R}^{r \times m}
\end{align}
Define the product norm for elements $\theta \in \Theta$ via
\[
\|\theta\| := \big( \|B\|^2 + \|A\|^2 \big)^{1/2}.
\]
We wish to show that for any $\theta \in \Theta$, $\norm{\theta}$ is a norm on $\Theta$. That is,
\vspace{-6pt}
\begin{enumerate}
    \item $\|\theta\| \ge 0$
    \item $\|\theta\| = 0$ iff $\theta=(\mathbf{0}_{n \times r},\mathbf{0}_{r \times m})$
    \item Define scalar multiplication on $\Theta$ as $\eta \theta = \left( \eta B , \eta A \right)$. Then $\|\eta \theta\| = |\eta|\,\|\theta\|$ for all $\eta \in \mathbb{R}$
    \item For any two elements $\theta_1 = (B_1,A_1)$ and $\theta_2 = (B_2, A_2)$ of $\Theta$, define the binary addition operation $\Theta \times \Theta \to \Theta$ as
    \begin{align}
        \theta_1 + \theta_2 &= (B_1 + B_2, \, A_1 + A_2)
    \end{align}
    Then $\|\theta_1 + \theta_2\| \le \|\theta_1\| + \|\theta_2\|$
\end{enumerate}
\textbf{Proof}: \\
Let $\theta = (A, B)$ be given. Then
\begin{align}
    \|\theta\| = \big( \|B\|^2 + \|A\|^2 \big)^{1/2}
\end{align}
By properties of matrix norms, we have $\| B \| \geq 0$ and $\| A \| \geq 0$. Then $\| B \|^2 \geq 0$ and $\| A \|^2 \geq 0$. This gives
\begin{align}
     &\|B\|^2 + \|A\|^2 \geq 0 \\
   &\big( \|B\|^2 + \|A\|^2 \big)^{1/2} \geq 0
\end{align}
and we have $\| \theta \| \geq 0$. This concludes the proof of 1.

Now suppose $\norm{\theta} = 0$. Then
\begin{align}
    \big( \|B\|^2 + \|A\|^2 \big)^{1/2} &= 0 \\
    \|B\|^2 + \|A\|^2 &= 0 \\
    \norm{B}^2 &= -\norm{A}^2
\end{align}
Now denote $c = \norm{B}^2 = -\norm{A}^2$. Since $\norm{B}^2 \geq 0$, we have $c \geq 0$. On the other hand, $-\norm{A}^2 \leq 0$ gives us $c \leq 0$. Thus, $c = \norm{B}^2 = -\norm{A}^2 = 0$. By properties of matrix norms, we then have
\begin{align}
    B &= \mathbf{0}_{n \times r} \\
    A &= \mathbf{0}_{r \times m}
\end{align}
and $\theta = ( \mathbf{0}_{n \times r} , \mathbf{0}_{r \times m} )$.

Suppose now that $\theta = ( \mathbf{0}_{n \times r} , \mathbf{0}_{r \times m} )$. Then
\begin{align}
    \norm{\theta} &= \left( \norm{\mathbf{0}_{n \times r}}^2 + \norm{\mathbf{0}_{r \times m}}^2 \right)^{1/2} \\
    &= \left( 0^2 + 0^2 \right)^{1/2} \\
    &= 0
\end{align}
and we have proven 2.

For any real number $\eta \in \mathbb{R}$, we will have
\begin{align}
    \norm{\eta \theta} &= \left( \norm{\eta B}^2 + \norm{\eta A}^2 \right)^{1/2} \\
    &= \left( \eta^2\norm{B}^2 + \eta^2\norm{A}^2 \right)^{1/2} \\
    &= \eta \left( \norm{B}^2 + \norm{A}^2 \right)^{1/2} \\
    &= \eta \norm{\theta}
\end{align}
and we have 3.

Finally, for any $\theta_1, \theta_2 \in \Theta$, denote
\begin{align}
    \theta_1 &= (B_1, A_1) \\
    \theta_2 &= (B_2, A_2)
\end{align}
We have
\begin{align}
    \norm{\theta_1+\theta_2} &= \norm{(B_1 + B_2, \, A_1 + A_2)} \\
    &= \left( \norm{B_1+B_2}^2 + \norm{A_1+A_2}^2 \right)^{1/2}
\end{align}
Construct $u, v \in \mathbb{R}^2$, where
\begin{align}
    u &= \Big( \norm{B_1} , \norm{A_1} \Big) \\
    v &= \Big( \norm{B_2} , \norm{A_2} \Big)
\end{align}
and note that
\begin{align}
    \norm{u}_2 &= \left({\norm{B_1}^2 + \norm{A_1}^2}\right)^{1/2} \\
    \norm{v}_2 &= \left({\norm{B_2}^2 + \norm{A_2}^2}\right)^{1/2} \\
    \norm{u+v}_2 &= \left( \left(\norm{B_1} + \norm{B_2}\right)^2 + \left(\norm{A_1} + \norm{A_2}\right)^2 \right)^{1/2}
\end{align}
where $\norm{\cdot}_2$ denotes the vector Euclidean norm. The triangle inequality gives
\begin{align}
    \norm{u+v}_2 \leq \norm{u}_2 + \norm{v}_2
\end{align}
Putting this all together, we have
\begin{align}
    \norm{\theta_1+\theta_2} &= \norm{(B_1 + B_2, \, A_1 + A_2)} \\
    &= \left( \norm{B_1+B_2}^2 + \norm{A_1+A_2}^2 \right)^{1/2} \\
    &\leq \left( \norm{B_1}^2 + 2\norm{B_1}\norm{B_2} + \norm{B_2}^2 + \norm{A_1}^2 + 2\norm{A_1}\norm{A_2}+\norm{A_2}^2 \right)^{1/2} \\
    &= \left( \left(\norm{B_1} + \norm{B_2}\right)^2 + \left(\norm{A_1} + \norm{A_2}\right)^2 \right)^{1/2} \\
    &\leq \left({\norm{B_1}^2 + \norm{A_1}^2}\right)^{1/2} + \left({\norm{B_2}^2 + \norm{A_2}^2}\right)^{1/2} \\
    &= \norm{\theta_1} + \norm{\theta_2}
\end{align}
and the proof of Remark \ref{rem product norm} is complete.



\onecolumn

\section{Learning Dynamics for Trace-Squared Loss (Low-rank)} \label{app learning dynamics for trace squared loss}

Let $W_0 \in \mathbb{R}^{n \times n}$ be a matrix of frozen pretraining weights. We wish to analyze the learning dynamics of the low-rank optimizer $BA \in \mathbb{R}^{n \times n}$ produced by applying LoRA to the finetuning problem
\begin{align}
    \underset{\substack{B \in \mathbb{R}^{n \times r} \\ A \in \mathbb{R}^{r \times n}}}{\min} \, \frac{1}{2}\Tr^2(W_0-BA) \label{eq trace squared objective}
\end{align}
where $r << n$. The partial gradients for our objective $g(B,A) = \dfrac{1}{2}\Tr^2(W_0-BA)$ are given by
\begin{align}
    \nabla_A g(B,A) &= -\Tr(W_0-BA)B^T \label{eq partial g A} \\
    \nabla_B g(B,A) &= -\Tr(W_0-BA)A^T \label{eq partial g B}
\end{align}
\begin{assumption}[Bounded Domain] \label{ass boundedness of iterates for Tr^2}
    We optimize (\ref{eq trace squared objective}) over the subspace $\mathcal{D}_{R'} \subseteq \Theta$, where $R' > 0$ is some finite number. In other words, we assume that there exists some $R' > 0$ such that the norms on both $B$ and $A$ remain bounded above by $R'$ during training.
\end{assumption}
In practical settings, Assumption 1.1 is automatically enforced by computational memory constraints (e.g., finite-precision arithmetic and fixed-parameter storage). The assumption above ensures that the objective gradient remains bounded during training:
\begin{remark}[Boundedness of Gradient for Trace Squared Loss] \label{rem boundedness of gradient Tr^2}
    The objective gradient $\nabla g(B,A)$ remains bounded above during training. That is, for all $(B,A) \in \mathcal{D}_{R'}$, we have
    \begin{align}
        \norm{\nabla g(B,A)}^2 &= \norm{\nabla_B g(B,A)}^2 + \norm{\nabla_A g(B,A)}^2 \\
        &= \norm{-\Tr(W_0-BA)A^T}^2 + \norm{-\Tr(W_0-BA)B^T}^2 \\
        &\leq \Tr^2(W_0-BA) \Big( \norm{A^T}^2 + \norm{B^T}^2  \Big) \\
        &\leq 2R'^2\Tr^2(W_0-BA) \\
        &\leq 2R'^2 \norm{W_0 - BA}^2 \\
        &\leq 2R'^2 \Big( \norm{W_0} + \norm{BA}  \Big)^2 \\
        &\leq 2R'^2 \Big( \norm{W_0} + \norm{B}\norm{A}  \Big)^2 \\
        &\leq 2R'^2 \Big( \norm{W_0} + R'^2  \Big)^2
    \end{align}
    So $\norm{\nabla g(B,A)} \leq \sqrt{2}R' \Big( \norm{W_0} + R'^2  \Big)$ throughout training.
\end{remark}
We then have Lipschitz smoothness during training:
\begin{lemma}[Lipschitz Smoothness for Trace Squared Objective] \label{lem Lipschitz smoothness Tr^2}
    Our objective gradient $\nabla g$ is Lipschitz smooth in our training domain. Namely, there exists $L_{R'} > 0$ such that, for any $(B_1,A_1) , (B_2,A_2) \in \mathcal{D}_{R'}$, we have
    \begin{align}
        \norm{\nabla g(B_1,A_1) - \nabla g(B_2, A_2)} \leq L_{R'} \norm{ \Big( B_1, A_1 \Big) -  \Big( B_2, A_2 \Big) }
    \end{align}
    Proof of this lemma can be found in Appendix \ref{app Lipschitz smoothness Tr^2}.
\end{lemma}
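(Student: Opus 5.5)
Because $\nabla g$ is an explicit polynomial map in the entries of $(B,A)$, the plan is to bound each block of the difference $\nabla g(B_1,A_1)-\nabla g(B_2,A_2)$ directly on $\mathcal{D}_{R'}$. Set $\tau_\ell := \Tr(W_0 - B_\ell A_\ell)$ for $\ell=1,2$. By \eqref{eq partial g A}--\eqref{eq partial g B},
\begin{align}
\nabla g(B_1,A_1)-\nabla g(B_2,A_2) = \Big( \tau_2 A_2^T - \tau_1 A_1^T \,,\, \tau_2 B_2^T - \tau_1 B_1^T \Big).
\end{align}
Using Remark~\ref{rem product norm}, the product norm of this pair is at most the sum of the Frobenius norms of its two components, so I will bound each component separately.

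For the $B$-component, I add and subtract $\tau_1 A_2^T$:
\begin{align}
\tau_2 A_2^T - \tau_1 A_1^T = (\tau_2-\tau_1)A_2^T + \tau_1 (A_2 - A_1)^T .
\end{align}
This reduces the problem to two scalar estimates.

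\begin{enumerate}
\item \emph{Bound on $|\tau_1|$.} Exactly as in Remark~\ref{rem boundedness of gradient Tr^2}, $|\tau_1| \le \sqrt{n}\,\norm{W_0-B_1A_1} \le \sqrt{n}\,(\norm{W_0}+R'^2) =: K$. The factor $\sqrt n$ comes from $|\Tr M|\le \sqrt n \norm{M}$ by Cauchy--Schwarz; the remark omits it, but it only changes constants.
\item \emph{Lipschitz bound on $\tau$.} Here $|\tau_2-\tau_1| = |\Tr(B_1A_1 - B_2A_2)| \le \sqrt n\,\norm{B_1A_1-B_2A_2}$. Writing $B_1A_1 - B_2A_2 = (B_1-B_2)A_1 + B_2(A_1-A_2)$ and using submultiplicativity of the Frobenius norm gives
\begin{align}
|\tau_2-\tau_1| \le \sqrt n\, R'\big(\norm{B_1-B_2}+\norm{A_1-A_2}\big) \le \sqrt{2n}\,R'\,\norm{(B_1,A_1)-(B_2,A_2)} ,
\end{align}
where the last step is $a+b\le\sqrt2\sqrt{a^2+b^2}$.
\end{enumerate}

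Combining these, the $B$-component satisfies $\norm{\tau_2A_2^T-\tau_1A_1^T} \le \big(\sqrt{2n}R'^2 + K\big)\norm{(B_1,A_1)-(B_2,A_2)}$, since $\norm{A_2-A_1}\le\norm{(B_1,A_1)-(B_2,A_2)}$. The $A$-component is handled identically with the roles of $A$ and $B$ swapped. Summing the two bounds yields
\begin{align}
L_{R'} = 2\big(\sqrt{2n}\,R'^2 + \sqrt n(\norm{W_0}+R'^2)\big).
\end{align}

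No step is a genuine obstacle. The one point needing care is converting the trace to the Frobenius norm (the $\sqrt n$ factor) and then converting the sum $\norm{\Delta B}+\norm{\Delta A}$ into the product norm of Remark~\ref{rem product norm}; both are elementary inequalities.
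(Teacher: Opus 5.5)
Your proof is correct and follows essentially the same route as the paper's: the same add-and-subtract of $\Tr(W_0-B_1A_1)A_2^T$, the same splitting of $B_1A_1-B_2A_2$ into two single-difference products, and the same bounds on $\mathcal{D}_{R'}$. The one difference is how traces are bounded. The paper applies Cauchy--Schwarz directly, $|\Tr(PQ)|\le\norm{P}\norm{Q}$, and keeps $|\Tr(W_0)|$, which gives the dimension-free constant $L_{R'}=2(|\Tr(W_0)|+2R'^2)$. Your route through $|\Tr M|\le\sqrt n\norm{M}$ picks up a harmless $\sqrt n$; you are also right that Remark~\ref{rem boundedness of gradient Tr^2} drops that factor, but the paper's Lipschitz proof does not rely on it.
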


Having shown that the trace squared objective in (\ref{eq trace squared objective}) satisfies Assumptions \ref{ass boundedness of iterates}, \ref{ass boundedness of gradient}, and \ref{ass Lipschitz continuity} from Appendix \ref{app LoRA GF Proof}, the ODEs describing the learning dynamics of $g(B,A)$ under LoRA are given by (see appendix \ref{app LoRA GF Proof})
\begin{align}
    \frac{dY(t)}{dt} &= \Tr(W_0-YX)X^T \label{eq Tr^2 GF Y(t)} \\
    \frac{dX(t)}{dt} &= \Tr(W_0-YX)Y^T \label{eq Tr^2 GF X(t)}
\end{align}
for any $t \in [0,T]$, where $T > 0$ is arbitrary.
Denote the initial conditions for this problem as
\begin{align}
    Y(0) &= Y_0 \\
    X(0) &= X_0
\end{align}
and note by application of Remark \ref{rem existence of ODE solutions} that a solution to the ODE above exists for all $t \in [0,T]$.
Evaluating (\ref{eq Tr^2 GF Y(t)}-\ref{eq Tr^2 GF X(t)}) at $t = 0$ yields the initial derivatives
\begin{align}
    \frac{dY(t)}{dt}\Big|_{t = 0} &= \Tr(W_0-Y_0X_0)X_0^T \\
    \frac{dX(t)}{dt}\Big|_{t = 0} &= \Tr(W_0-Y_0X_0)Y_0^T \label{eq dx/dt initialization}
\end{align}
Following the initialization scheme commonly used in LoRA literature \cite{Hu2022LoRA, xu2025understanding}, we initialize $Y_0 = \mathbf{0}_{n \times r}$, where $\mathbf{0}_{n \times r}$ denotes the $n \times r$ matrix of all zeroes. Thus, all calculations going forward assume $Y_0 = \mathbf{0}_{n \times r}$. We initialize the elements of $X_0$ from a random normal distribution centered at zero with variance $\sigma^2$. In other words,
\begin{align}
    x_{ij}(0) \sim \mathcal{N}(0, \sigma^2) \label{eq distribution appendix}
\end{align}
where $x_{ij}$ are the individual elements of $X$.
\begin{assumption}[Nonzero $X_0$]
    Assume $x_{ij}(0) \neq 0$ for at least one $x_{ij}(0) \in X_0$, which holds almost surely for the Gaussian initialization in \eqref{eq distribution appendix}. \label{ass X nonzero}
\end{assumption}
\begin{assumption}[Nonzero $W_0$]
    The trace of the matrix of prefrozen weights in nonzero. In other words, $\Tr(W_0) \neq 0$. \label{ass trace W_0 nonzero}
\end{assumption}
Violation of either Assumption~$\ref{ass X nonzero}$ or \ref{ass trace W_0 nonzero} causes the LoRA dynamics to stall at the saddle point $(\mathbf{0}_{n \times r}, \mathbf{0}_{r \times n})$, yielding $\mathbf{0}_{n \times n}$ as the optimizing matrix in either case. Our gradient flow analysis then proves trivial. To converge to the rank $r$ optimizer, our weight matrix and initial conditions must satisfy Assumptions \ref{ass X nonzero} and \ref{ass trace W_0 nonzero}.

With proper initialization, we proceed with our solution to the problem described in (\ref{eq Tr^2 GF Y(t)}-\ref{eq dx/dt initialization}) by first solving for the dynamics of $\Tr(W_0-YX)$ for $t \in [0,T]$. Right multiply both sides of (\ref{eq Tr^2 GF Y(t)}) by $X$, and left multiply both sides of (\ref{eq Tr^2 GF X(t)}) by $Y$ to get
\begin{align}
    \frac{dY}{dt}X &= \Tr(W_0-YX)X^TX \\
    Y\frac{dX}{dt} &= \Tr(W_0-YX)YY^T
\end{align}
Add together the two equations above to get
\begin{align}
    \frac{dY}{dt}X + Y\frac{dX}{dt} &= \Tr(W_0-YX) \left[ X^TX+YY^T \right] \\
    \frac{d(YX)}{dt} &= \Tr(W_0-YX) \left[ X^TX+YY^T \right] \\
    -\frac{d(YX)}{dt} &= -\Tr(W_0-YX) \left[ X^TX+YY^T \right]
\end{align}
Since $\dfrac{dW_0}{dt} = 0$ (pretrained weights are frozen), this is equivalent to
\begin{align}
    \frac{d}{dt}(W_0-YX) &= -\Tr(W_0-YX) \left[ X^TX+YY^T \right] \label{eq W_0-YX}
\end{align}
Take the trace of the both sides of (\ref{eq W_0-YX}) to arrive at
\begin{align}
    \frac{d}{dt}\Tr(W_0-YX) &= -\Tr(W_0-YX)\Tr\left( X^TX + YY^T \right) \label{eq sick of naming equations}
\end{align}
Apply the $\frac{d}{d t}$ operator to both sides of the equation above to arrive at
\begin{align}
    \frac{d^2}{dt^2}\Tr(W_0-YX) &= -\frac{d}{dt}\left[ \Tr(W_0-YX) \right]\Tr\left( X^TX + YY^T \right) - \Tr(W_0-YX) \frac{d}{dt}\left[ \Tr\left( X^TX + YY^T \right) \right] \label{eq almost at a(t)}
\end{align}

To derive an ODE for $\Tr(W_0-YX)$, it remains to find expressions for $\frac{d}{dt}\Tr\left( X^TX + YY^T \right)$ and $\Tr\left( X^TX + YY^T \right)$ in terms of $\Tr(W_0-YX)$. Return to (\ref{eq Tr^2 GF Y(t)}). Right multiply both sides of our ODE by $Y^T$ to find
\begin{align}
    \frac{dY}{dt}Y^T &= \Tr(W_0-YX)X^TY^T \label{eq hmmmm}
\end{align}
Take the transpose of both sides to get
\begin{align}
    Y\frac{dY^T}{dt} &= \Tr(W_0-YX)YX \label{eq hmmmmmmmmmmm}
\end{align}
Add (\ref{eq hmmmm}) and (\ref{eq hmmmmmmmmmmm}) together to find
\begin{align}
    \frac{dY}{dt}Y^T + Y\frac{dY^T}{dt} &= \Tr(W_0-YX)\left(X^TY^T + YX \right) \\
    \frac{d\left( YY^T  \right)}{dt} &= \Tr(W_0-YX)\left(X^TY^T + YX \right)
\end{align}
Taking the trace of both sides, we have
\begin{align}
    \frac{d}{dt}\Tr\left( YY^T \right) &= \Tr(W_0-YX)\Tr\left(X^TY^T + YX \right) \\
    &= 2\Tr(W_0-YX)\Tr\left( YX \right) \label{eq amm}
\end{align}
The same calculation for (\ref{eq Tr^2 GF X(t)}) immediately gives
\begin{align}
    \frac{d}{dt} \Tr\left( X^TX \right) &= 2\Tr(W_0-YX)\Tr\left( YX \right) \label{eq ammmmmmmmm}
\end{align}
Add together (\ref{eq amm}) and (\ref{eq ammmmmmmmm}) to get
\begin{align}
    \frac{d}{dt} \Tr\left( YY^T+X^TX  \right) &= 4\Tr(W_0-YX)\Tr(YX) \label{eq d/dt Tr(XX+YY)}
\end{align}
We now find an expression for $\Tr\left( X^TX + YY^T \right)$ in terms of $\Tr(W_0-YX)$. Return to (\ref{eq sick of naming equations}) and give the following assumption:
\begin{remark}[$\Tr(W_0-YX)$ Nonzero during Training]\label{rem Tr^2 nonzero during training}
    We have that $\Tr(W_0-YX)$ is nonzero during training. That is, for all $t$ prior to convergence, we have
    \begin{align}
        \Tr\Big(W_0-Y(t)X(t)\Big) \neq 0 \label{eq trace nonzero during training}
    \end{align}
    Note from the partial gradients of $g$ in (\ref{eq partial g A}--\ref{eq partial g B}) that $\Tr(W_0-XY) = 0$ only at critical points of $g$. Therefore, (\ref{eq trace nonzero during training}) must hold during training, or, in other words, wherever
    \begin{align}
        \norm{ \frac{dY(t)}{dt} }  + \norm{ \frac{dX(t)}{dt}  } \neq 0
    \end{align}
\end{remark}

With this remark in mind, we can divide both sides of (\ref{eq sick of naming equations}) by $-\Tr(W_0-YX)$ to find
\begin{align}
    \Tr\left( YY^T+X^TX \right) &= -\frac{\frac{d}{dt}\Tr(W_0-YX)}{\Tr(W_0-YX)} \label{eq Tr(XX+YY)}
\end{align}
Finally, we substitute (\ref{eq d/dt Tr(XX+YY)}) and (\ref{eq Tr(XX+YY)}) into (\ref{eq almost at a(t)}) to arrive at
\begin{align}
    \frac{d^2}{dt^2}\Tr(W_0-YX) &= -\frac{d}{dt}\left[ \Tr(W_0-YX) \right]\cdot -\frac{\frac{d}{dt}\Tr(W_0-YX)}{\Tr(W_0-YX)} - \Tr(W_0-YX) \cdot 4\Tr(W_0-YX)\Tr(YX) \\
    \frac{d^2}{dt^2}\Tr(W_0-YX) &= \frac{\Big( \frac{d}{dt} \Tr(W_0-YX) \Big)^2}{\Tr(W_0-YX)} - 4\Tr^2(W_0-YX)\Tr(YX) \\
    \frac{d^2}{dt^2}\Tr(W_0-YX) &=  \frac{\Big( \frac{d}{dt} \Tr(W_0-YX) \Big)^2}{\Tr(W_0-YX)} + 4\Tr^2(W_0-YX)\Tr(-YX) \\
    \frac{d^2}{dt^2}\Tr(W_0-YX) &=  \frac{\Big( \frac{d}{dt} \Tr(W_0-YX) \Big)^2}{\Tr(W_0-YX)} + 4\Tr^2(W_0-YX)\Big(\Tr(W_0-YX) - \Tr(W_0)\Big) \\
    \frac{d^2}{dt^2}\Tr(W_0-YX) &=  \frac{\Big( \frac{d}{dt} \Tr(W_0-YX) \Big)^2}{\Tr(W_0-YX)} + 4\Tr^3(W_0-YX)-4\Tr(W_0)\Tr^2(W_0-YX)
\end{align}
Denote
\begin{align}
    a(t) &:= \Tr\Big(W_0 - Y(t)X(t)\Big) \\
    c &:= \Tr(W_0)
\end{align}
We then have the initial-value problem
\begin{align}
    a''(t) &=  \frac{\Big( a'(t) \Big)^2}{a(t)} + 4a^3(t)-4ca^2(t) \label{eq a(t) ODE 1}
\end{align}
\begin{align}
    a(0) &= \Tr \Big( W_0-Y_0X_0 \Big) = \Tr \left(W_0 \right) \\
    a'(0) &= -\Tr \Big( W_0-Y_0X_0 \Big)\Tr\left(Y_0Y_0^T + X_0^TX_0 \right) \quad \Big( \text{ from  (\ref{eq sick of naming equations})}\Big) \\
    &= -\Tr \Big( W_0 \Big)\norm{X_0}^2 \label{eq a(t) ODE 2}
\end{align}
In Appendix (\ref{app solution to a(t) ODE}), we show that the closed-form solution to the IVP above is given on $[0,T]$ by
\begin{align}
    a(t) &= \sgn (c)\frac{\norm{X_0}^4+4c^2}{2\norm{X_0}^2\sinh\left( \sqrt{\kappa_1}(t+\kappa_2) \right)+4|c|} \label{eq a(t) final}
\end{align}
where the constants $\kappa_1$ and $\kappa_2$ are given by
\begin{align}
    \kappa_1 &= \norm{X_0}^4 +4c^2 \\
    \kappa_2 &= \frac{1}{\sqrt{\norm{X_0}^4+4c^2}}\arsinh\left(\frac{\norm{X_0}^2}{2|c|}\right)
\end{align}
\begin{remark}[Nonzero Denominator] \label{rem nonzero denominator}
    Note that the denominator in (\ref{eq a(t) final}) is strictly positive at initialization and increases monotonically in $t$. Consequently, $a(t)$ is well-defined for all $t \in [0,T]$.
\end{remark}
Returning to (\ref{eq Tr^2 GF Y(t)}--\ref{eq Tr^2 GF X(t)}), our ODEs decouple to give
\begin{align}
    \frac{dY(t)}{dt} &= a(t)X^T \label{eq a(t)X^T} \\
    \frac{dX(t)}{dt} &= a(t)Y^T \label{eq a(t)Y^T}
\end{align}
Apply the $\dfrac{d}{dt}$ operator to both sides of (\ref{eq a(t)X^T}). We have
\begin{align}
    \frac{d^2Y(t)}{dt^2} &= a'(t)X^T + a(t)\frac{dX^T}{dt} \label{eq a'(t)X^T + a(t)}
\end{align}
Take the transpose of (\ref{eq a(t)Y^T}) and plug it into (\ref{eq a'(t)X^T + a(t)}) above to find
\begin{align}
    \frac{d^2Y(t)}{dt^2} &= a'(t)X^T + a(t)\frac{dX^T}{dt} \\
    \frac{d^2Y(t)}{dt^2} &= a'(t)X^T + a^2(t)Y \label{eq asdflkj}
\end{align}
Dividing both sides of ($\ref{eq a(t)X^T}$) by $a(t)$ yields
\begin{align}
    X^T &= \frac{1}{a(t)}\frac{dY}{dt}
\end{align}
Then (\ref{eq asdflkj}) becomes
\begin{align}
    \frac{d^2Y(t)}{dt^2} &= \frac{a'(t)}{a(t)}\frac{dY}{dt} + a^2(t)Y
\end{align}
We are thus left with the linear, decoupled ODE for $Y(t)$:
\begin{align}
    \frac{d^2Y}{dt^2} - \frac{a'(t)}{a(t)}\frac{dY}{dt} - a^2(t)Y &= 0
\end{align}
where $a(t)$ is given in (\ref{eq a(t) final}). Taking a derivative of $a(t)$ yields
\begin{align}
    a'(t) &= \Big( \norm{X_0}^4+4c^2 \Big) \cdot -\sgn(c)\frac{2\norm{X_0}^2\sqrt{\kappa_1}\cosh{\left( \sqrt{\kappa_1}(t+\kappa_2) \right)}}{\Big( 2\norm{X_0}^2\sinh\left( \sqrt{\kappa_1}(t+\kappa_2) \right)+4|c| \Big)^2} \\
    &=  -\sgn(c)\frac{2\norm{X_0}^2\Big( \norm{X_0}^4+4c^2 \Big)^{3/2}\cosh{\left( \sqrt{\kappa_1}(t+\kappa_2) \right)}}{\Big( 2\norm{X_0}^2\sinh\left( \sqrt{\kappa_1}(t+\kappa_2) \right)+4|c| \Big)^2}
\end{align}
Then we can calculate
\begin{align}
    \frac{a'(t)}{a(t)} &= -\sgn(c)\frac{2\norm{X_0}^2\Big( \norm{X_0}^4+4c^2 \Big)^{3/2}\cosh{\left( \sqrt{\kappa_1}(t+\kappa_2) \right)}}{\Big( 2\norm{X_0}^2\sinh\left( \sqrt{\kappa_1}(t+\kappa_2) \right)+4|c| \Big)^2} \cdot \sgn(c)\frac{2\norm{X_0}^2\sinh\left( \sqrt{\kappa_1}(t+\kappa_2) \right)+4|c|}{\norm{X_0}^4+4c^2} \\
    &= -\frac{\norm{X_0}^2\Big( \norm{X_0}^4+4c^2 \Big)^{1/2}\cosh{\left( \sqrt{\kappa_1}(t+\kappa_2) \right)}}{\norm{X_0}^2\sinh\left( \sqrt{\kappa_1}(t+\kappa_2) \right)+2|c|} \\
    &= -\frac{\norm{X_0}^2\sqrt{\kappa_1}\cosh{\left( \sqrt{\kappa_1}(t+\kappa_2) \right)}}{\norm{X_0}^2\sinh\left( \sqrt{\kappa_1}(t+\kappa_2) \right)+2|c|}
\end{align}
We also have
\begin{align}
    a^2(t) &= \left( \frac{\norm{X_0}^4+4c^2}{2\norm{X_0}^2\sinh\left( \sqrt{\kappa_1}(t+\kappa_2) \right)+4|c|} \right)^2 \\
    &= \frac{\kappa_1^2}{\Big(2\norm{X_0}^2\sinh\left( \sqrt{\kappa_1}(t+\kappa_2) \right)+4|c|\Big)^2}
\end{align}
Our next task is to solve the following ODE for $Y(t)$:
\begin{align}
    \frac{d^2Y(t)}{dt^2} - \frac{a'(t)}{a(t)}\frac{dY(t)}{dt} - a^2(t)Y(t) &= 0 .\label{eq big ODE to solve}
\end{align}
subject to the initial conditions
\begin{align}
    Y(0) = Y_0, \qquad 
    \frac{dY(t)}{dt}\Big|_{t = 0} = \Tr(W_0-Y_0X_0)X_0^T .
\end{align}
Although we adopt the initialization $Y_0 = \mathbf{0}_{n \times r}$, we keep $Y_0$ general in the calculation below to obtain a unified expression for the corresponding solution $X(t)$.

Let 
\begin{align}
    s(t) &= 2\norm{X_0}^2\sinh\left( \sqrt{\kappa_1}(t+\kappa_2) \right)+4|c| \\
    s'(t) &= 2\sqrt{\kappa_1}\norm{X_0}^2\cosh\left( \sqrt{\kappa_1}(t+\kappa_2) \right) \\
    s''(t) &= 2\kappa_1\norm{X_0}^2\sinh\left( \sqrt{\kappa_1}(t+\kappa_2) \right)
\end{align}
Then (\ref{eq big ODE to solve}) becomes
\begin{align}
    \frac{d^2Y}{dt^2} + \frac{s'(t)}{s(t)}\frac{dY}{dt}-\frac{\kappa_1^2}{s^2(t)} Y &= 0
\end{align}
Multiply both sides of the equation above by $s(t)$ to find
\begin{align}
    s(t)\frac{d^2Y}{dt^2} + s'(t)\frac{dY}{dt} - \frac{\kappa_1^2}{s(t)}Y &= 0 \label{eq big ODE number 2}
\end{align}
Note that $s(t) > 0$ on $[0,T]$ and make the substitution
\begin{align}
    Y(t) &= \frac{U(t)}{\sqrt{s(t)}} = s^{-1/2}(t)U(t) \\
    Y'(t) &= -\frac{1}{2}s^{-3/2}(t)s'(t)U(t) + s^{-1/2}(t)U'(t) \\
    Y''(t) &= \frac{3}{4}s^{-5/2}(t)s'^2(t)U(t)-\frac{1}{2}s^{-3/2}(t)s''(t)U(t)-\frac{1}{2}s^{-3/2}(t)s'(t)U'(t)-\frac{1}{2}s^{-3/2}(t)s'(t)U'(t)+s^{-1/2}(t)U''(t)
\end{align}
Then we have
\begin{align}
    s(t)Y''(t) &= \frac{3}{4}s^{-3/2}(t)s'^2(t)U(t)-\frac{1}{2}s^{-1/2}(t)s''(t)U(t)-\frac{1}{2}s^{-1/2}(t)s'(t)U'(t)-\frac{1}{2}s^{-1/2}(t)s'(t)U'(t)+s^{1/2}(t)U''(t)
\end{align}
as well as
\begin{align}
    s'(t)Y'(t) &= -\frac{1}{2}s^{-3/2}(t)s'^2(t)U(t) + s^{-1/2}(t)s'(t)U'(t)
\end{align}
and
\begin{align}
    \frac{\kappa_1^2}{s(t)}Y &= \kappa_1^2s^{-3/2}(t)U(t)
\end{align}
Then (\ref{eq big ODE number 2}) becomes
\begin{align}
&\frac{3}{4}\, s^{-3/2}(t)\, s'^2(t)\, U(t)
-\frac{1}{2}\, s^{-1/2}(t)\, s''(t)\, U(t)
-\frac{1}{2}\, s^{-1/2}(t)\, s'(t)\, U'(t)
-\frac{1}{2}\, s^{-1/2}(t)\, s'(t)\, U'(t) \\
&\quad
+ s^{1/2}(t)\, U''(t)
-\frac{1}{2}\, s^{-3/2}(t)\, s'^2(t)\, U(t)
+ s^{-1/2}(t)\, s'(t)\, U'(t)
- \kappa_1^{2}\, s^{-3/2}(t)\, U(t)
= 0 .
\end{align}
or, after combining like terms,
\begin{align}
    \frac{1}{4}s^{-3/2}(t)\, s'^2(t)\, U(t)-\frac{1}{2}\, s^{-1/2}(t)\, s''(t)\, U(t)+ s^{1/2}(t)\, U''(t) - \kappa_1^{2}\, s^{-3/2}(t)\, U(t) = 0
\end{align}
Multiply both sides by $s^{-1/2}(t)$ and collect all the $U(t)$ terms to arrive at
\begin{align}
    U''(t) + \left(\frac{1}{4}s^{-2}(t)\, s'^2(t) -\frac{1}{2}\, s^{-1}(t)\, s''(t) - \kappa_1^{2}\, s^{-2}(t) \right)U(t) &= 0 \\
    U''(t) + \left(  \frac{s'^2(t)-2s(t)s''(t)-4\kappa_1^2}{4s^2(t)} \right)U(t) &= 0 \label{eq U ode}
\end{align}
The coefficient on our $U(t)$ term simplifies:
{\small
\[
\begin{aligned}
\frac{s'(t)^2 - 2s(t)s''(t) - 4\kappa_1^2}{4s(t)^2}
&=
\frac{
4\kappa_1\|X_0\|^4 \cosh^2\!\big(\sqrt{\kappa_1}(t+\kappa_2)\big)
- 4\kappa_1\|X_0\|^2 \sinh\!\big(\sqrt{\kappa_1}(t+\kappa_2)\big)
\Big(
2\|X_0\|^2 \sinh\!\big(\sqrt{\kappa_1}(t+\kappa_2)\big)
+ 4|c|
\Big)
- 4\kappa_1^2
}{
4\Big(
2\|X_0\|^2 \sinh\!\big(\sqrt{\kappa_1}(t+\kappa_2)\big)
+ 4|c|
\Big)^2
}
\\[6pt]
\end{aligned}
\]
}
{\small
\[
\begin{aligned}
&=
\frac{\kappa_1}{4} \cdot 
\frac{
4\|X_0\|^4 \cosh^2\!\big(\sqrt{\kappa_1}(t+\kappa_2)\big)
- 8\|X_0\|^4 \sinh^2\!\big(\sqrt{\kappa_1}(t+\kappa_2)\big)
- 16|c|\|X_0\|^2 \sinh\!\big(\sqrt{\kappa_1}(t+\kappa_2)\big)
- 4\kappa_1
}{
4\|X_0\|^4 \sinh^2\!\big(\sqrt{\kappa_1}(t+\kappa_2)\big)
+ 16|c|\|X_0\|^2 \sinh\!\big(\sqrt{\kappa_1}(t+\kappa_2)\big)
+ 16c^2
}
\\[6pt]
&=
\frac{\kappa_1}{4} \cdot 
\frac{
4\|X_0\|^4 \cosh^2\!\big(\sqrt{\kappa_1}(t+\kappa_2)\big)
- 4\|X_0\|^4 \sinh^2\!\big(\sqrt{\kappa_1}(t+\kappa_2)\big)
- 4\|X_0\|^4 \sinh^2\!\big(\sqrt{\kappa_1}(t+\kappa_2)\big)
- 16|c|\|X_0\|^2 \sinh\!\big(\sqrt{\kappa_1}(t+\kappa_2)\big)
- 4\kappa_1
}{
4\|X_0\|^4 \sinh^2\!\big(\sqrt{\kappa_1}(t+\kappa_2)\big)
+ 16|c|\|X_0\|^2 \sinh\!\big(\sqrt{\kappa_1}(t+\kappa_2)\big)
+ 16c^2
}
\\[6pt]
&=
\frac{\kappa_1}{4} \cdot 
\frac{
4\|X_0\|^4
- 4\|X_0\|^4 \sinh^2\!\big(\sqrt{\kappa_1}(t+\kappa_2)\big)
- 16|c|\|X_0\|^2 \sinh\!\big(\sqrt{\kappa_1}(t+\kappa_2)\big)
- 4\kappa_1
}{
4\|X_0\|^4 \sinh^2\!\big(\sqrt{\kappa_1}(t+\kappa_2)\big)
+ 16|c|\|X_0\|^2 \sinh\!\big(\sqrt{\kappa_1}(t+\kappa_2)\big)
+ 16c^2
}
\\[6pt]
&=
\frac{\kappa_1}{4} \cdot 
\frac{
\|X_0\|^4
- \|X_0\|^4 \sinh^2\!\big(\sqrt{\kappa_1}(t+\kappa_2)\big)
- 4|c|\|X_0\|^2 \sinh\!\big(\sqrt{\kappa_1}(t+\kappa_2)\big)
- \norm{X_0}^4-4c^2
}{
\|X_0\|^4 \sinh^2\!\big(\sqrt{\kappa_1}(t+\kappa_2)\big)
+ 4|c|\|X_0\|^2 \sinh\!\big(\sqrt{\kappa_1}(t+\kappa_2)\big)
+ 4c^2
}
\\[6pt]
&=
\frac{\kappa_1}{4} \cdot 
\frac{
-4c^2
- \|X_0\|^4 \sinh^2\!\big(\sqrt{\kappa_1}(t+\kappa_2)\big)
- 4|c|\|X_0\|^2 \sinh\!\big(\sqrt{\kappa_1}(t+\kappa_2)\big)
}{
4c^2
+ \|X_0\|^4 \sinh^2\!\big(\sqrt{\kappa_1}(t+\kappa_2)\big)
+ 4|c|\|X_0\|^2 \sinh\!\big(\sqrt{\kappa_1}(t+\kappa_2)\big)
}
\\[4pt]
&= -\frac{\kappa_1}{4}.
\end{aligned}
\]
}
Then (\ref{eq U ode}) becomes
\begin{align}
    U''(t) -\frac{\kappa_1}{4}U(t) = 0
\end{align}
which is solved by
\begin{align}
    U(t) &= A\sinh\left( \frac{\sqrt{\kappa_1}}{2} t \right) + B\cosh\left( \frac{\sqrt{\kappa_1}}{2} t \right)
\end{align}
We then have
\begin{align}
    Y(t)\sqrt{s(t)} &= A\sinh\left( \frac{\sqrt{\kappa_1}}{2} t \right) + B\cosh\left( \frac{\sqrt{\kappa_1}}{2} t \right) \\
    Y(t) &= \frac{A\sinh\left( \frac{\sqrt{\kappa_1}}{2} t \right) + B\cosh\left( \frac{\sqrt{\kappa_1}}{2} t \right)}{\sqrt{s(t)}} \\
   Y(t) &= \frac{A\sinh\left( \frac{\sqrt{\kappa_1}}{2} t \right) + B\cosh\left( \frac{\sqrt{\kappa_1}}{2} t \right)}{\sqrt{2\norm{X_0}^2\sinh\left( \sqrt{\kappa_1}(t+\kappa_2) \right)+4|c|}} \label{eq Y(t) general solution}
\end{align}
Differentiating (\ref{eq Y(t) general solution}) with respect to $t$ using the product and chain rules yields
\begin{align} 
    Y'(t) &= \frac{\sqrt{C_{1}} \left[A\norm{X_{0}}^{2} \sinh\left(\frac{\sqrt{C_{1}} \left(t + 2C_{2}\right)}{2}\right) - B\norm{X_{0}}^{2} \cosh\left(\frac{\sqrt{C_{1}} \left(t + 2C_{2}\right)}{2}\right) + 2B \left|c\right| \sinh\left(\frac{\sqrt{C_{1}} \, t}{2}\right) + 2A \left|c\right| \cosh\left(\frac{\sqrt{C_{1}} \, t}{2}\right)\right]}{2^{\frac{3}{2}} \left(\norm{X_{0}}^{2} \sinh\left(\sqrt{C_{1}} \left(t + C_{2}\right)\right) + 2 \left|c\right|\right)^{\frac{3}{2}}}
\end{align}
Now use our initial conditions to solve for $A$ and $B$. We have
\begin{align}
    Y(0) &= \frac{B}{\sqrt{2\norm{X_0}^2\sinh\left( \sqrt{\kappa_1}\kappa_2 \right)+4|c|}} \\
    &= \frac{B}{\sqrt{2\norm{X_0}^2 \cdot \frac{\norm{X_0}^2}{2|c|}+4|c|}} \\
    &= \frac{B\sqrt{|c|}}{\sqrt{\norm{X_0}^4+4c^2}} \\
    &= Y_0
\end{align}
and
\begin{align}
    Y'(0) &= \frac{\sqrt{C_{1}} \left[A\norm{X_{0}}^{2} \sinh\left(\kappa_2\sqrt{\kappa_1}\right) - B\norm{X_{0}}^{2} \cosh\left(\kappa_2\sqrt{\kappa_1}\right) + 2A \left|c\right| \right]}{2^{\frac{3}{2}} \left(\norm{X_{0}}^{2} \sinh\left(\kappa_2\sqrt{C_{1}}\right) + 2 \left|c\right|\right)^{\frac{3}{2}}} \\
    &= \frac{\sqrt{C_{1}} \left[A\norm{X_{0}}^{2} \cdot \frac{\norm{X_0}^2}{2|c|} - B\norm{X_{0}}^{2} \cdot \frac{\sqrt{\norm{X_0}^4+4c^2}}{2|c|} + 2A \left|c\right| \right]}{2^{\frac{3}{2}} \left(\norm{X_{0}}^{2} \cdot \frac{\norm{X_0}^2}{2|c|} + 2 \left|c\right|\right)^{\frac{3}{2}}} \\
    &= \frac{\sqrt{C_{1}} \left[A\norm{X_{0}}^{4} - B\norm{X_{0}}^{2} \sqrt{\norm{X_0}^4+4c^2} + 4A \left|c\right|^2 \right]}{2|c|\left(\frac{\norm{X_0}^4}{|c|} + 4 \left|c\right|\right)^{\frac{3}{2}}} \\
    &= \frac{\sqrt{C_{1}|c|} \left[A\norm{X_{0}}^{4} - B\norm{X_{0}}^{2} \sqrt{\norm{X_0}^4+4c^2} + 4A \left|c\right|^2 \right]}{2|c|^{3/2}\left(\frac{\norm{X_0}^4}{|c|} + 4 \left|c\right|\right)^{\frac{3}{2}}} \\
    &= \frac{\sqrt{C_{1}|c|} \left[A\norm{X_{0}}^{4} - B\norm{X_{0}}^{2} \sqrt{\norm{X_0}^4+4c^2} + 4A \left|c\right|^2 \right]}{2\left(\norm{X_{0}}^{4} + 4 c^2\right)^{\frac{3}{2}}} \\
    &= \frac{\sqrt{|c|}\sqrt{\norm{X_0}^4+4c^2} \left[A\left(\norm{X_{0}}^{4}+4c^2\right) - B\norm{X_{0}}^{2} \sqrt{\norm{X_0}^4+4c^2} \right]}{2\left(\norm{X_{0}}^{4} + 4 c^2\right)^{\frac{3}{2}}} \\
    &= \frac{\sqrt{|c|} \left[A\left(\norm{X_{0}}^{4}+4c^2\right) - B\norm{X_{0}}^{2} \sqrt{\norm{X_0}^4+4c^2} \right]}{2\left(\norm{X_{0}}^{4} + 4 c^2\right)} \\
    &= \Tr(W_0-Y_0X_0)X_0^T
\end{align}
So to determine $A$ and $B$, we solve
\begin{align}
    Y_0 &= \frac{B\sqrt{|c|}}{\sqrt{\norm{X_0}^4+4c^2}} \\
    \Tr(W_0-Y_0X_0)X_0^T &= \frac{\sqrt{|c|} \left[A\left(\norm{X_{0}}^{4}+4c^2\right) - B\norm{X_{0}}^{2} \sqrt{\norm{X_0}^4+4c^2} \right]}{2\left(\norm{X_{0}}^{4} + 4 c^2\right)}
\end{align}
which gives
\begin{align}
    A &= \frac{\norm{X_0}^2}{\sqrt{|c|}}Y_0 + \frac{2\Tr(W_0-Y_0X_0)X_0^T}{\sqrt{|c|}} \\
    B &= \frac{\sqrt{\norm{X_0}^4+4c^2}}{\sqrt{|c|}}Y_0
\end{align}
We thus finally arrive at our particular solution for $Y(t)$:
{
\begin{align}
    Y(t) = &\left(\frac{\norm{X_0}^2}{\sqrt{|c|}}Y_0 + \frac{2\Tr(W_0-Y_0X_0)X_0^T}{\sqrt{|c|}} \right)\frac{\sinh\left( \frac{\sqrt{\kappa_1}}{2} t \right)}{\sqrt{2\norm{X_0}^2\sinh\left( \sqrt{\kappa_1}(t+\kappa_2) \right)+4|c|}} \nonumber \\
    &+\frac{\sqrt{\norm{X_0}^4+4c^2}}{\sqrt{|c|}} \cdot \frac{\cosh\left( \frac{\sqrt{\kappa_1}}{2} t \right)}{\sqrt{2\norm{X_0}^2\sinh\left( \sqrt{\kappa_1}(t+\kappa_2) \right)+4|c|}}Y_0
\end{align}
}
Or, gathering the $Y_0$ and $\Tr\left( W_0-Y_0X_0 \right)X_0^T$ terms, we have
\begin{align}
Y(t)
=
&\frac{1}{\sqrt{2\|X_0\|^2\sinh\!\left(\sqrt{\kappa_1}(t+\kappa_2)\right)+4|c|}}
\Bigg[
\frac{\|X_0\|^2}{\sqrt{|c|}}
\sinh\!\left(\frac{\sqrt{\kappa_1}}{2}t\right)
+
\frac{\sqrt{\|X_0\|^4+4c^2}}{\sqrt{|c|}}
\cosh\!\left(\frac{\sqrt{\kappa_1}}{2}t\right)\Bigg]Y_0
\nonumber\\
&
+
\frac{1}{\sqrt{2\|X_0\|^2\sinh\!\left(\sqrt{\kappa_1}(t+\kappa_2)\right)+4|c|}}\cdot
\frac{2}
{\sqrt{|c|}}
\,
\sinh\!\left(\frac{\sqrt{\kappa_1}}{2}t\right)
\,
\Tr\!\left(W_0-Y_0X_0\right)X_0^T
\end{align}
Now define
\begin{align}
    p(t) &:= \frac{1}{\sqrt{2\|X_0\|^2\sinh\!\left(\sqrt{\kappa_1}(t+\kappa_2)\right)+4|c|}}
\Bigg[
\frac{\|X_0\|^2}{\sqrt{|c|}}
\sinh\!\left(\frac{\sqrt{\kappa_1}}{2}t\right)
+
\frac{\sqrt{\|X_0\|^4+4c^2}}{\sqrt{|c|}}
\cosh\!\left(\frac{\sqrt{\kappa_1}}{2}t\right)\Bigg] \\
q(t) &:= \frac{1}{\sqrt{2\|X_0\|^2\sinh\!\left(\sqrt{\kappa_1}(t+\kappa_2)\right)+4|c|}}\cdot
\frac{2}
{\sqrt{|c|}}
\,
\sinh\!\left(\frac{\sqrt{\kappa_1}}{2}t\right)
\end{align}
We then have the closed-form expression for $Y(t)$:
\begin{align}
    Y(t) &= p(t)Y_0 + q(t)\Tr\!\left(W_0-Y_0X_0\right)X_0^T
\end{align}
An identical calculation for $X(t)$, noting the symmetry of (\ref{eq a(t)X^T}--\ref{eq a(t)Y^T}), yields
\begin{align}
    X(t) &= p(t)X_0 + q(t)\Tr\!\left(W_0-Y_0X_0\right)Y_0^T
\end{align}
Using general $Y_0 \in \mathbb{R}^{n \times r}$ enabled us to take advantage of the symmetry in (\ref{eq a(t)X^T}--\ref{eq a(t)Y^T}) to immediately find a closed-form expression for $X(t)$. Now that we have solutions for both $X(t)$ and $Y(t)$, we can enforce $Y_0 = \mathbf{0}_{n \times r}$ to arrive at
\begin{align}
    Y(t) &= q(t)\Tr\!\left(W_0\right)X_0^T \\
    X(t) &= p(t)X_0
\end{align}
which solve (\ref{eq Tr^2 GF Y(t)}--\ref{eq Tr^2 GF X(t)}) for our specified initial conditions. Thus, the rank-$r$ minimizer for \eqref{eq trace squared objective} at time $t \in [0,T]$ is given by
\begin{align}
    Y(t)X(t) &=  p(t)q(t)\Tr\left( W_0 \right)X_0^TX_0 \label{eq rank r minimizer}
\end{align}

It remains to investigate how the rank-$r$ minimizer in \eqref{eq rank r minimizer} behaves for large training times. Since our closed-form expressions for $X(t)$ and $Y(t)$ solve (\ref{eq Tr^2 GF Y(t)}--\ref{eq Tr^2 GF X(t)}) on $[0,T]$ for any $T>0$, they define a single solution to the same ODE system for all $t \in [0,\infty)$. Thus, we investigate the convergence of the rank-$r$ minimizer by examining $Y(t)X(t)$ in the limit that $t \to \infty$. Recall that, as $t \to \infty$,
\begin{align}
\sinh\!\left(\frac{\sqrt{\kappa_1}}{2}t\right) &\sim \frac{1}{2} e^{\frac{\sqrt{\kappa_1}}{2}t}, \\
\cosh\!\left(\frac{\sqrt{\kappa_1}}{2}t\right) &\sim \frac{1}{2} e^{\frac{\sqrt{\kappa_1}}{2}t},
\end{align}
and
\begin{align}
\sinh\!\left(\sqrt{\kappa_1}(t+\kappa_2)\right) &\sim \frac{1}{2} e^{\sqrt{\kappa_1}(t+\kappa_2)}
\end{align}

This gives us
\begin{align}
     \lim_{t \to \infty}\frac{\sinh\!\left(\frac{\sqrt{\kappa_1}}{2}t\right)}{\sqrt{2\|X_0\|^2\sinh\!\left(\sqrt{\kappa_1}(t+\kappa_2)\right)+4|c|}} &=  \lim_{t \to \infty}\frac{\cosh\!\left(\frac{\sqrt{\kappa_1}}{2}t\right)}{\sqrt{2\|X_0\|^2\sinh\!\left(\sqrt{\kappa_1}(t+\kappa_2)\right)+4|c|}} \\
    &=  \lim_{t \to \infty}\frac{\frac{1}{2}e^{\frac{\sqrt{\kappa_1}}{2}t}}{\sqrt{2\|X_0\|^2\cdot \frac{e^{\sqrt{\kappa_1}(t+\kappa_2)}}{2}+4|c|}} \\
    &= \lim_{t \to \infty}\frac{e^{\frac{\sqrt{\kappa_1}}{2}t}}{2\sqrt{\|X_0\|^2\cdot {e^{\sqrt{\kappa_1}(t+\kappa_2)}}+4|c|}} \cdot \frac{e^{-\frac{\sqrt{\kappa_1}}{2}t}}{e^{-\frac{\sqrt{\kappa_1}}{2}t}} \\
    &= \lim_{t \to \infty}\frac{1}{2\sqrt{\|X_0\|^2\cdot {e^{\kappa_2\sqrt{\kappa_1}}}+4|c|e^{-\sqrt{\kappa_1}t}}} \\
    &= \frac{1}{2\norm{X_0}e^{\frac{\kappa_2\sqrt{\kappa_1}}{2}}} \label{eq limit result}
\end{align}
Note that
\begin{align}
    e^{\kappa_2\sqrt{\kappa_1}} &= e^{\arsinh\left( \frac{\norm{X_0}^2}{2|c|}\right)} \\
    &= e^{\ln \left(  \frac{\norm{X_0}^2}{2|c|}+\sqrt{1+\left( \frac{\norm{X_0}^2}{2|c|}\right)^2} \right)} \\
    &= \frac{\norm{X_0}^2}{2|c|}+\sqrt{1+\left( \frac{\norm{X_0}^2}{2|c|}\right)^2} \\
    &= \frac{\|X_0\|^2+\sqrt{\|X_0\|^4+4c^2}}{2|c|}
\end{align}
So \eqref{eq limit result} becomes
\begin{align}
    \frac{1}{2\norm{X_0}} \cdot \sqrt{\frac{2|c|}{\|X_0\|^2+\sqrt{\|X_0\|^4+4c^2}}} &= \frac{\sqrt{|c|}}{\sqrt{2}\norm{X_0}\sqrt{\|X_0\|^2+\sqrt{\|X_0\|^4+4c^2}}} 
\end{align}
So we have
\begin{align}
    \lim_{t \to \infty}\frac{\sinh\!\left(\frac{\sqrt{\kappa_1}}{2}t\right)}{\sqrt{2\|X_0\|^2\sinh\!\left(\sqrt{\kappa_1}(t+\kappa_2)\right)+4|c|}} &=  \lim_{t \to \infty}\frac{\cosh\!\left(\frac{\sqrt{\kappa_1}}{2}t\right)}{\sqrt{2\|X_0\|^2\sinh\!\left(\sqrt{\kappa_1}(t+\kappa_2)\right)+4|c|}} \\
    &= \frac{\sqrt{|c|}}{\sqrt{2}\norm{X_0}\sqrt{\|X_0\|^2+\sqrt{\|X_0\|^4+4c^2}}}
\end{align}
We can then quickly calculate
\begin{align}
    \lim_{t \to \infty} p(t) &= \lim_{t \to \infty}\frac{1}{\sqrt{2\|X_0\|^2\sinh\!\left(\sqrt{\kappa_1}(t+\kappa_2)\right)+4|c|}}
\Bigg[
\frac{\|X_0\|^2}{\sqrt{|c|}}
\sinh\!\left(\frac{\sqrt{\kappa_1}}{2}t\right)
+
\frac{\sqrt{\|X_0\|^4+4c^2}}{\sqrt{|c|}}
\cosh\!\left(\frac{\sqrt{\kappa_1}}{2}t\right)\Bigg] \\
&= \lim_{t \to \infty} \frac{\|X_0\|^2}{\sqrt{|c|}}\cdot\frac{\sinh\!\left(\frac{\sqrt{\kappa_1}}{2}t\right)}{\sqrt{2\|X_0\|^2\sinh\!\left(\sqrt{\kappa_1}(t+\kappa_2)\right)+4|c|}} + \frac{\sqrt{\|X_0\|^4+4c^2}}{\sqrt{|c|}}\cdot\frac{\cosh\!\left(\frac{\sqrt{\kappa_1}}{2}t\right)}{\sqrt{2\|X_0\|^2\sinh\!\left(\sqrt{\kappa_1}(t+\kappa_2)\right)+4|c|}} \\
&= \frac{\|X_0\|^2}{\sqrt{|c|}}\cdot\frac{\sqrt{|c|}}{\sqrt{2}\norm{X_0}\sqrt{\|X_0\|^2+\sqrt{\|X_0\|^4+4c^2}}} + \frac{\sqrt{\|X_0\|^4+4c^2}}{\sqrt{|c|}}\cdot\frac{\sqrt{|c|}}{\sqrt{2}\norm{X_0}\sqrt{\|X_0\|^2+\sqrt{\|X_0\|^4+4c^2}}} \\
&= \frac{\norm{X_0}^2+\sqrt{\|X_0\|^4+4c^2}}{\sqrt{2}\norm{X_0}\sqrt{\|X_0\|^2+\sqrt{\|X_0\|^4+4c^2}}} \\
&= \frac{\sqrt{\norm{X_0}^2+\sqrt{\|X_0\|^4+4c^2}}}{\sqrt{2}\norm{X_0}}
\end{align}
We also have
\begin{align}
    \lim_{t \to \infty} q(t) &= \lim_{t \to \infty}\frac{2}
{\sqrt{|c|}} \cdot  \frac{\sinh\!\left(\frac{\sqrt{\kappa_1}}{2}t\right)}{\sqrt{2\|X_0\|^2\sinh\!\left(\sqrt{\kappa_1}(t+\kappa_2)\right)+4|c|}} \\
&= \frac{2}
{\sqrt{|c|}} \cdot \frac{\sqrt{|c|}}{\sqrt{2}\norm{X_0}\sqrt{\|X_0\|^2+\sqrt{\|X_0\|^4+4c^2}}} \\
&=\frac{2}{\sqrt{2}\norm{X_0}\sqrt{\|X_0\|^2+\sqrt{\|X_0\|^4+4c^2}}}
\end{align}
Now calculate the post-training rank-$r$ minimizer for \eqref{eq trace squared objective}, which converges to
\begin{align}
    \lim_{t \to \infty}Y(t)X(t) &= \lim_{t \to \infty} p(t)q(t)\Tr\left( W_0 \right)X_0^TX_0 \\
    &= \frac{\sqrt{\norm{X_0}^2+\sqrt{\|X_0\|^4+4c^2}}}{\sqrt{2}\norm{X_0}} \cdot \frac{2}{\sqrt{2}\norm{X_0}\sqrt{\|X_0\|^2+\sqrt{\|X_0\|^4+4c^2}}} \cdot \Tr(W_0)X_0^TX_0 \\
&= \frac{\Tr(W_0)}{\norm{X_0}^2}X_0^TX_0
\end{align}
For a quick sanity check, calculate the trace of our final rank-$r$ minimizer:
\begin{align}
    \Tr\left( \frac{\Tr(W_0)}{\norm{X_0}^2}X_0^TX_0 \right) &= \frac{\Tr(W_0)}{\norm{X_0}^2}\Tr\left( X_0^TX_0 \right) = \frac{\Tr(W_0)}{\norm{X_0}^2} \norm{X_0}^2 = \Tr(W_0)
\end{align}
which gives $\lim\limits_{t \to \infty} \dfrac{1}{2}\Tr^2 \Big( W_0-Y(t)X(t) \Big) = 0$ as expected.


\onecolumn

\section{Proof of Lemma \ref{lem Lipschitz smoothness Tr^2}} \label{app Lipschitz smoothness Tr^2}

\begin{lemma}[Lipschitz Smoothness for Trace Squared Objective]
    Define $g: \mathbb{R}^{n \times r} \times \mathbb{R}^{r \times n} \to \mathbb{R}$ via
    \begin{align}
        g(B,A) &:= \frac{1}{2}\Tr^2(W_0-BA)
    \end{align}
    for constant $W_0 \in \mathbb{R}^{n \times n}$. Then the objective gradient $\nabla g$ is Lipschitz smooth in our training domain. Namely, there exists $L_{R'} > 0$ such that, for any $(B_1,A_1) , (B_2,A_2) \in \mathcal{D}_{R'}$, we have
    \begin{align}
        \norm{\nabla g(B_1,A_1) - \nabla g(B_2, A_2)} \leq L_{R'} \norm{ \Big( B_1, A_1 \Big) -  \Big( B_2, A_2 \Big) }
    \end{align}
    \textbf{Proof:} \\
    
    Let $(B_1,A_1) , (B_2,A_2) \in \mathcal{D}_{R'}$ be given. Refer to the partial gradients for $g(B,A)$ in (\ref{eq partial g A}--\ref{eq partial g B}) of Appendix \ref{app learning dynamics for trace squared loss}. We have
    \begin{align}
        \norm{\nabla g(B_1,A_1) - \nabla g(B_2,A_2)} &= \norm{\Bigg( \nabla_B g(B_1,A_1) - \nabla_B g(B_2, A_2)  \, , \, \nabla_A g(B_1,A_1) - \nabla_A g(B_2, A_2) \Bigg)}
    \end{align}
    The partial gradients with respect to $B$ simplify to
    \begin{align}
        \nabla_B g(B_1,A_1) - \nabla_B g(B_2, A_2) &= -\Tr(W_0-B_1A_1)A_1^T +\Tr(W_0-B_2A_2)A_2^T \\
        &= \Tr(W_0-B_2A_2)A_2^T-\Tr(W_0-B_1A_1)A_2^T+\Tr(W_0-B_1A_1)A_2^T-\Tr(W_0-B_1A_1)A_1^T \\
        &= \Big[\Tr(W_0-B_2A_2)-\Tr(W_0-B_1A_1) \Big]A_2^T + \Tr(W_0-B_1A_1)\left( A_2^T - A_1^T \right) \\
        &= \Big[\Tr(B_1A_1-B_2A_2)\Big]A_2^T + \Tr(W_0-B_1A_1)\left( A_2^T - A_1^T \right)
    \end{align}
    Recall that $|\Tr(BA)| \leq \norm{B}\norm{A}$ (Cauchy-Schwarz for the Frobenius inner product). Taking the norm, we find
    \begin{align}
        \norm{\nabla_B g(B_1,A_1) - \nabla_B g(B_2, A_2)} &= \norm{\Big[\Tr(B_1A_1-B_2A_2)\Big]A_2^T + \Tr(W_0-B_1A_1)\left( A_2^T - A_1^T \right)} \\
        &\leq \norm{\Big[\Tr(B_1A_1-B_2A_2)\Big]A_2^T}+\norm{\Tr(W_0-B_1A_1)\left( A_2^T - A_1^T \right)} \\
        &= |\Tr(B_1A_1-B_2A_2)|\norm{A_2^T}+ |\Tr(W_0-B_1A_1)|\norm{A_2^T-A_1^T} \\
        &= |\Tr(B_1A_1-B_1A_2+B_1A_2-B_2A_2)|\norm{A_2^T}+ |\Tr(W_0-B_1A_1)|\norm{A_2^T-A_1^T} \\
        &= \left|\Tr\Big( B_1(A_1-A_2)+ (B_1-B_2)A_2 \Big)\right|\norm{A_2^T}+ |\Tr(W_0-B_1A_1)|\norm{A_2^T-A_1^T} \\
        &= \left|\Tr\Big( B_1(A_1-A_2)\Big)+ \Tr \Big((B_1-B_2)A_2 \Big)\right|\norm{A_2^T}+ |\Tr(W_0-B_1A_1)|\norm{A_2^T-A_1^T} \\
        &\leq \Big( \norm{B_1}\norm{A_1-A_2}+\norm{B_1-B_2}\norm{A_2} \Big)\norm{A_2^T}+ |\Tr(W_0-B_1A_1)|\norm{A_2^T-A_1^T} \\
        &\leq \Big( R'\norm{A_1-A_2}+\norm{B_1-B_2}R' \Big)R'+ |\Tr(W_0-B_1A_1)|\norm{A_2^T-A_1^T} \\
        &\leq \Big( \norm{A_1-A_2}+\norm{B_1-B_2}\Big)R'^2+\Big(|\Tr(W_0)|+|\Tr(B_1A_1)|\Big)\norm{A_2^T-A_1^T} \\
        &\leq \Big( \norm{A_1-A_2}+\norm{B_1-B_2}\Big)R'^2+\Big(|\Tr(W_0)|+\norm{B_1}\norm{A_1}\Big)\norm{A_2^T-A_1^T} \\
        &\leq \Big( \norm{A_1-A_2}+\norm{B_1-B_2}\Big)R'^2+\Big(|\Tr(W_0)|+R'^2\Big)\norm{A_2^T-A_1^T} \\
        &= \norm{B_1-B_2}R'^2+\Big(|\Tr(W_0)|+2R'^2\Big)\norm{A_2-A_1} \\
        &\leq \norm{B_1-B_2}\Big(|\Tr(W_0)| + 2R'^2 \Big) + \Big(|\Tr(W_0)|+2R'^2\Big)\norm{A_2-A_1} \\
        &= \Big(|\Tr(W_0)| + 2R'^2 \Big)\Big(\norm{B_1-B_2}+\norm{A_1-A_2}  \Big) \\
        &\leq \sqrt{2}\Big(|\Tr(W_0)| + 2R'^2 \Big)\Big(\norm{B_1-B_2}^2+\norm{A_1-A_2}^2  \Big)^{1/2}
    \end{align}
    where the last line holds by Cauchy-Schwartz on $\mathbb{R}^2$. So we have
    \begin{align}
        \norm{\nabla_B g(B_1,A_1) - \nabla_B g(B_2, A_2)} &\leq \sqrt{2}\Big(|\Tr(W_0)|+2R'^2\Big)\bigg( \norm{B_1-B_2}^2+\norm{A_1-A_2}^2  \bigg)^{1/2} \label{eq giant inequality asdf}
    \end{align}
    Denote ${\dfrac{1}{\sqrt{2}}L_{R'}} := \sqrt{2}\Big(|\Tr(W_0)|+2R'^2\Big)$. Squaring \eqref{eq giant inequality asdf}, we get
    \begin{align}
        \norm{\nabla_B g(B_1,A_1) - \nabla_B g(B_2, A_2)}^2 &\leq {\frac{1}{2}L_{R'}^2}\bigg( \norm{B_1-B_2}^2+\norm{A_1-A_2}^2  \bigg) \label{eq big inequality nabla_B g}
    \end{align}
    An identical calculation for $\norm{\nabla_A g(B_1,A_1) - \nabla_A g(B_2, A_2)}$ gives
    \begin{align}
        \norm{\nabla_A g(B_1,A_1) - \nabla_A g(B_2, A_2)}^2 &\leq {\frac{1}{2}L_{R'}^2}\bigg( \norm{B_1-B_2}^2+\norm{A_1-A_2}^2  \bigg) \label{eq big inequality nabla_A g}
    \end{align}
    Together, (\ref{eq big inequality nabla_B g}--\ref{eq big inequality nabla_A g}) give 
    \begin{align}
        \norm{\nabla g(B_1,A_1) - \nabla g(B_2, A_2)}^2 &= \norm{\nabla_B g(B_1,A_1) - \nabla_B g(B_2, A_2)}^2 + \norm{\nabla_A g(B_1,A_1) - \nabla_A g(B_2, A_2)}^2 \\
        &\leq {L_{R'}^2}\bigg( \norm{B_1-B_2}^2+\norm{A_1-A_2}^2  \bigg) \\
        &= L_{R'}^2 \norm{ \Big( B_1, A_1 \Big) -  \Big( B_2, A_2 \Big) }^2
    \end{align}
    or simply
    \begin{align}
        \norm{\nabla g(B_1,A_1) - \nabla g(B_2, A_2)} \leq L_{R'} \norm{ \Big( B_1, A_1 \Big) -  \Big( B_2, A_2 \Big) }
    \end{align}
    for all $(B_1,A_1),(B_2,A_2) \in \mathcal{D}_{R'}$ Thus, $g$ is $L_{R'}$ Lipschitz smooth on $\mathcal{D}_{R'}$.
\end{lemma}


\onecolumn

\section{Solution to the Initial-Value Problem in (\ref{eq a(t) ODE 1}--\ref{eq a(t) ODE 2})} \label{app solution to a(t) ODE}

We wish to solve the initial-value problem
\begin{align}
    a''(t) &=  \frac{\Big( a'(t) \Big)^2}{a(t)} + 4a^3(t)-4ca^2(t) \label{eq main problem}
\end{align}
\begin{align}
    a(0) &= \Tr \Big( W_0 \Big) := c \\
    a'(0) &= -\Tr \Big( W_0 \Big)\norm{X_0}^2 := -c\norm{X_0}^2
\end{align}
for $t \in [0,T]$, where $T > 0$ is given in Appendix \ref{app learning dynamics for trace squared loss}.

Begin by letting $u(t) = a'(t)$. This gives $u'(t) = a''(t)$. By the Chain Rule, we find
\begin{align}
    a''(t) = \frac{d^2a}{dt^2} &= \frac{du}{da}\frac{da}{dt} = \frac{du}{da} \cdot u
\end{align}
Substituting for $a'(t)$ and $a''(t)$, (\ref{eq main problem}) becomes
\begin{align}
    u\frac{du}{da} &= \frac{u^2}{a} + 4a^3 - 4ca^2 \label{eq subbing u in}
\end{align}
Now let $y = u^2$. Differentiate both sides with respect to $a$. By the Chain Rule, we have
\begin{align}
    \frac{dy}{da} &= 2u\frac{du}{da} \\
    \frac{1}{2} \frac{dy}{da} &= u\frac{du}{da}
\end{align}
Then (\ref{eq subbing u in}) becomes
\begin{align}
    \frac{1}{2}\frac{dy}{da} &= \frac{y}{a}+4a^3-4ca^2 \\
    \frac{dy}{da} &= \frac{2y}{a}+8a^3-8ca^2 \\
    \frac{dy}{da} - \frac{2y}{a} &= 8a^3-8ca^2 \label{eq frac}
\end{align}
By Remark (\ref{rem Tr^2 nonzero during training}) in Appendix \ref{app learning dynamics for trace squared loss}, we have that $a(t) \neq 0$ during training. Since we are concerned only with the trajectory of $a(t)$ prior to convergence, we can divide both sides of (\ref{eq frac}) by $a^2$ to get
\begin{align}
    \frac{1}{a^2}\frac{dy}{da} - \frac{2}{a^3}y &= 8a-8c \\
    \frac{d}{da} \left[ \frac{y}{a^2} \right]  &= 8a-8c \\
    \int \frac{d}{da} \left[ \frac{y}{a^2} \right] \, da  &= \int 8a-8c \, da \\
    \frac{y}{a^2} &= 4a^2-8ca+\kappa_1 \\
    y &= a^2(4a^2 - 8ca + \kappa_1) \\
    u^2 &= a^2(4a^2 - 8ca + \kappa_1) \\
    u &= \pm a \sqrt{4a^2 - 8ca + \kappa_1} \\
    a' &= \pm a \sqrt{4a^2 - 8ca + \kappa_1} \label{eq opposite signed}
\end{align}
Note that the quantity on the right-hand side remains real:
\begin{remark}[Radicand is Non-negative] \label{rem radical is non negative}
    From (\ref{eq opposite signed}), we have
    \begin{align}
        a' &= \pm a \sqrt{4a^2 - 8ca + \kappa_1} \\
        \pm\frac{a'}{a} &= \sqrt{4a^2 - 8ca + \kappa_1} \\
        \left(\frac{a'}{a}\right)^2 &= 4a^2 - 8ca + \kappa_1
    \end{align}
    Thus, $4a^2 - 8ca + \kappa_1 \geq 0$.
\end{remark}
From (\ref{eq sick of naming equations}) in Appendix \ref{app learning dynamics for trace squared loss}, we have
\begin{align}
    a'(t) &= -a(t)\left( \norm{X}^2 + \norm{Y}^2 \right) \label{eq a' = -a ...}
\end{align}
Since $\left( \norm{X}^2 + \norm{Y}^2 \right) \geq 0$, we have that $a'(t)$ and $a(t)$ are opposite signed. Thus, (\ref{eq opposite signed}) becomes
\begin{align}
    a' &= -a \sqrt{4a^2 - 8ca + \kappa_1} \label{eq post opposite signed}
\end{align}
We utilize the initial conditions at this point to solve for $\kappa_1$:
\begin{align}
    a'(0) &= -a(0) \sqrt{4a^2(0) - 8ca(0) + \kappa_1} \\
    -\Tr \Big( W_0 \Big)\norm{X_0}^2 &= -\Tr \Big( W_0 \Big) \sqrt{4\Tr^2\Big( W_0 \Big) - 8c\Tr \Big( W_0 \Big) + \kappa_1} \\
    \norm{X_0}^2 &=  \sqrt{4c^2 - 8c^2 + \kappa_1} \\
    \norm{X_0}^4 &=  \kappa_1-4c^2 \\
    \kappa_1 &= \norm{X_0}^4+4c^2
\end{align}
So we get $\kappa_1 = \norm{X_0}^4+4c^2$. Note that $\kappa_1 > 0$.

Return to (\ref{eq post opposite signed}). We have
\begin{align}
    \frac{da}{dt} &= -a \sqrt{4a^2 - 8ca + \kappa_1} \\
    -\int \frac{da}{a\sqrt{4a^2 - 8ca + \kappa_1}} &= \int dt \\
    -\int \frac{da}{a\sqrt{4a^2 - 8ca + \kappa_1}} &= t+\kappa_2 \label{eq t+C_2}
\end{align}
Calculate the integral on left-hand side above. Begin by completing the square in the denominator:
\begin{align}
 \int \frac{da}{a\sqrt{4a^2 - 8ca + \kappa_1}} &= \int \frac{da}{a\sqrt{4(a-c)^2+\kappa_1-4c^2}} \label{eq original integral} \\
 &= \frac{1}{2} \int \frac{da}{a\sqrt{(a-c)^2+\frac{\kappa_1}{4}-c^2}} \\
 &= \frac{1}{2} \int \frac{-a \, da}{-a^2\sqrt{(a-c)^2+\frac{\kappa_1}{4}-c^2}}
\end{align}
Making the substitution:
\begin{align}
    v &= \frac{1}{a} \\
    dv &= -\frac{1}{a^2} da,
\end{align}
we find
\begin{align}
    \frac{1}{2} \int \frac{-a \, da}{-a^2\sqrt{(a-c)^2+\frac{\kappa_1}{4}-c^2}} &= -\frac{1}{2} \int \frac{dv}{v\sqrt{\left(\frac{1}{v}-c\right)^2+\frac{\kappa_1}{4}-c^2}} \\
    &= -\frac{1}{2} \int \pm \frac{dv}{\sqrt{\left(1-cv\right)^2+v^2\left(\frac{\kappa_1}{4}-c^2\right)}}
\end{align}
We insert the $\pm$ symbol into our integrand above to avoid presuming on the sign of $v$. To pull $v$ into the square root in the denominator above, we have to write $v = \pm \sqrt{v^2}$ since we have
\begin{align}
    v = +\sqrt{v^2} \qquad v \geq 0 \\
    v = -\sqrt{v^2} \qquad v < 0
\end{align}
We can then continue our calculation:
\begin{align}
  -\frac{1}{2} \int \pm \frac{dv}{\sqrt{\left(1-cv\right)^2+v^2\left(\frac{\kappa_1}{4}-c^2\right)}}  &= -\frac{1}{2} \int \pm \frac{dv}{\sqrt{1-2cv+c^2v^2-c^2v^2+\frac{v^2\kappa_1}{4}}} \\
    &= -\int \pm \frac{dv}{\sqrt{4-8cv+v^2\kappa_1}}
\end{align}
Complete the square in the denominator once again to find
\begin{align}
    -\int \pm \frac{dv}{\sqrt{4-8cv+v^2\kappa_1}} &= -\int \pm\frac{dv}{\sqrt{\kappa_1 \left( v-\frac{4c}{\kappa_1}\right)^2 + 4-\frac{16c^2}{\kappa_1}}} \\
    &= -\int \pm\frac{dv}{\sqrt{\left( \sqrt{\kappa_1}v-\frac{4c}{\sqrt{\kappa_1}}\right)^2 + 4-\frac{16c^2}{\kappa_1}}} \\
    &= -\int \pm\frac{dv}{\sqrt{\left(\frac{\kappa_1v-4c}{\sqrt{\kappa_1}}\right)^2 + 4-\frac{16c^2}{\kappa_1}}}
\end{align}
Now make the substitution
\begin{align}
    w &= \frac{\kappa_1v-4c}{\sqrt{\kappa_1}\sqrt{4-\frac{16c^2}{\kappa_1}}} \\
    dw &= \sqrt{\frac{\kappa_1}{4-\frac{16c^2}{\kappa_1}}}dv \\
    dv &= \frac{\sqrt{4-\frac{16c^2}{\kappa_1}}}{\sqrt{\kappa_1}}dw
\end{align}
\begin{remark}[Another Non-negative Radicand]\label{rem Another Non-negative Radicand}
    Note that
    \begin{align}
        4-\frac{16c^2}{\kappa_1} &= 4\left(1 - \frac{4c^2}{\kappa_1} \right) = 4\left( \frac{\kappa_1-4c^2}{\kappa_1}\right) = \frac{4}{\kappa_1}(\kappa_1-4c^2)
    \end{align}
    Since $\kappa_1 > 0$ for our initialization scheme, the left factor is positive. On the right,
    \begin{align}
        \kappa_1 - 4c^2 &= \norm{X_0}^4 + 4c^2 - 4c^2 = \norm{X_0}^4 > 0
    \end{align}
    Therefore, $4-\frac{16c^2}{\kappa_1} > 0$, and $w$ is real.
\end{remark}
Changing variables to $w$ and carefully tracking the sign of the integrand based on the sign of $v$ , we then have
\begin{align}
    -\int \pm \frac{dv}{\sqrt{\left(\frac{\kappa_1v-4c}{\sqrt{\kappa_1}}\right)^2 + 4-\frac{16c^2}{\kappa_1}}} &= -\int \pm \frac{dw}{\sqrt{\left(w\sqrt{4-\frac{16c^2}{\kappa_1}}\right)^2 + 4-\frac{16c^2}{\kappa_1}}} \cdot \frac{\sqrt{4-\frac{16c^2}{\kappa_1}}}{\sqrt{\kappa_1}} \\
    &= -\int \pm \frac{dw}{\sqrt{w^2\left({4-\frac{16c^2}{\kappa_1}}\right) + 4-\frac{16c^2}{\kappa_1}}} \cdot \frac{\sqrt{4-\frac{16c^2}{\kappa_1}}}{\sqrt{\kappa_1}} \\
    &= -\frac{1}{\sqrt{\kappa_1}}\int \pm \frac{dw}{\sqrt{w^2 + 1}} \\
\end{align}
Thus, we have arrived at
\begin{align}
    -\int \pm \frac{dv}{\sqrt{\left(\frac{\kappa_1v-4c}{\sqrt{\kappa_1}}\right)^2 + 4-\frac{16c^2}{\kappa_1}}} &= \begin{cases}
        -\dfrac{1}{\sqrt{\kappa_1}} \displaystyle \int + \dfrac{dw}{\sqrt{w^2 + 1}} & v \geq 0 \\
        -\dfrac{1}{\sqrt{\kappa_1}} \displaystyle \int - \dfrac{dw}{\sqrt{w^2 + 1}} & v < 0
    \end{cases}
\end{align}
Taking the anti-derivatives on the right, this becomes
\begin{align}
    -\int \pm \frac{dv}{\sqrt{\left(\frac{\kappa_1v-4c}{\sqrt{\kappa_1}}\right)^2 + 4-\frac{16c^2}{\kappa_1}}} &= \begin{cases}
        -\dfrac{1}{\sqrt{\kappa_1}}\arsinh(w) & v \geq 0 \\
        -\dfrac{1}{\sqrt{\kappa_1}} \arsinh(-w) & v < 0
    \end{cases}
\end{align}
Back-substitution then yields
\begin{align}
    -\int \pm \frac{dv}{\sqrt{\left(\frac{\kappa_1v-4c}{\sqrt{\kappa_1}}\right)^2 + 4-\frac{16c^2}{\kappa_1}}} &= \begin{cases}
        -\dfrac{1}{\sqrt{\kappa_1}}\arsinh\left(\dfrac{\kappa_1v-4c}{\sqrt{\kappa_1}\sqrt{4-\frac{16c^2}{\kappa_1}}}\right) & v \geq 0 \\
        -\dfrac{1}{\sqrt{\kappa_1}} \arsinh \left(-\dfrac{\kappa_1v-4c}{\sqrt{\kappa_1}\sqrt{4-\frac{16c^2}{\kappa_1}}} \right) & v < 0
    \end{cases}
\end{align}
Note that since $v = \dfrac{1}{a}$, the signs of $a$ and $v$ will be the same. Recalling the original integral in (\ref{eq original integral}), we finally have
\begin{align}
  \int \frac{da}{a\sqrt{4a^2 - 8ca + \kappa_1}}  &= \begin{cases}
        -\dfrac{1}{\sqrt{\kappa_1}}\arsinh\left(\dfrac{\frac{\kappa_1}{a}-4c}{\sqrt{\kappa_1}\sqrt{4-\frac{16c^2}{\kappa_1}}}\right) & a \geq 0 \\
        -\dfrac{1}{\sqrt{\kappa_1}} \arsinh \left(-\dfrac{\frac{\kappa_1}{a}-4c}{\sqrt{\kappa_1}\sqrt{4-\frac{16c^2}{\kappa_1}}} \right) & a < 0
    \end{cases}
\end{align}
Eq. (\ref{eq t+C_2}) then yields
\begin{align}
    t+\kappa_2 &= \begin{cases}
        \dfrac{1}{\sqrt{\kappa_1}}\arsinh\left(\dfrac{\frac{\kappa_1}{a}-4c}{\sqrt{\kappa_1}\sqrt{4-\frac{16c^2}{\kappa_1}}}\right) & a \geq 0 \\
        \dfrac{1}{\sqrt{\kappa_1}} \arsinh \left(-\dfrac{\frac{\kappa_1}{a}-4c}{\sqrt{\kappa_1}\sqrt{4-\frac{16c^2}{\kappa_1}}} \right) & a < 0
    \end{cases} \label{eq almost solved for a(t)}
\end{align}
or simply
\begin{align}
    t+\kappa_2 &= \dfrac{1}{\sqrt{\kappa_1}}\arsinh\left(\sgn (a)\dfrac{\frac{\kappa_1}{a}-4c}{\sqrt{\kappa_1}\sqrt{4-\frac{16c^2}{\kappa_1}}}\right)
\end{align}
Note that since $a(t)$ is continuous and nonzero and that $a(0) = c$, we will have $\sgn(a) = \sgn(c)$ on $[0,T]$. Thus we have
\begin{align}
    t+\kappa_2 &= \dfrac{1}{\sqrt{\kappa_1}}\arsinh\left(\sgn (c)\dfrac{\frac{\kappa_1}{a}-4c}{\sqrt{\kappa_1}\sqrt{4-\frac{16c^2}{\kappa_1}}}\right)
\end{align}
At this point, we solve for $\kappa_2$:
\begin{align}
    \kappa_2 &= \frac{1}{\sqrt{\kappa_1}}\arsinh\left(\sgn (c)\frac{\frac{\kappa_1}{a(0)}-4c}{\sqrt{\kappa_1}\sqrt{4-\frac{16c^2}{\kappa_1}}}\right) \\
    &= \frac{1}{\sqrt{\kappa_1}}\arsinh\left(\sgn (c)\frac{\frac{\kappa_1}{c}-4c}{\sqrt{\kappa_1}\sqrt{4-\frac{16c^2}{\kappa_1}}}\right) \\
    &= \frac{1}{\sqrt{\kappa_1}}\arsinh\left(\sgn (c)\frac{\frac{\kappa_1}{c}-4c}{\sqrt{4\kappa_1-16c^2}}\right) \\
    &= \frac{1}{\sqrt{\kappa_1}}\arsinh\left(\sgn (c)\frac{\frac{\kappa_1}{c}-4c}{\sqrt{4\norm{X_0}^4+16c^2-16c^2}}\right) \\
    &= \frac{1}{\sqrt{\kappa_1}}\arsinh\left(\sgn (c)\frac{\frac{\kappa_1}{c}-4c}{2\norm{X_0}^2}\right) \\
    &= \frac{1}{\sqrt{\kappa_1}}\arsinh\left(\sgn (c)\frac{\kappa_1-4c^2}{2c\norm{X_0}^2}\right) \\
    &= \frac{1}{\sqrt{\kappa_1}}\arsinh\left(\sgn (c)\frac{\norm{X_0}^4+4c^2-4c^2}{2c\norm{X_0}^2}\right) \\
    &= \frac{1}{\sqrt{\norm{X_0}^4+4c^2}}\arsinh\left(\sgn (c)\frac{\norm{X_0}^2}{2c}\right) \\
    \kappa_2 &= \frac{1}{\sqrt{\norm{X_0}^4+4c^2}}\arsinh\left(\frac{\norm{X_0}^2}{2|c|}\right)
\end{align}
With $\kappa_1$ and $\kappa_2$ defined, we can find a closed-form solution for $a(t)$. Start from (\ref{eq almost solved for a(t)}):
\begin{align}
    t+\kappa_2 &= \frac{1}{\sqrt{\kappa_1}}\arsinh\left(\sgn (c)\frac{\frac{\kappa_1}{a}-4c}{\sqrt{\kappa_1}\sqrt{4-\frac{16c^2}{\kappa_1}}}\right) \\
    \sqrt{\kappa_1}(t+\kappa_2) &= \arsinh\left(\sgn (c)\frac{\frac{\kappa_1}{a}-4c}{\sqrt{\kappa_1}\sqrt{4-\frac{16c^2}{\kappa_1}}}\right)  \\
    \sinh\left( \sqrt{\kappa_1}(t+\kappa_2) \right) &= \sgn (c)\frac{\frac{\kappa_1}{a}-4c}{\sqrt{\kappa_1}\sqrt{4-\frac{16c^2}{\kappa_1}}} \\
    \sinh\left( \sqrt{\kappa_1}(t+\kappa_2) \right) &= \sgn (c)\frac{\frac{\kappa_1}{a}-4c}{\sqrt{4\kappa_1-16c^2}} \\
    \sgn (c){\sqrt{4\kappa_1-16c^2}}\sinh\left( \sqrt{\kappa_1}(t+\kappa_2) \right) &= {\frac{\kappa_1}{a}-4c} \\
    \sgn (c){\sqrt{4\kappa_1-16c^2}}\sinh\left( \sqrt{\kappa_1}(t+\kappa_2) \right)+4c &= \frac{\kappa_1}{a} \\
    \sgn (c){\sqrt{4\norm{X_0}^4+16c^2-16c^2}}\sinh\left( \sqrt{\kappa_1}(t+\kappa_2) \right)+4c &= \frac{\kappa_1}{a} \\
    2\sgn (c)\norm{X_0}^2\sinh\left( \sqrt{\kappa_1}(t+\kappa_2) \right)+4c &= \frac{\kappa_1}{a}
\end{align}
and finally,
\begin{align}
    a(t) &= \frac{\norm{X_0}^4+4c^2}{2\sgn (c)\norm{X_0}^2\sinh\left( \sqrt{\kappa_1}(t+\kappa_2) \right)+4c} \\
   a(t) &= \sgn (c)\frac{\norm{X_0}^4+4c^2}{2\norm{X_0}^2\sinh\left( \sqrt{\kappa_1}(t+\kappa_2) \right)+4|c|}
\end{align}
where
\begin{align}
    c:&= \Tr(W_0)
\end{align}


\onecolumn

\section{Learning Dynamics for Trace-Squared Loss (Full-rank)} \label{app full rank learning dynamics for trace squared loss}

Let $W_0 \in \mathbb{R}^{n \times n}$ be a matrix of frozen pretraining weights. We wish to analyze the learning dynamics of the full-rank optimizer $W \in \mathbb{R}^{n \times n}$ produced by applying full-rank GD to the problem
\begin{align}
    \underset{\substack{W \in \mathbb{R}^{n \times n}}}{\min} \, \frac{1}{2}\Tr^2(W_0-W) \label{eq trace squared objective full rank}
\end{align}
The gradient for the loss function $f(W)$ above is given by
\begin{align}
   \nabla f(W) = -\Tr\left( W_0-W \right) I_n \label{eq gradient flow ODE trace squared full rank}
\end{align}
where $I_n \in \mathbb{R}^{n \times n}$ denotes the identity matrix. The gradient flow ODE describing the learning dynamics of the trace-squared objective in
\eqref{eq trace squared objective full rank} under full-rank gradient descent is given by
\begin{align}
    \frac{dU(t)}{dt} &= -\nabla f\big(U(t)\big)
    = \Tr\big(W_0-U(t)\big)\,I_n, \label{eq gradient flow ODE for full rank trace squared} \\
    U(0) &= Y_0X_0, \label{eq IC trace squared full rank}
\end{align}
where the solution $U:\mathbb{R}_0^+\to\mathbb{R}^{n\times n}$ denotes the continuous-time analogue of the full-rank GD iterates.
Throughout the calculation below, we allow a general initialization $Y_0\in\mathbb{R}^{n\times r}$ for completeness. However,
to facilitate direct comparison with the low-rank learning dynamics for the same objective
(see Appendix~\ref{app learning dynamics for trace squared loss}), we specialize to the choice
$Y_0=\mathbf{0}_{n\times r}$ at the conclusion of our analysis.

Taking the trace of both sides of (\ref{eq gradient flow ODE for full rank trace squared}--\ref{eq IC trace squared full rank}), we have
\begin{align}
\frac{d \Tr\Big( U(t) \Big)}{dt} &= \Tr\Big( W_0-U(t) \Big)\Tr\left( I_n \right) \\
    \frac{d \Tr\Big( U(t) \Big)}{dt} &= n \Tr\Big( W_0-U(t) \Big) \\
  \frac{d \Tr\Big( U(t) \Big)}{dt} &= n\Tr\left( W_0 \right) - n \Tr\Big( U(t) \Big) \\
  \Tr \Big( U(0) \Big) &= \Tr(Y_0X_0)
\end{align}
which is solved by
\begin{align}
    \Tr \Big( U(t) \Big) &= \Tr(W_0) - e^{-nt}\Tr\left( W_0-Y_0X_0 \right)
\end{align}
Plugging our closed-form expression for $\Tr\Big( U(t) \Big)$ into \eqref{eq gradient flow ODE for full rank trace squared}, we arrive at
\begin{align}
    \frac{dU(t)}{dt} &= \Tr\Big( W_0-U(t) \Big) I_n \\
    \frac{dU(t)}{dt} &= \left[ \Tr\left( W_0 \right) -\Tr\Big( U(t) \Big)\right] I_n \\
    \frac{dU(t)}{dt} &= \left[ \Tr\left( W_0 \right) -\Tr(W_0) + e^{-nt}\Tr\left( W_0-Y_0X_0 \right)\right] I_n \\
   \frac{dU(t)}{dt} &= e^{-nt}\Tr\left( W_0-Y_0X_0 \right)I_n
\end{align}
which, for our initial condition in \eqref{eq IC trace squared full rank}, is solved by
\begin{align}
     U(t) &= \frac{1-e^{-nt}}{n}\Tr \left( W_0-Y_0X_0 \right)I_n+Y_0X_0
\end{align}
For a quick sanity check, note that
\begin{align}
    U(0) &= Y_0X_0 \\
    \lim\limits_{t \to \infty} \Tr \Big(U(t)\Big) &= \Tr\left(\frac{1}{n}\Tr \left( W_0-Y_0X_0 \right)I_n+Y_0X_0 \right) \\
    &= \frac{1}{n}\Tr \left( W_0-Y_0X_0 \right)\Tr\left( I_n \right) + \Tr(Y_0X_0) \\
    &= \Tr \left( W_0-Y_0X_0 \right) + \Tr(Y_0X_0) \\
    &= \Tr(W_0)
\end{align}
which gives the result we would expect for the minimization problem in \eqref{eq trace squared objective full rank}.

Taking the limit of $U(t)$ as $t\to\infty$, the gradient flow in (\ref{eq gradient flow ODE for full rank trace squared}--\ref{eq IC trace squared full rank})
converges to the following global minimizer of \eqref{eq trace squared objective full rank}:
\begin{align}
    \lim_{t\to\infty} U(t)
    = \frac{1}{n}\Tr\!\left( W_0 - Y_0X_0 \right)I_n+Y_0X_0 \label{eq final full rank minimizer}
\end{align}
In particular, for the same initialization $Y_0=\mathbf{0}_{n\times r}$ used in our low-rank analysis
(Appendix~\ref{app learning dynamics for trace squared loss}),
the full-rank flow converges to
\begin{align}
    \lim_{t\to\infty} U(t)
    = \frac{1}{n}\Tr(W_0)\,I_n.
\end{align}

To demonstrate uniqueness of our solution $U(t)$ to the IVP in (\ref{eq gradient flow ODE for full rank trace squared}--\ref{eq IC trace squared full rank}), suppose there exists an additional solution $U^*(t)$ which satisfies (\ref{eq gradient flow ODE for full rank trace squared}--\ref{eq IC trace squared full rank}). We then have
\begin{align}
    \frac{dU^*(t)}{dt} &= \Tr\Big( W_0-U^*(t) \Big) I_n \\
    U^*(0) &= Y_0X_0
\end{align}
We can calculate
\begin{align}
    \frac{dU(t)}{dt}-\frac{dU^*(t)}{dt} = \frac{d}{dt}\Big( U(t) - U^*(t) \Big) &= \Tr\Big( W_0-U(t) \Big) I_n - \Tr\Big( W_0-U^*(t) \Big) I_n \\
    &= \Tr \Big( U^*(t)-U(t) \Big)I_n \\
    &= -\Tr \Big( U(t)-U^*(t) \Big)I_n
\end{align}
or simply
\begin{align}
    \frac{d}{dt}\Big( U(t) - U^*(t) \Big) &= -\Tr \Big( U(t)-U^*(t) \Big)I_n \label{eq U-U^*}
\end{align}
Note that $U(0)-U^*(0) = Y_0X_0-Y_0X_0 = \mathbf{0}_{n \times n}$.
Now define $S: \mathbb{R}_0^+ \to \mathbb{R}^{n \times n}$ via
\begin{align}
    S(t) &= U(t)-U^*(t) \\
    S(0) &= Y_0X_0-Y_0X_0 = \mathbf{0}_{n \times n}
\end{align}
Then \eqref{eq U-U^*} gives
\begin{align}
    \frac{d}{dt}\Big( S(t) \Big) &= -\Tr \Big( S(t) \Big)I_n \label{eq S ode}
\end{align}
Take the trace of both sides of \eqref{eq S ode} to find
\begin{align}
    \frac{d}{dt}\Tr\Big( S(t) \Big) &= -\Tr \Big( S(t) \Big)\Tr\left(I_n\right) \\
    \frac{d}{dt}\Tr\Big( S(t) \Big) &= -n\Tr \Big( S(t) \Big)
\end{align}
which is solved with the initial condition $\Tr\Big( S(0) \Big) = 0$ by
\begin{align}
    \Tr\Big( S(t) \Big) &= 0
\end{align}
Since the trace of $S(t)$ has only the trivial solution, \eqref{eq S ode} becomes
\begin{align}
    \frac{d}{dt}\Big( S(t) \Big) &= \mathbf{0}_{n \times n}
\end{align}
This implies that $S(t)$ is a constant equal to its initial condition. Thus,
\begin{align}
    S(t) &= \mathbf{0}_{n \times n}
\end{align}
for all $t \in [0,\infty)$. This implies that
\begin{align}
    U(t)=U^*(t)
\end{align}
and our solution $U(t)$ to the IVP in (\ref{eq gradient flow ODE for full rank trace squared}--\ref{eq IC trace squared full rank}) must be unique.


\onecolumn

\section{Low-Rank Approximation Error for the Trace-Squared Objective Relative to the Full-Rank Minimizer} \label{app approximation error}

In this work, we analyze both the full-rank and LoRA parameterizations of the trace-squared fine-tuning problem. 
Using gradient flow dynamics, we characterize the convergence behavior of classical full-rank gradient descent 
as well as low-rank–adapted gradient descent toward minimizers of their respective objectives:
\begin{align}
    \underset{W \in \mathbb{R}^{n \times n}}{\min} \; &\frac{1}{2}\Tr^2(W_0-W) \label{eq full rank minimization problem} \\
    \underset{\substack{B \in \mathbb{R}^{n \times r} \\ A \in \mathbb{R}^{r \times n}}}{\min} \; &\frac{1}{2}\Tr^2(W_0-BA). \label{eq low rank minimization problem}
\end{align}
We find that application of classical GD to the full-rank problem converges to the following minimizer of \eqref{eq full rank minimization problem}:
\begin{align}
   \lim_{t \to \infty}U(t) &= \frac{1}{n}\Tr(W_0)\,I_n \in \underset{W \in \mathbb{R}^{n \times n}}{\arg\min} \; \frac{1}{2}\Tr^2(W_0-W)
\end{align}
while application of low-rank adapted GD to the low-rank problem converges to the following minimizer of \eqref{eq low rank minimization problem}:
\begin{align}
    \lim_{t \to \infty}Y(t)X(t) &= \frac{\Tr(W_0)}{\norm{X_0}^2}X_0^TX_0 \in \underset{\substack{B \in \mathbb{R}^{n \times r} \\ A \in \mathbb{R}^{r \times n}}}{\arg\min} \; \frac{1}{2}\Tr^2(W_0-BA) 
\end{align}
Recall that, for both the low-rank and full-rank variants of our problem, we adopt the standard LoRA initialization scheme~\cite{Hu2022LoRA}, namely
$U(0)=Y(0)X(0)=Y_0X_0$, where $Y_0\in\mathbb{R}^{n\times r}$ is initialized as the zero matrix and
$X_0\in\mathbb{R}^{r\times n}$ has entries drawn i.i.d.\ from a centered Gaussian distribution with variance $\sigma^2$. Namely,
\begin{align}
    x_{ij}(0) \sim \mathcal{N}(0,\sigma^2). \label{eq random init}
\end{align}
where $x_{ij}$ are the individual elements of $X$.

Recall from Appendices~\ref{app learning dynamics for trace squared loss}
and~\ref{app full rank learning dynamics for trace squared loss} that both
the full-rank and low-rank gradient flows converge to global minimizers of the
trace-squared objective, achieving zero final loss. We thus examine only the approximation error between the resulting full-rank and low-rank minimizers. In particular, we calculate the final relative error:
\begin{align}
    \lim_{t \to \infty} \frac{\norm{Y(t)X(t)-U(t)}^2}{\norm{U(t)}^2}
\end{align}
The norm in the denominator is simply
\begin{align}
    \lim_{t \to \infty} \norm{U(t)}^2 &= \norm{\frac{1}{n}\Tr(W_0)\,I_n}^2 \\
    &= \frac{|\Tr(W_0)|^2}{n^2}\norm{I_n}^2 \\
    &= \frac{|\Tr(W_0)|^2}{n^2}\cdot n \\
    &= \frac{|\Tr(W_0)|^2}{n}
\end{align}
For the numerator, we find
\begin{align}
    \lim_{t \to \infty} \norm{Y(t)X(t)-U(t)}^2 &= \norm{\frac{\Tr(W_0)}{\norm{X_0}^2}X_0^TX_0-\frac{1}{n}\Tr(W_0)\,I_n}^2 \\
    &= \left|\Tr(W_0)\right|^2\norm{\frac{X_0^TX_0}{\norm{X_0}^2}-\frac{1}{n}\,I_n}^2
\end{align}
Calculate the Frobenius norm above:
\begin{align}
    \norm{\frac{X_0^TX_0}{\norm{X_0}^2}-\frac{1}{n}\,I_n}^2 &= \Tr \left( \left[ \frac{X_0^TX_0}{\norm{X_0}^2}-\frac{1}{n}\,I_n \right]^T \left[ \frac{X_0^TX_0}{\norm{X_0}^2}-\frac{1}{n}\,I_n \right]  \right) \\
    &= \Tr \left( \left[ \frac{X_0^TX_0}{\norm{X_0}^2}-\frac{1}{n}\,I_n \right] \left[ \frac{X_0^TX_0}{\norm{X_0}^2}-\frac{1}{n}\,I_n \right]  \right) \\
    &= \Tr \left( \frac{X_0^TX_0X_0^TX_0}{\norm{X_0}^4} - \frac{2}{n}\frac{X_0^TX_0}{\norm{X_0}^2} +\frac{1}{n^2}I_n \right) \\
    &= \Tr\left( \frac{X_0^TX_0X_0^TX_0}{\norm{X_0}^4} \right) - \frac{2}{n}\Tr \left( \frac{X_0^TX_0}{\norm{X_0}^2} \right) + \frac{1}{n^2}\Tr\left( I_n \right) \\
    &= \frac{1}{\norm{X_0}^4}\Tr\left( {X_0^TX_0X_0^TX_0} \right) - \frac{2}{n\norm{X_0}^2}\Tr \left( {X_0^TX_0} \right) + \frac{1}{n^2} \cdot n \\
    &= \frac{\norm{X_0^TX_0}^2}{\norm{X_0}^4} - \frac{2\norm{X_0}^2}{n\norm{X_0}^2} + \frac{1}{n} \\
    &= \frac{\norm{X_0^TX_0}^2}{\norm{X_0}^4} - \frac{1}{n}
\end{align}
We then have
\begin{align}
    \lim_{t \to \infty} \norm{Y(t)X(t)-U(t)}^2 &= \left|\Tr(W_0)\right|^2 \left( \frac{\norm{X_0^TX_0}^2}{\norm{X_0}^4} - \frac{1}{n} \right)
\end{align}
The square of the relative error between the low-rank and full-rank minimizers for arbitrary $X_0 \neq \mathbf{0}_{r \times n}$ is then
\begin{align}
    \lim_{t \to \infty} \frac{\norm{Y(t)X(t)-U(t)}^2}{\norm{U(t)}^2} &= \left|\Tr(W_0)\right|^2 \left( \frac{\norm{X_0^TX_0}^2}{\norm{X_0}^4} - \frac{1}{n} \right) \cdot \frac{n}{|\Tr(W_0)|^2} \\
    &= \frac{n\norm{X_0^TX_0}^2}{\norm{X_0}^4} - 1
\end{align}
Note that the expression above is non-negative since $\sqrt{n}\norm{X_0^TX_0} \geq \norm{X_0}^2$ by Cauchy-Schwarz applied to the eigenvalues of $X_0^TX_0$.
We then have the final relative error
\begin{align}
    \lim_{t \to \infty} \frac{\norm{Y(t)X(t)-U(t)}}{\norm{U(t)}} &= \sqrt{\frac{n\norm{X_0^TX_0}^2}{\norm{X_0}^4} - 1} \\
    &= \frac{\sqrt{n\norm{X_0^TX_0}^2-\norm{X_0}^4}}{\norm{X_0}^2}
\end{align}
To reiterate, the expression above for the relative error between the low-rank and full-rank gradient flow minimizers is valid for arbitrary $X_0 \neq \mathbf{0}_{r \times n}$. However, we can study the $\textit{expected}$ approximation error as a function of $r$ and $n$ in the context of the standard LoRA initialization scheme in \eqref{eq random init}. 

Initializing $X_0 \in \mathbbm{R}^{n \times r}$ as in \eqref{eq random init}, begin by calculating the expected value of the ratio
\begin{align}
    \frac{\norm{X_0^TX_0}^2}{\norm{X_0}^4} \label{eq expectation ratio}
\end{align}
Write
\begin{align}
    X_0 = rZ
\end{align}
where $r = \norm{X_0}$ and $Z = \dfrac{X_0}{\norm{X_0}}$. Then \eqref{eq expectation ratio} becomes
\begin{align}
    \frac{\norm{X_0^TX_0}^2}{\norm{X_0}^4} &= \frac{\norm{r^2Z^TZ}^2}{\norm{r}^4} = \frac{r^4\norm{Z^TZ}^2}{r^4} = \norm{Z^TZ}^2 = \sum_{a=1}^{r} \sum_{b=1}^{r} \sum_{i=1}^{n} \sum_{j=1}^{n}
z_{ia}\, z_{ib}\, z_{ja}\, z_{jb} \label{eq lots of z}
\end{align}
To calculate the expectation of the expression above, we then need the expectation of the individual scalar entries of $Z$.
Let $\mathbf{z} \in \mathbb{R}^{nr}$ denote the vector of stacked entries of $Z$, and let $\mathbf{x_0} \in \mathbbm{R}^{nr}$ denote the vector of stacked entries of $X_0$. Use $\norm{\cdot}_2$ to represent the vector Euclidean norm. Then
\begin{align}
    \mathbf{z} &= \frac{\mathbf{x_0}}{\norm{\mathbf{x_0}}_2}
\end{align}
We have that
\begin{align}
    \mathbf{x_0} \sim \mathcal{N}\left( 0, \sigma^2I_{nr} \right)
\end{align}
By the properties of any scalar random variable $\xi$ that
\begin{align}
    \Var(\sigma\xi) &= \sigma^2\Var(\xi) \\
    \mathbb{E}\left[ \sigma \xi  \right] &= \sigma\mathbb{E}\left[ \xi  \right]
\end{align}
we can write
\begin{align}
    \mathbf{x_0} &= \sigma \mathbf{g}
\end{align}
for $\mathbf{g} \in \mathbb{R}^{nr}$, where
\begin{align}
    \mathbf{g} \sim \mathcal{N}\left( 0, I_{nr} \right)
\end{align}
It is known that $\dfrac{\mathbf{g}}{\norm{\mathbf{g}}_2}$ is uniformly distributed on the unit sphere in $\mathbb{R}^{nr}$~\cite{vershynin2018highdim}. We then immediately have
\begin{align}
    \mathbf{z} &= \frac{\mathbf{x_0}}{\norm{\mathbf{x_0}}_2} = \frac{\sigma \mathbf{g}}{\sigma\norm{\mathbf{g}}_2} = \frac{\mathbf{g}}{\norm{\mathbf{g}}_2}
\end{align}
and therefore $\mathbf{z}$ is uniformly distributed on the unit sphere. Now denote $\rho := \norm{\mathbf{g}}_2$. We know that $\mathbf{z}$ and $\rho$ are independent~\cite{vershynin2018highdim}. Choose arbitrary indices $i,j \in \{1,\dots,nr\}$. For the corresponding scalar components of $\mathbf{z}$ and $\mathbf{g}$, we have
\begin{align}
    z_i^2 &= \frac{g_i^2}{\rho^2} \qquad \text{ and } \qquad z_i^2z_j^2 = \frac{g_i^2g_j^2}{\rho^4}
\end{align}
or
\begin{align}
    \rho^2 z_i^2 &= g_i^2 \qquad \text{ and } \qquad \rho^4 z_i^2z_j^2 = g_i^2g_j^2
\end{align}
Taking advantage of the independence of $\mathbf{z}$ and $\rho$, we find
\begin{align}
    \mathbb{E}\left[\rho^2\right] \mathbb{E}\left[z_i^2\right] &= \mathbb{E}\left[g_i^2\right] \qquad \text{ and } \qquad \mathbb{E}\left[\rho^4\right] \mathbb{E}\left[z_i^2z_j^2\right] = \mathbb{E}\left[g_i^2g_j^2\right]
\end{align}
We can thus calculate
\begin{align}
    \mathbb{E}\left[z_i^2\right] &= \frac{ \mathbb{E}\left[g_i^2\right]}{\mathbb{E}\left[\rho^2\right]} \qquad \text{ and } \qquad \mathbb{E}\left[z_i^2z_j^2\right] = \frac{\mathbb{E}\left[g_i^2g_j^2\right]}{\mathbb{E}\left[\rho^4\right]} \label{eq lots of expectations}
\end{align}
By standard results for Gaussian normal distributions, we have
\begin{align}
    \mathbb{E}\left[g_i^2\right] &= 1 \\
    \mathbb{E}\left[g_i^2g_j^2\right] &= \begin{cases}
        1 & i \neq j \\
        3 & i = j
    \end{cases}
\end{align}
Since $\mathbf{g} \in \mathbb{R}^{nr}$ and $\rho = \norm{\mathbf{g}}_2$, we know that
\begin{align}
    \rho^2 &= \norm{\mathbf{g}}_2^2 = \sum\limits_{i = 1}^{nr} g_i^2
\end{align}
Therefore, $\rho^2$ follows a chi-squared distribution with $nr$ degrees of freedom, or $\rho^2 \sim \chi^2_{nr}$.
By standard results for first and second moments of chi-squared distributions, we have
\begin{align}
    \mathbb{E}\left[ \rho^2 \right] &= nr \\
    \mathbb{E}\left[ \rho^4 \right] &= nr(nr+2)
\end{align}
Putting these results together, \eqref{eq lots of expectations} becomes
\begin{align}
    \mathbb{E}\left[z_i^2\right] &= \frac{1}{nr} \qquad \text{ and } \qquad \mathbb{E}\left[z_i^2z_j^2\right] = \begin{cases}
        \dfrac{1}{nr(nr+2)} & i \neq j \\
        \dfrac{3}{nr(nr+2)} & i = j
    \end{cases} \label{eq second and fourth moments of z}
\end{align}
Thus we have the second and fourth moments of the scalar components of $\mathbf{z}$, or, equivalently, the scalar compenents of $Z$.

We are now ready to compute the expectation of $\dfrac{\norm{X_0^TX_0}^2}{\norm{X_0}^4}$. Taking the expectation of \eqref{eq lots of z} gives
\begin{align}
    \mathbb{E}\left[ \frac{\norm{X_0^TX_0}^2}{\norm{X_0}^4} \right] &= \sum_{a=1}^{r} \sum_{b=1}^{r} \sum_{i=1}^{n} \sum_{j=1}^{n}\mathbb{E}\left[ z_{ia}\, z_{ib}\, z_{ja}\, z_{jb} \right] \label{eq big sum}
\end{align}
The expectation
\[
\mathbb{E}\!\left[ z_{ia}\, z_{ib}\, z_{ja}\, z_{jb} \right]
\]
depends only on whether the row indices $a,b$ and the column indices $i,j$ coincide, and hence takes one of four possible values corresponding to the cases where $i=j$ or $i\neq j$ and $a=b$ or $a\neq b$:
\begin{enumerate}[label=\Alph*.]
    \item $a=b$ and $i=j$. In this case, all four factors coincide, and by
    \eqref{eq second and fourth moments of z},
    \[
    \mathbb{E}\!\left[ z_{ia}\, z_{ib}\, z_{ja}\, z_{jb} \right]
    =
    \mathbb{E}\!\left[ z_{ia}^4 \right]
    =
    \frac{3}{nr(nr+2)}.
    \]

    \item $a=b$ and $i\neq j$. In this case,
    \[
    \mathbb{E}\!\left[ z_{ia}\, z_{ib}\, z_{ja}\, z_{jb} \right]
    =
    \mathbb{E}\!\left[ z_{ia}^2 z_{ja}^2 \right]
    =\frac{1}{nr(nr+2)},
    \]
    which is again given by \eqref{eq second and fourth moments of z}.

    \item $a\neq b$ and $i=j$. In this case,
    \[
    \mathbb{E}\!\left[ z_{ia}\, z_{ib}\, z_{ja}\, z_{jb} \right]
    =
    \mathbb{E}\!\left[ z_{ia}^2 z_{ib}^2 \right]
    = \frac{1}{nr(nr+2)}.
    \]

    \item $a\neq b$ and $i\neq j$. In this case, the four indices are distinct, and, by symmetry on the unit sphere,
    \[
    \mathbb{E}\!\left[ z_{ia}\, z_{ib}\, z_{ja}\, z_{jb} \right] = 0.
    \]
    See Appendix~\ref{app vanishing expectation} for an explanation of this case.
\end{enumerate}

Note that, for the sum in \eqref{eq big sum}, we will have $nr$ terms fall under case A ($r$ choices for $a$ and $n$ choices for $i$). We will have $nr(n-1)$ terms satisfy case B ($r$ choices for $a$ and $n(n-1)$ choices for $i$). For case C, we will have $nr(r-1)$ terms ($n$ choices for $i$ and $r(r-1)$ choices for $a$). Terms for case D contribute nothing to the expectation value and do not need to be counted. We finally calculate our sum:
\begin{align}
    \sum_{a=1}^{r} \sum_{b=1}^{r} \sum_{i=1}^{n} \sum_{j=1}^{n}\mathbb{E}\left[ z_{ia}\, z_{ib}\, z_{ja}\, z_{jb} \right] &= \frac{3nr}{nr(nr+2)} + \frac{nr(n-1)}{nr(nr+2)} + \frac{nr(r-1)}{nr(nr+2)} \\
    &= \frac{3}{(nr+2)} + \frac{(n-1)}{(nr+2)} + \frac{(r-1)}{(nr+2)} \\
    &= \frac{n+r+1}{nr+2}
\end{align}
So we arrive at
\begin{align}
    \mathbb{E}\left[ \frac{\norm{X_0^TX_0}^2}{\norm{X_0}^4} \right] &= \frac{n+r+1}{nr+2}
\end{align}
The expectation of the \textit{square} of the final relative error between the rank-$r$ and full-rank solutions is then
\begin{align}
    \mathbb{E}\left[\lim_{t \to \infty} \frac{\norm{Y(t)X(t)-U(t)}^2}{\norm{U(t)}^2}\right] &= \mathbb{E}\left[\frac{n\norm{X_0^TX_0}^2}{\norm{X_0}^4} - 1\right] \\
    &= n\mathbb{E}\left[ \frac{\norm{X_0^TX_0}^2}{\norm{X_0}^4} \right] - 1 \\
    &= n \cdot \frac{n+r+1}{nr+2} - 1 \\
    &= \frac{n^2+n-2}{nr+2} \label{eq expectation of relative error squared}
\end{align}
From our expectation for the square of the final relative error, we can use Jensen's inequality to find an upper bound on the expectation of the final relative error itself. Define
\begin{align}
    R^2 := \lim_{t \to \infty} \frac{\norm{Y(t)X(t)-U(t)}^2}{\norm{U(t)}^2}.
\end{align}
Note that \(R^2 \ge 0\) and $\mathbb{E}[R^2]$ is finite by \eqref{eq expectation of relative error squared}. Recall that the square root function \(\phi(x) = \sqrt{x}\) is concave on \([0,\infty)\). By Jensen's inequality~\cite{dekking2005modern}, we have
\begin{align}
    \mathbb{E}\left[ \sqrt{R^2} \right] \leq \sqrt{\mathbb{E}\left[ R^2 \right]}
\end{align}
Combining this with \eqref{eq expectation of relative error squared}, we arrive at
\begin{align}
    \mathbb{E}\left[\lim_{t \to \infty} \frac{\norm{Y(t)X(t)-U(t)}}{\norm{U(t)}}\right]
    \leq \sqrt{\frac{n^2+n-2}{nr+2}}
\end{align}


\onecolumn

\section{Vanishing Fourth-Order Moment in Appendix \ref{app approximation error}} \label{app vanishing expectation}

Consider the random vector $\mathbf{z} \in \mathbb{R}^{nr}$ which is uniformly distributed on the unit sphere in $\mathbb{R}^{nr}$. Let
\[
S^{nr-1} = \{ \mathbf{v} \in \mathbb{R}^{nr} : \|\mathbf{v}\|_2 = 1 \}
\]
denote the unit sphere, and let $|S^{nr-1}|$ denote its surface area. Throughout this section, integrals over $S^{nr-1}$ are taken with respect to its surface measure $dS$.

The uniform distribution on $S^{nr-1}$ is then given by the probability measure
\[
d\mathbb{P}(\mathbf{z}) = \frac{1}{|S^{nr-1}|}\, dS(\mathbf{z}),
\]
which is normalized since
\[
\frac{1}{|S^{nr-1}|} \int_{S^{nr-1}} dS(\mathbf{z}) = 1.
\]

Let $z_i, z_j, z_k, z_l$ denote any four distinct scalar entries of the random vector $\mathbf{z}$. We wish to show that
\begin{align}
    \mathbb{E} \left[ z_i z_j z_k z_l \right] &= 0 \label{eq vanishing 4th order moment}
\end{align}
We can calculate \eqref{eq vanishing 4th order moment} directly by integrating over $S^{nr-1}$. We have
\begin{align}
    \mathbb{E} \left[ z_i z_j z_k z_l \right] &= \int_{S^{nr-1}} z_i z_j z_k z_l \, d\mathbb{P}(\mathbf{z}) \\
    &= \frac{1}{|S^{nr-1}|} \int_{S^{nr-1}} z_i z_j z_k z_l \, dS(\mathbf{z}) \label{eq expectation integral}
\end{align}
Note that the unit sphere $S^{nr-1}$ is invariant under coordinate sign flips. In particular, let $T:\mathbb{R}^{nr}\to\mathbb{R}^{nr}$ denote the orthogonal transformation that flips the sign of the $i$th coordinate and leaves all others unchanged. Then $T(S^{nr-1})=S^{nr-1}$, and the surface measure $dS(\mathbf{z})$ is invariant under $T$. Under this transformation,
\[
z_i z_j z_k z_l \;\mapsto\; (-z_i) z_j z_k z_l = -\, z_i z_j z_k z_l.
\]
Therefore,
\[
\int_{S^{nr-1}} z_i z_j z_k z_l \, dS(\mathbf{z})
=
-\int_{S^{nr-1}} z_i z_j z_k z_l \, dS(\mathbf{z}),
\]
which implies that the integral vanishes.


\onecolumn

\section{Learning Dynamics for Low-rank Matrix Approximation} \label{app learning dynamics for SSE}

Let $W_0 \in \mathbb{R}^{n \times m}$ be a matrix of frozen pretraining weights. We wish to analyze the learning dynamics of the low-rank optimizer $BA \in \mathbb{R}^{n \times m}$ produced by applying LoRA to the finetuning problem
\begin{align}
    \underset{\substack{B \in \mathbb{R}^{n \times r} \\ A \in \mathbb{R}^{r \times m}}}{\min} \, \frac{1}{2}\norm{W_0-BA}^2 \label{eq sse objective}
\end{align}
where $r << \min(n,m)$. The partial gradients for our objective $h(B,A) = \frac{1}{2}\norm{W_0-BA}^2$ are given by
\begin{align}
    \nabla_A h(B,A) &= -B^T(W_0-BA) \label{eq partial g A sse} \\
    \nabla_B h(B,A) &= -(W_0-BA)A^T \label{eq partial g B sse}
\end{align}
\begin{assumption}[Bounded Domain] \label{ass boundedness of iterates for Tr^2 sse}
    We optimize (\ref{eq sse objective}) over the subspace $\mathcal{D}_{R'} \subseteq \Theta$, where $R' > 0$ is some finite number. In other words, we assume that there exists some $R' > 0$ such that the norms on both $B$ and $A$ remain bounded above by $R'$ during training.
\end{assumption}
In practical settings, Assumption 1.1 is automatically enforced by computational memory constraints (e.g., finite-precision arithmetic and fixed-parameter storage). The assumption above ensures that the objective gradient remains bounded during training:
\begin{remark}[Boundedness of Gradient for Squared Frobenius Loss] \label{rem boundedness of gradient SSE}
    The objective gradient $\nabla h(B,A)$ remains bounded above during training. That is, for all $B,A \in \mathcal{D}_{R'}$, we have
    \begin{align}
        \norm{\nabla h(B,A)}^2 &= \norm{\nabla_B h(B,A)}^2 + \norm{\nabla_A h(B,A)}^2 \\
        &= \norm{(W_0-BA)A^T}^2 + \norm{B^T(W_0-BA)}^2 \\
        &\leq \norm{W_0-BA}^2\norm{A}^2 + \norm{W_0-BA}^2\norm{B}^2 \\
        &= \norm{W_0-BA}^2 \Big( \norm{A}^2+\norm{B}^2 \Big) \\
        &\leq 2R'^{2}\norm{W_0-BA}^2 \\
        &\leq 2R'^2\left( \norm{W_0}+\norm{BA} \right)^2 \\
        &\leq 2R'^2\left( \norm{W_0}+\norm{B}\norm{A} \right)^2 \\
        &\leq 2R'^2\left( \norm{W_0}+R'^2 \right)^2
    \end{align}
    So $\norm{\nabla h(B,A)} \leq \sqrt{2}R'\left( \norm{W_0} + R'^2 \right)$ throughout training.
\end{remark}
We then have Lipschitz smoothness during training:
\begin{lemma}[Lipschitz Smoothness for Squared Frobenius Objective] \label{lem Lipschitz smoothness SSE}
    Our objective gradient $\nabla h$ is Lipschitz smooth in our training domain. Namely, there exists $L_{R'} > 0$ such that, for any $(B_1,A_1) , (B_2,A_2) \in \mathcal{D}_{R'}$, we have
    \begin{align}
        \norm{\nabla h(B_1,A_1) - \nabla h(B_2, A_2)} \leq L_{R'} \norm{ \Big( B_1, A_1 \Big) -  \Big( B_2, A_2 \Big) }
    \end{align}
    Proof of this lemma can be found in Appendix \ref{app Lipschitz smoothness SSE}.
\end{lemma}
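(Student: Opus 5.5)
We will prove Lemma~\ref{lem Lipschitz smoothness SSE} by the same template as the proof of Lemma~\ref{lem Lipschitz smoothness Tr^2}: bound the $B$-block and the $A$-block of the gradient difference separately, each by a constant times $\norm{B_1-B_2}+\norm{A_1-A_2}$, then combine. Throughout we use the Frobenius submultiplicativity $\norm{MN}\le\norm{M}\norm{N}$ and the bounds $\norm{B_i},\norm{A_i}\le R'$ valid on $\mathcal{D}_{R'}$.

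For the $B$-block, using \eqref{eq partial g B sse}, we will write
\begin{align}
\nabla_B h(B_1,A_1)-\nabla_B h(B_2,A_2) &= (W_0-B_2A_2)A_2^T-(W_0-B_1A_1)A_1^T \nonumber\\
&= (B_1A_1-B_2A_2)A_2^T+(W_0-B_1A_1)(A_2-A_1)^T. \nonumber
\end{align}
We then split $B_1A_1-B_2A_2=B_1(A_1-A_2)+(B_1-B_2)A_2$, so the first term is at most $R'^2\big(\norm{A_1-A_2}+\norm{B_1-B_2}\big)$. The second term is at most $(\norm{W_0}+R'^2)\norm{A_1-A_2}$. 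This gives
\[
\norm{\nabla_B h(B_1,A_1)-\nabla_B h(B_2,A_2)}\le(\norm{W_0}+2R'^2)\big(\norm{B_1-B_2}+\norm{A_1-A_2}\big).
\]
Applying Cauchy--Schwarz on $\mathbb{R}^2$ converts the right-hand side to $\sqrt{2}(\norm{W_0}+2R'^2)\norm{(B_1,A_1)-(B_2,A_2)}$.

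The $A$-block, from \eqref{eq partial g A sse}, is handled symmetrically. We decompose
\[
B_2^T(W_0-B_2A_2)-B_1^T(W_0-B_1A_1)=(B_2-B_1)^T(W_0-B_2A_2)+B_1^T(B_1A_1-B_2A_2)
\]
and obtain the same constant.

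Finally, with $L_{R'}:=2(\norm{W_0}+2R'^2)$, we sum the squared block bounds. The constant is chosen so that each squared block bound is $\tfrac12 L_{R'}^2\norm{(B_1,A_1)-(B_2,A_2)}^2$, and the sum yields the claim via the product norm of Remark~\ref{rem product norm}.

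There is no real obstacle here. The only care needed is choosing the add-and-subtract so that the residual factor $W_0-B_iA_i$ is always evaluated at a point of $\mathcal{D}_{R'}$, where it is bounded by $\norm{W_0}+R'^2$.
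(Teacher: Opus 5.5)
Your proposal is correct and follows essentially the same route as the paper: the same add-and-subtract on the $B$-block, the same split $B_1A_1-B_2A_2=B_1(A_1-A_2)+(B_1-B_2)A_2$, Cauchy--Schwarz on $\mathbb{R}^2$, and the same constant $L_{R'}=2(\norm{W_0}+2R'^2)$ with squared block bounds $\tfrac12 L_{R'}^2\norm{(B_1,A_1)-(B_2,A_2)}^2$. The only difference is that you write out the $A$-block decomposition explicitly, and it checks out; the paper just states that an identical calculation applies.
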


Having shown that the squared Frobenius objective in (\ref{eq sse objective}) satisfies Assumptions \ref{ass boundedness of iterates}, \ref{ass boundedness of gradient}, and \ref{ass Lipschitz continuity} from Appendix \ref{app LoRA GF Proof}, the ODEs describing the learning dynamics of $h(B,A)$ under LoRA are given by (see appendix \ref{app LoRA GF Proof})
\begin{align}
    \frac{dY(t)}{dt} &= \left( W_0-YX \right)X^T \label{eq LoRA GF ODE Y SSE} \\
    \frac{dX(t)}{dt} &= Y^T\left( W_0-YX \right) \label{eq LoRA GF ODE X SSE}
\end{align}
for any $t \in [0,T]$, where $T > 0$ is arbitrary.

Unlike the trace-squared objective in Appendix~\ref{app learning dynamics for trace squared loss}, we are not aware of a closed-form solution to the ODE above under conventional element-wise initialization, i.e., when all $rm$ entries of $X_0$ are drawn independently from a normal distribution. We will show, however, that the solution to (\ref{eq LoRA GF ODE Y SSE}--\ref{eq LoRA GF ODE X SSE}) converges to the theoretical rank-$r$ minimizer~\cite{EckartYoung1936} of \eqref{eq sse objective} when utilizing the spectral initialization schema used in previous work~\cite{xu2025understanding}.

Recall that $W_0$ is a known matrix of prefrozen weights. We begin by assuming that the pretrained weight matrix $W_0$ has rank strictly greater than $r$:
\begin{assumption}[Rank of $W_0$] \label{ass rank of W_0}
Let $k = \rank(W_0)$. We assume that
\begin{align}
    r < k \le \min(n,m).
\end{align}
\end{assumption}
Note that violation of Assumption~\ref{ass rank of W_0} renders \eqref{eq sse objective} outside the scope of LoRA. 

Denote the singular value decomposition (SVD) of $W_0$ by
\begin{align}
    W_0 = U \Sigma_0 V^T, \label{eq SVD of W_0}
\end{align}
where $\Sigma_0 \in \mathbb{R}^{n \times m}$ is the rectangular diagonal matrix whose main diagonal entries are the singular values of $W_0$ (in non-increasing order). Here, $U \in \mathbb{R}^{n \times n}$ and $V \in \mathbb{R}^{m \times m}$ are the left and right singular vector matrices of $W_0$, respectively. Recall that $U$ and $V$ are orthogonal.

For spectral initialization of $X_0$ and $Y_0$, define the rotated variables $\tilde{Y} \in \mathbb{R}^{n \times r}$ and $\tilde{X} \in \mathbb{R}^{r \times m}$ by
\begin{align}
    \tilde{Y} &= U^T Y, \\
    \tilde{X} &= X V,
\end{align}
so that
\begin{align}
    Y &= U\tilde{Y}, \label{eq Y sse} \\
    X &= \tilde{X}V^T. \label{eq X sse}
\end{align}
To ensure $Y_0X_0=\mathbf{0}_{n \times m}$, we initialize $\tilde{Y}_0=\mathbf{0}_{n \times r}$ and draw the $r$ diagonal entries of $\tilde{X}_0$ from a centered Gaussian distribution:
\begin{align}
    \tilde{x}_{ii}(0) \sim \mathcal{N}(0,\sigma^2), \qquad i=1,\dots,r,
\end{align}
where $\tilde{x}_{ii}$ denotes the $(i,i)$ entry of $\tilde{X}$. The off-diagonal elements of $\tilde{X}$ are initialized at zero.

With the transformations in \eqref{eq SVD of W_0} and (\ref{eq Y sse}--\ref{eq X sse}) in mind, our ODEs in (\ref{eq LoRA GF ODE Y SSE}--\ref{eq LoRA GF ODE X SSE}) become
\begin{align}
    U\frac{d\tilde{Y}(t)}{dt} &= \left( U \Sigma_0 V^T-U\tilde{Y}\tilde{X}V^T \right)V\tilde{X}^T \label{eq almost to Y spectral init ODE SSE} \\
    \frac{d\tilde{X}(t)}{dt}V^T &= \tilde{Y}^TU^T\left( U \Sigma_0 V^T-U\tilde{Y}\tilde{X}V^T \right) \label{eq almost to X spectral init ODE SSE}
\end{align}
Note that $U^TU$ and $V^TV$ both give the identity matrix since $U$ and $V$ are orthogonal. Left-multiplying $U^T$ on both sides of \eqref{eq almost to Y spectral init ODE SSE} and right-multiplying $V$ on both sides of \eqref{eq almost to X spectral init ODE SSE}, our ODEs become
\begin{align}
    \frac{d\tilde{Y}(t)}{dt} &= \left(\Sigma_0 - \tilde{Y}\tilde{X}\right)\tilde{X}^T \label{eq tilde Y SSE} \\
    \frac{d\tilde{X}(t)}{dt} &=\tilde{Y}^T \left(\Sigma_0 - \tilde{Y}\tilde{X}\right) \label{eq tilde X SSE}
\end{align}
As a consequence of our initialization scheme for $\tilde{X}_0$ and $\tilde{Y}_0$, all matrices in our transformed ODEs above remain diagonal for all $t \in [0,T]$, implying that the scalar dynamics for the $i$th diagonal element of $\tilde{Y}$ and $\tilde{X}$ decouple to
\begin{align}
    \frac{d\tilde{y}_{ii}}{dt} &= \left( s_{0,i}-\tilde{y}_{ii}\tilde{x}_{ii} \right)\tilde{x}_{ii} \label{eq decoupled scalar ODE y ii SSE the first one} \\
    \frac{d\tilde{x}_{ii}}{dt} &= \left( s_{0,i}-\tilde{y}_{ii}\tilde{x}_{ii} \right)\tilde{y}_{ii} \label{eq decoupled scalar ODE x ii SSE the first one} \\
    \tilde{y}_{ii}(0) &= 0 \label{eq y_ii initial SSE} \\
    \tilde{x}_{ii}(0) &\sim \mathcal{N}(0,\sigma^2) \label{eq x_ii initial SSE}
\end{align}
where $s_{0,i}$ denotes the $i$th largest singular value of $W_0$. Since $\tilde{Y}$ and $\tilde{X}$ each contain $r$ diagonal elements, (\ref{eq decoupled scalar ODE y ii SSE}--\ref{eq x_ii initial SSE}) give the learning dynamics for the first $r$ diagonals of $\tilde{Y}(t)\tilde{X}(t)$. Note that the dimensions of $\tilde{Y}$ and $\tilde{X}$ ensure that the remaining diagonal values of $\tilde{Y}(t)\tilde{X}(t)$ remain zero for all $t$.

\begin{assumption}[Nonzero Initialization for $\tilde{x}_{ii}$]
    We assume $\tilde{x}_{ii}(0) \neq 0$ for all $i \in \{1,\ldots,r\}$, which holds almost surely under the Gaussian initialization in \eqref{eq x_ii initial SSE}. \label{ass nonzero spectral initialization}
\end{assumption}
Note that violation of Assumption \ref{ass nonzero spectral initialization} above initializes $\Big(\tilde{x}_{ii}(t), \tilde{y}_{ii}(t)\Big)$ at the saddle point $(0,0)$ of the dynamical system in (\ref{eq decoupled scalar ODE y ii SSE}--\ref{eq x_ii initial SSE}). In this case, the $i$th diagonal of $\tilde{Y}(t)\tilde{X}(t)$ will fail to converge to $s_{i,0}$, and our LoRA gradient flow will not converge to the theoretical rank-$r$ minimizer of \eqref{eq sse objective}.

Since (\ref{eq decoupled scalar ODE y ii SSE}--\ref{eq x_ii initial SSE}) hold for all $i \in \{ 1, \ldots, r \}$, it suffices to solve the system for a single generic index $i$. For convenience, drop the indices in our notation and denote
\begin{align}
    y &:= \tilde{y}_{ii} \\
    x &:= \tilde{x}_{ii} \\
    s_0 &:= s_{0,i} 
\end{align}
We then wish to solve the IVP
\begin{align}
    \frac{d{y}}{dt} &= \left( s_0-{y}{x} \right){x} \label{eq decoupled scalar ODE y ii SSE} \\
    \frac{d{x}}{dt} &= \left( s_0-{y}{x} \right){y} \label{eq decoupled scalar ODE x ii SSE} \\
    {y}(0) &= 0 \\
    {x}(0) &\sim \mathcal{N}(0,\sigma^2) \label{eq scalar x init sse}
\end{align}
Conveniently, we already implicitly have the solution to the IVP above in Appendix~\ref{app learning dynamics for trace squared loss}. There we show that the solution to the matrix-valued IVP:
\begin{align}
    \frac{dY(t)}{dt} &= \Tr(W_0-YX)X^T \label{eq matrix valued version Y SSE} \\
    \frac{dX(t)}{dt} &= \Tr(W_0-YX)Y^T \\
    Y(0) &= \mathbf{0}_{n \times r} \\
    X(0) &= X_0 \label{eq matrix valued init X SSE}
\end{align}
is given by
\begin{align}
    Y(t) &= q(t)\Tr\!\left(W_0\right)X_0^T \\
    X(t) &= p(t)X_0,
\end{align}
where $p(t)$ and $q(t)$ are given in Appendix~\ref{app learning dynamics for trace squared loss}.

Recall that the above matrix-valued IVP is solved in Appendix~\ref{app learning dynamics for trace squared loss} for any $X_0 \in \mathbb{R}^{m \times r}$ such that $\norm{X_0} \neq 0$, as well as any $W_0$ such that $\Tr(W_0) \neq 0$. While in practice LoRA uses $r << \min(n,m)$ for parameter efficiency, we make no assumption in Appendix~\ref{app learning dynamics for trace squared loss} on the dimensions of $Y$, $X$, and $W_0$ when finding the solution of (\ref{eq matrix valued version Y SSE}--\ref{eq matrix valued init X SSE}). Observe that when $n=m=r=1$, the trace operator is trivial, and $\Tr(W_0 - YX) = w_0 - yx$. Importantly, Appendix~\ref{app learning dynamics for trace squared loss} then presents the resulting \textit{scalar} IVP
\begin{align}
    \frac{dy(t)}{dt} &= (w_0-yx)x \\
    \frac{dx(t)}{dt} &= (w_0-yx)y \\
    y(0) &= 0 \\
    x(0) &= x_0
\end{align}
along with the solution
\begin{align}
    y(t) &= q_{s}(t)w_0x_0 \\
    x(t) &= p_{s}(t)x_0
\end{align}
for arbitrary nonzero $w_0$ and $x_0$. Since the first $r$ singular values of $W_0$ are implicitly nonzero based on Assumption~\ref{ass rank of W_0}, we can set $w_0 = s_0$ to arrive at the solution of (\ref{eq decoupled scalar ODE y ii SSE}--\ref{eq scalar x init sse}):
\begin{align}
    y(t) &= q_{s}(t)s_0x_0 \\
    x(t) &= p_{s}(t)x_0
\end{align}
Note that, for the IVP in (\ref{eq decoupled scalar ODE y ii SSE}--\ref{eq scalar x init sse}) under Assumption~\ref{ass nonzero spectral initialization}, $x_0$ is a nonzero random variable drawn from $\mathcal{N}(0,\sigma^2)$. The scalar equivalents of $q(t)$ and $p(t)$, which we denote $q_s(t)$ and $p_s(t)$, are given by
\begin{align}
    p_s(t) &:= \frac{1}{\sqrt{2x_0^2\sinh\!\left(\sqrt{\xi_1}(t+\xi_2)\right)+4s_0}}
\Bigg[
\frac{x_0^2}{\sqrt{s_0}}
\sinh\!\left(\frac{\sqrt{\xi_1}}{2}t\right)
+
\frac{\sqrt{x_0^4+4s_0^2}}{\sqrt{s_0}}
\cosh\!\left(\frac{\sqrt{\xi_1}}{2}t\right)\Bigg] \\
q_s(t) &:= \frac{1}{\sqrt{2x_0^2\sinh\!\left(\sqrt{\xi_1}(t+\xi_2)\right)+4s_0}}\cdot
\frac{2}
{\sqrt{s_0}}
\,
\sinh\!\left(\frac{\sqrt{\xi_1}}{2}t\right)
\end{align}
where $\xi_1$ and $\xi_2$ are equal to
\begin{align}
    \xi_1 &= {x_0}^4 +4s_0^2 \\
    \xi_2 &= \frac{1}{\sqrt{{x_0}^4+4s_0^2}}\arsinh\left(\frac{{x_0}^2}{2s_0}\right)
\end{align}
Readers of Appendix~\ref{app learning dynamics for trace squared loss} should note we drop any absolute value bars above on $s_0$ since $W_0$'s singular values are nonnegative. We thus have that the $i$th diagonal element of $\tilde{Y}(t)\tilde{X}(t)$ at time $t \in [0,T]$ is given by
\begin{align}
    y(t)x(t) &= p_s(t)q_s(t)s_0x_0^2 \label{eq y(t)x(t) sse}
\end{align}
Since our closed-form expressions for $y(t)$ and $x(t)$ solve (\ref{eq decoupled scalar ODE y ii SSE the first one}--\ref{eq x_ii initial SSE}) on $[0,T]$ for any $T>0$, they define a single solution to the same ODE system for all $t \in [0,\infty)$. Thus, we can study the convergence of $y(t)x(t)$ by taking their limit as $t \to \infty$. Recall that, as $t \to \infty$,
\begin{align}
\sinh\!\left(\frac{\sqrt{\xi_1}}{2}t\right) &\sim \frac{1}{2} e^{\frac{\sqrt{\xi_1}}{2}t}, \\
\cosh\!\left(\frac{\sqrt{\xi_1}}{2}t\right) &\sim \frac{1}{2} e^{\frac{\sqrt{\xi_1}}{2}t},
\end{align}
and
\begin{align}
\sinh\!\left(\sqrt{\xi_1}(t+\xi_2)\right) &\sim \frac{1}{2} e^{\sqrt{\xi_1}(t+\xi_2)}
\end{align}
This gives us
\begin{align}
     \lim_{t \to \infty}\frac{\sinh\!\left(\frac{\sqrt{\xi_1}}{2}t\right)}{\sqrt{2x_0^2\sinh\!\left(\sqrt{\xi_1}(t+\xi_2)\right)+4s_0}} &=  \lim_{t \to \infty}\frac{\cosh\!\left(\frac{\sqrt{\xi_1}}{2}t\right)}{\sqrt{2x_0^2\sinh\!\left(\sqrt{\xi_1}(t+\xi_2)\right)+4s_0}} \\
    &=  \lim_{t \to \infty}\frac{\frac{1}{2}e^{\frac{\sqrt{\xi_1}}{2}t}}{\sqrt{2x_0^2\cdot \frac{e^{\sqrt{\xi_1}(t+\xi_2)}}{2}+4s_0}} \\
    &= \lim_{t \to \infty}\frac{e^{\frac{\sqrt{\xi_1}}{2}t}}{2\sqrt{x_0^2\cdot {e^{\sqrt{\xi_1}(t+\xi_2)}}+4s_0}} \cdot \frac{e^{-\frac{\sqrt{\xi_1}}{2}t}}{e^{-\frac{\sqrt{\xi_1}}{2}t}} \\
    &= \lim_{t \to \infty}\frac{1}{2\sqrt{x_0^2\cdot {e^{\xi_2\sqrt{\xi_1}}}+4s_0e^{-\sqrt{\xi_1}t}}} \\
    &= \frac{1}{2|x_0|e^{\frac{\xi_2\sqrt{\xi_1}}{2}}} \label{eq exponential SSE}
\end{align}
Note that
\begin{align}
    e^{\xi_2\sqrt{\xi_1}} &= e^{\arsinh\left( \frac{x_0^2}{2s_0}\right)} \\
    &= e^{\ln \left(  \frac{x_0^2}{2s_0}+\sqrt{1+\left( \frac{x_0^2}{2s_0}\right)^2} \right)} \\
    &= \frac{x_0^2}{2s_0}+\sqrt{1+\left( \frac{x_0^2}{2s_0}\right)^2} \\
    &= \frac{x_0^2+\sqrt{x_0^4+4s_0^2}}{2s_0}
\end{align}
So \eqref{eq exponential SSE} becomes
\begin{align}
    \frac{1}{2|x_0|} \cdot \sqrt{\frac{2s_0}{x_0^2+\sqrt{x_0^4+4s_0^2}}} &= \frac{\sqrt{s_0}}{\sqrt{2}|x_0|\sqrt{x_0^2+\sqrt{x_0^4+4s_0^2}}} 
\end{align}
We have
\begin{align}
    \lim_{t \to \infty}\frac{\sinh\!\left(\frac{\sqrt{\xi_1}}{2}t\right)}{\sqrt{2x_0^2\sinh\!\left(\sqrt{\xi_1}(t+\xi_2)\right)+4s_0}} &=  \lim_{t \to \infty}\frac{\cosh\!\left(\frac{\sqrt{\xi_1}}{2}t\right)}{\sqrt{2x_0^2\sinh\!\left(\sqrt{\xi_1}(t+\xi_2)\right)+4s_0}} \\
    &= \frac{\sqrt{s_0}}{\sqrt{2}|x_0|\sqrt{x_0^2+\sqrt{x_0^4+4s_0^2}}}
\end{align}
We can then quickly calculate
\begin{align}
    \lim_{t \to \infty} p_s(t) &= \lim_{t \to \infty}\frac{1}{\sqrt{2x_0^2\sinh\!\left(\sqrt{\xi_1}(t+\xi_2)\right)+4s_0}}
\Bigg[
\frac{x_0^2}{\sqrt{s_0}}
\sinh\!\left(\frac{\sqrt{\xi_1}}{2}t\right)
+
\frac{\sqrt{x_0^4+4s_0^2}}{\sqrt{s_0}}
\cosh\!\left(\frac{\sqrt{\xi_1}}{2}t\right)\Bigg] \\
&= \lim_{t \to \infty} \frac{x_0^2}{\sqrt{s_0}}\cdot\frac{\sinh\!\left(\frac{\sqrt{\xi_1}}{2}t\right)}{\sqrt{2x_0^2\sinh\!\left(\sqrt{\xi_1}(t+\xi_2)\right)+4s_0}} + \frac{\sqrt{x_0^4+4s_0^2}}{\sqrt{s_0}}\cdot\frac{\cosh\!\left(\frac{\sqrt{\xi_1}}{2}t\right)}{\sqrt{2x_0^2\sinh\!\left(\sqrt{\xi_1}(t+\xi_2)\right)+4s_0}} \\
&= \frac{x_0^2}{\sqrt{s_0}}\cdot\frac{\sqrt{s_0}}{\sqrt{2}|x_0|\sqrt{x_0^2+\sqrt{x_0^4+4s_0^2}}} + \frac{\sqrt{x_0^4+4s_0^2}}{\sqrt{s_0}}\cdot\frac{\sqrt{s_0}}{\sqrt{2}|x_0|\sqrt{x_0^2+\sqrt{x_0^4+4s_0^2}}} \\
&= \frac{x_0^2+\sqrt{x_0^4+4s_0^2}}{\sqrt{2}|x_0|\sqrt{x_0^2+\sqrt{x_0^4+4s_0^2}}} \\
&= \frac{\sqrt{x_0^2+\sqrt{x_0^4+4s_0^2}}}{\sqrt{2}|x_0|}
\end{align}
We also have
\begin{align}
    \lim_{t \to \infty} q_s(t) &= \lim_{t \to \infty}\frac{2}
{\sqrt{s_0}} \cdot  \frac{\sinh\!\left(\frac{\sqrt{\xi_1}}{2}t\right)}{\sqrt{2x_0^2\sinh\!\left(\sqrt{\xi_1}(t+\xi_2)\right)+4s_0}} \\
&= \frac{2}
{\sqrt{s_0}} \cdot \frac{\sqrt{s_0}}{\sqrt{2}|x_0|\sqrt{x_0^2+\sqrt{x_0^4+4s_0^2}}} \\
&=\frac{2}{\sqrt{2}|x_0|\sqrt{x_0^2+\sqrt{x_0^4+4s_0^2}}}
\end{align}
We finally have all we need to arrive at the convergence of the $i$th diagonal of $\tilde{Y}(t)\tilde{X}(t)$. Taking the limit of \eqref{eq y(t)x(t) sse} as $t \to \infty$, we find
\begin{align}
    \lim_{t \to \infty} y(t)x(t) &= \lim_{t \to \infty} p_s(t)q_s(t)s_0x_0^2 \\
    &= s_0x_0^2 \cdot \frac{\sqrt{x_0^2+\sqrt{x_0^4+4s_0^2}}}{\sqrt{2}|x_0|} \cdot \frac{2}{\sqrt{2}|x_0|\sqrt{x_0^2+\sqrt{x_0^4+4s_0^2}}} \\
    &= s_0
\end{align}
Return to indexed notation. We thus find that, for $i \in \{ 1, \ldots , r \}$, the convergence of the $i$th diagonal of $\tilde{Y}\tilde{X}$ is given by
\begin{align}
    \lim_{t \to \infty} \tilde{y}_{ii}(t)\tilde{x}_{ii}(t) &= s_{0,i} = \left( \Sigma_0 \right)_{ii} \label{eq convergence to singular values SSE}
\end{align}
where $s_{0,i}$ denotes the $i$th largest singular value of $W_0$. 

Consequently, under the spectral initialization scheme in (\ref{eq y_ii initial SSE}-\ref{eq x_ii initial SSE}),
the solution to the ODE system in (\ref{eq tilde Y SSE}--\ref{eq tilde X SSE})
satisfies
\begin{align}
    \lim_{t \to \infty} \tilde{Y}(t)\tilde{X}(t)
    &=
    \begin{bmatrix}
        \Sigma_{0,r} & \mathbf{0} \\
        \mathbf{0} & \mathbf{0}
    \end{bmatrix}, \label{eq converged tilde matrix}
\end{align}
where $\Sigma_{0,r} \in \mathbb{R}^{r \times r}$ denotes the diagonal matrix
whose entries are the top $r$ singular values of $W_0$ in non-increasing order. Recover the converged optimizer $\lim\limits_{t \to \infty}Y(t)X(t)$ for \eqref{eq sse objective} by recalling the transformations in (\ref{eq Y sse}--\ref{eq X sse}). Left multiplying \eqref{eq converged tilde matrix} by $U$ and right multiplying by $V^T$, we arrive at
\begin{align}
    \lim_{t \to \infty} U\tilde{Y}(t)\tilde{X}(t)V^T
    &=U
    \begin{bmatrix}
        \Sigma_{0,r} & \mathbf{0} \\
        \mathbf{0} & \mathbf{0}
    \end{bmatrix}V^T
\end{align}
or simply
\begin{align}
Y^*X^* :=
    \lim_{t \to \infty} Y(t)X(t)
    &=U
    \begin{bmatrix}
        \Sigma_{0,r} & \mathbf{0} \\
        \mathbf{0} & \mathbf{0}
    \end{bmatrix}V^T
\end{align}
We thus obtain the final rank-$r$ solution produced by the LoRA gradient flow for the problem in \eqref{eq sse objective}. We can calculate the final loss as
\begin{align}
    \frac{1}{2}\norm{W_0-Y^*X^*}^2 &= \frac{1}{2}\norm{U\Sigma_0V^T-U\begin{bmatrix}
        \Sigma_{0,r} & \mathbf{0} \\
        \mathbf{0} & \mathbf{0}
    \end{bmatrix}V^T}^2 \\
    &= \frac{1}{2}\norm{U\left(\Sigma_0 - \begin{bmatrix}
        \Sigma_{0,r} & \mathbf{0} \\
        \mathbf{0} & \mathbf{0}
    \end{bmatrix} \right)V^T}^2 \\
    &= \frac{1}{2}\norm{\Sigma_0 - \begin{bmatrix}
        \Sigma_{0,r} & \mathbf{0} \\
        \mathbf{0} & \mathbf{0}
    \end{bmatrix}}^2 \\
    &= \frac{1}{2}\sum\limits_{i = r+1}^{\min(n,m)} s_{0,i}^2 \label{eq final loss sse}
\end{align}
Note also that our LoRA rank-$r$ solution $Y^*X^*$ is equivalent to
\begin{align}
    Y^*X^* &= U_r \Sigma_{0,r} V_r^T \label{eq final optimizer sse}
\end{align}
where $U_r$ and $V_r$ (truncations of $U$ and $V$) are the matrices of left and right singular vectors for the top $r$ singular values of $W_0$. Our final loss and optimizer in
(\ref{eq final loss sse}--\ref{eq final optimizer sse})
coincide with the theoretical minimum loss and optimizer for
\eqref{eq sse objective} characterized by the
Eckart--Young--Mirsky theorem~\cite{EckartYoung1936}. We thus find via our gradient flow analysis that LoRA converges to the optimal rank-$r$ solution of \eqref{eq sse objective} under spectral initialization.


\onecolumn

\section{Proof of Lemma \ref{lem Lipschitz smoothness SSE}}\label{app Lipschitz smoothness SSE}

\begin{lemma}[Lipschitz Smoothness Squared Frobenius Objective]
    Define $h: \mathbb{R}^{n \times r} \times \mathbb{R}^{r \times m} \to \mathbb{R}$ via
    \begin{align}
        h(B,A) &:= \frac{1}{2}\norm{W_0-BA}^2
    \end{align}
    for constant $W_0 \in \mathbb{R}^{n \times m}$. Then the objective gradient $\nabla h$ is Lipschitz smooth in our training domain. Namely, there exists $L_{R'} > 0$ such that, for any $(B_1,A_1) , (B_2,A_2) \in \mathcal{D}_{R'}$, we have
    \begin{align}
        \norm{\nabla h(B_1,A_1) - \nabla h(B_2, A_2)} \leq L_{R'} \norm{ \Big( B_1, A_1 \Big) -  \Big( B_2, A_2 \Big) }
    \end{align}
    \textbf{Proof:} \\
    
    Let $(B_1,A_1) , (B_2,A_2) \in \mathcal{D}_{R'}$ be given. Refer to the partial gradients for $h(B,A)$ in (\ref{eq partial g A sse}--\ref{eq partial g B sse}) of Appendix \ref{app learning dynamics for SSE}. We have
    \begin{align}
        \norm{\nabla h(B_1,A_1) - \nabla h(B_2,A_2)} &= \norm{\Bigg( \nabla_B h(B_1,A_1) - \nabla_B h(B_2, A_2)  \, , \, \nabla_A h(B_1,A_1) - \nabla_A h(B_2, A_2) \Bigg)}
    \end{align}
    The partial gradients with respect to $B$ simplify to
    \begin{align}
       \nabla_B h(B_1,A_1) - \nabla_B h(B_2, A_2) &= -(W_0-B_1A_1)A_1^T + (W_0-B_2A_2)A_2^T \\
       &= (W_0-B_2A_2)A_2^T - (W_0-B_1A_1)A_2^T + (W_0-B_1A_1)A_2^T - (W_0-B_1A_1)A_1^T \\
       &= \Big( (W_0-B_2A_2) - (W_0-B_1A_1) \Big) A_2^T + \left(W_0-B_1A_1\right)\left(A_2^T-A_1^T\right) \\
       &= \left( B_1A_1-B_2A_2 \right)A_2^T + \left( W_0 - B_1A_1 \right)\left( A_2^T - A_1^T \right)
    \end{align}
    Taking the norm, we find
    \begin{align}
        \norm{\nabla_B h(B_1,A_1) - \nabla_B h(B_2, A_2)} &= \norm{\left( B_1A_1-B_2A_2 \right)A_2^T + \left( W_0 - B_1A_1 \right)\left( A_2^T - A_1^T \right)} \\
        &\leq \norm{\left( B_1A_1-B_2A_2 \right)A_2^T}+\norm{\left( W_0 - B_1A_1 \right)\left( A_2^T - A_1^T \right)} \\
        &\leq \norm{\left( B_1A_1-B_2A_2 \right)}\norm{A_2^T}+\norm{\left( W_0 - B_1A_1 \right)}\norm{\left( A_2^T - A_1^T \right)} \\
        &\leq \norm{\left( B_1A_1-B_2A_2 \right)}R'+\Big(\norm{ W_0 } + \norm{ B_1A_1 }\Big)\norm{\left( A_2^T - A_1^T \right)} \\
        &\leq \norm{\left( B_1A_1-B_2A_2 \right)}R'+\Big(\norm{ W_0 } + \norm{ B_1}\norm{A_1 }\Big)\norm{\left( A_2^T - A_1^T \right)} \\
        &\leq \norm{\left( B_1A_1-B_2A_2 \right)}R'+\Big(\norm{ W_0 } + R'^2\Big)\norm{\left( A_2^T - A_1^T \right)} \\
        &\leq \norm{\left( B_1A_1-B_1A_2+B_1A_2-B_2A_2 \right)}R'+\Big(\norm{ W_0 } + R'^2\Big)\norm{\left( A_2^T - A_1^T \right)} \\
        &\leq \norm{ B_1(A_1-A_2)+(B_1-B_2)A_2 }R'+\Big(\norm{ W_0 } + R'^2\Big)\norm{\left( A_2^T - A_1^T \right)} \\
        &\leq  \Big(\norm{ B_1(A_1-A_2)}+\norm{(B_1-B_2)A_2 }\Big)R'+\Big(\norm{ W_0 } + R'^2\Big)\norm{\left( A_2^T - A_1^T \right)} \\
        &\leq \Big(\norm{ B_1}\norm{(A_1-A_2)}+\norm{(B_1-B_2)}\norm{A_2 }\Big)R'+\Big(\norm{ W_0 } + R'^2\Big)\norm{\left( A_2^T - A_1^T \right)} \\
        &\leq \Big(R'\norm{(A_1-A_2)}+\norm{(B_1-B_2)}R'\Big)R'+\Big(\norm{ W_0 } + R'^2\Big)\norm{\left( A_2^T - A_1^T \right)} \\
        &= \Big(\norm{(A_1-A_2)}+\norm{(B_1-B_2)}\Big)R'^2+\Big(\norm{ W_0 } + R'^2\Big)\norm{\left( A_1-A_2 \right)} \\
        &= \norm{B_1-B_2}R'^2 + \Big(\norm{ W_0 } + 2R'^2\Big)\norm{\left( A_1-A_2 \right)} \\
        &\leq \norm{B_1-B_2}\Big(\norm{ W_0 } + 2R'^2\Big) + \Big(\norm{ W_0 } + 2R'^2\Big)\norm{ A_1-A_2 } \\
        &= \Big(\norm{ W_0 } + 2R'^2\Big)\Big( \norm{B_1-B_2} + \norm{A_1-A_2} \Big) \\
        &\leq \sqrt{2}\Big(\norm{ W_0 } + 2R'^2\Big)\Big( \norm{B_1-B_2}^2 + \norm{A_1-A_2}^2 \Big)^{1/2}
    \end{align}
    where the last line holds by Cauchy-Schwartz on $\mathbb{R}^2$. So we have
    \begin{align}
        \norm{\nabla_B h(B_1,A_1) - \nabla_B h(B_2, A_2)} &\leq \sqrt{2}\Big(\norm{ W_0 } + 2R'^2\Big)\Big( \norm{B_1-B_2}^2 + \norm{A_1-A_2}^2 \Big)^{1/2}
    \end{align}
    Denote $\dfrac{1}{\sqrt{2}}L_{R'} := \sqrt{2}\Big(\norm{ W_0 } + 2R'^2\Big)$. Squaring both sides of our inequality, we get
    \begin{align}
        \norm{\nabla_B h(B_1,A_1) - \nabla_B h(B_2, A_2)}^2 &\leq \frac{1}{2}L_{R'}^2\Big( \norm{B_1-B_2}^2 + \norm{A_1-A_2}^2 \Big) \label{eq nabla  h_B}
    \end{align}
    An identical calculation for $\norm{\nabla_A h(B_1,A_1) - \nabla_A h(B_2, A_2)}$ gives
    \begin{align}
        \norm{\nabla_A h(B_1,A_1) - \nabla_A h(B_2, A_2)}^2 &\leq \frac{1}{2}L_{R'}^2\Big( \norm{B_1-B_2}^2 + \norm{A_1-A_2}^2 \Big) \label{eq nabla h_A}
    \end{align}
    Add together \eqref{eq nabla  h_B} and \eqref{eq nabla h_A} to find
    \begin{align}
        \norm{\nabla_B h(B_1,A_1) - \nabla_B h(B_2, A_2)}^2+\norm{\nabla_A h(B_1,A_1) - \nabla_A h(B_2, A_2)}^2 &\leq L_{R'}^2\Big( \norm{B_1-B_2}^2 + \norm{A_1-A_2}^2 \Big)
    \end{align}
    or, equivalently,
    \begin{align}
        \norm{\nabla h(B_1,A_1) - \nabla h(B_2,A_2)}^2 &\leq L_{R'}^2\norm{\left(B_1,A_1\right)-\left(B_2,A_2\right)}^2
    \end{align}
    Taking the square root, we finally arrive at
    \begin{align}
        \norm{\nabla h(B_1,A_1) - \nabla h(B_2,A_2)} &\leq L_{R'}\norm{\left(B_1,A_1\right)-\left(B_2,A_2\right)}
    \end{align}
    Thus, $h$ is $L_{R'}$ smooth on $\mathcal{D}_{R'}$.
\end{lemma}


\end{document}